\theoremstyle{plain}
\newtheorem{proposition}[theorem]{Proposition}
\theoremstyle{definition}
\theoremstyle{remark}
\icmltitlerunning{Bayesian Design Principles for Offline-to-Online RL}
\definecolor{myred}{RGB}{216, 27, 96}
\definecolor{myblue}{RGB}{30, 136, 229}
\definecolor{myorange}{RGB}{255, 193, 7}
\definecolor{mygreen}{RGB}{0,177,64}
\pgfplotsset{compat=1.17}
\def\##1\#{\begin{align}#1\end{align}}
\def\$#1\${\begin{align*}#1\end{align*}}
\def\##1\#{\begin{align}#1\end{align}}
\def\$#1\${\begin{align*}#1\end{align*}}
\def\oper{\mathop{\text{op}}}
\newcommand{\la}{\langle}
\newcommand{\ra}{\rangle}
\tikzset{
    underbrace style/.style={
        decorate,
        decoration={
            brace,
            mirror,
            amplitude=0.4em
        }
    }
}
\begin{document}

\twocolumn[
\icmltitle{Bayesian Design Principles for Offline-to-Online Reinforcement Learning }

% It is OKAY to include author information, even for blind
% submissions: the style file will automatically remove it for you
% unless you've provided the [accepted] option to the icml2024
% package.

% List of affiliations: The first argument should be a (short)
% identifier you will use later to specify author affiliations
% Academic affiliations should list Department, University, City, Region, Country
% Industry affiliations should list Company, City, Region, Country

% You can specify symbols, otherwise they are numbered in order.
% Ideally, you should not use this facility. Affiliations will be numbered
% in order of appearance and this is the preferred way.
\icmlsetsymbol{equal}{*}

\begin{icmlauthorlist}
\icmlauthor{Hao Hu}{equal,tsinghua1}
\icmlauthor{Yiqin Yang}{equal,tsinghua3}
\icmlauthor{Jianing Ye}{tsinghua1}
\icmlauthor{Chengjie Wu}{tsinghua1}
\icmlauthor{Ziqing Mai}{tsinghua1}
\icmlauthor{Yujing Hu}{netease}
\icmlauthor{Tangjie Lv}{netease}
\icmlauthor{Changjie Fan}{netease}
\icmlauthor{Qianchuan Zhao}{tsinghua2}
\icmlauthor{Chongjie Zhang}{washu}
%\icmlauthor{}{sch}
%\icmlauthor{}{sch}
\end{icmlauthorlist}

\icmlaffiliation{tsinghua1}{Institute for Interdisciplinary Sciences, Tsinghua University, China}
\icmlaffiliation{tsinghua2}{Department of Automation, Tsinghua University, China}
\icmlaffiliation{tsinghua3}{Institute of Automation, Chinese Academy of Sciences, China}
\icmlaffiliation{washu}{Washington University in St. Louis, USA}
\icmlaffiliation{netease}{Fuxi AI Lab, Netease, Inc., Hangzhou, China}
\icmlcorrespondingauthor{Hao Hu}{huh22@mails.tsinghua.edu.cn}

% You may provide any keywords that you
% find helpful for describing your paper; these are used to populate
% the "keywords" metadata in the PDF but will not be shown in the document
\icmlkeywords{Machine Learning, ICML}

\vskip 0.3in
]

% this must go after the closing bracket ] following \twocolumn[ ...

% This command actually creates the footnote in the first column
% listing the affiliations and the copyright notice.
% The command takes one argument, which is text to display at the start of the footnote.
% The \icmlEqualContribution command is standard text for equal contribution.
% Remove it (just {}) if you do not need this facility.

%\printAffiliationsAndNotice{}  % leave blank if no need to mention equal contribution
\printAffiliationsAndNotice{\icmlEqualContribution} % otherwise use the standard text.

\begin{abstract}
    Offline reinforcement learning (RL) is crucial for real-world applications where exploration can be costly or unsafe. However, offline learned policies are often suboptimal, and further online fine-tuning is required. In this paper, we tackle the fundamental dilemma of offline-to-online fine-tuning: if the agent remains pessimistic, it may fail to learn a better policy, while if it becomes optimistic directly, performance may suffer from a sudden drop. We show that Bayesian design principles are crucial in solving such a dilemma. Instead of adopting optimistic or pessimistic policies, the agent should act in a way that matches its belief in optimal policies.    
    Such a probability-matching agent can avoid a sudden performance drop while still being guaranteed to find the optimal policy. Based on our theoretical findings, we introduce a novel algorithm that outperforms existing methods on various benchmarks, demonstrating the efficacy of our approach. Overall, the proposed approach provides a new perspective on offline-to-online RL that has the potential to enable more effective learning from offline data.
    Our code is public online at \href{https://github.com/YiqinYang/BOORL}{https://github.com/YiqinYang/BOORL}.
\end{abstract}

    % We show theoretically that the agent should adopt neither optimistic nor pessimistic policies during the offline-to-online transition. Instead, we propose a Bayesian approach, where the agent acts by sampling from its posterior and updates its belief accordingly. 

\section{Introduction}

Reinforcement learning (RL) has shown impressive success in solving complex decision-making problems such as board games \citep{silver2016mastering} and video games \citep{mnih2013playing}, and has been applied to many real-world problems like plasma control \citep{degrave2022magnetic}, and human  

\begin{figure}[H]
    \includegraphics[width=0.8\columnwidth]{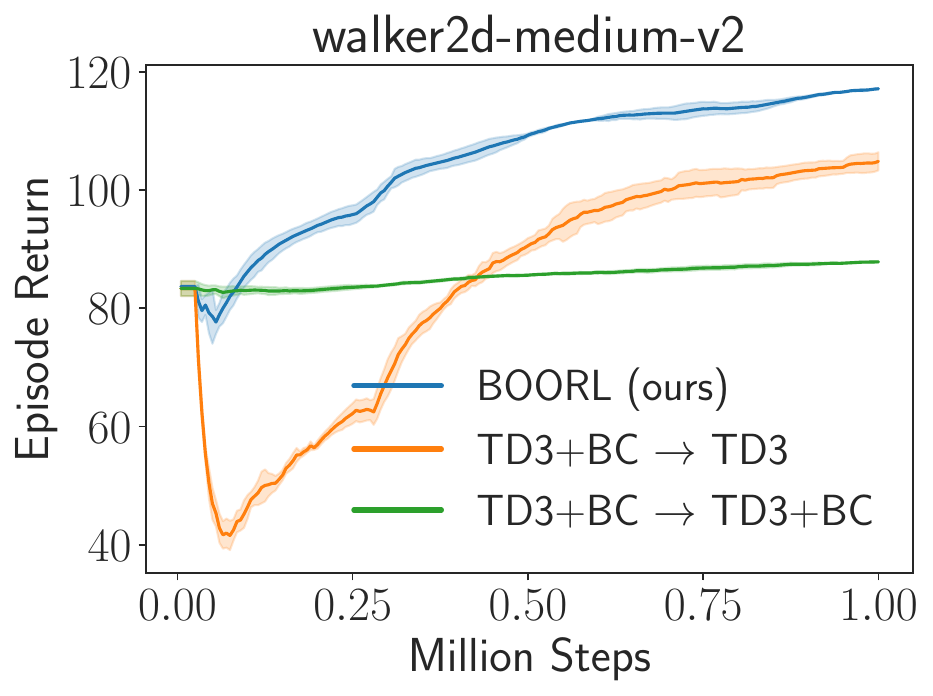}
    \caption{Fine-tuning dilemma in offline-to-online setting.
    if the algorithm remains pessimistic as it does in offline algorithms, the agent learns slowly due to a lack of exploration~(green). Conversely, when the algorithm is optimistic, the agent's performance may suffer from a sudden drop due to inefficient use of offline knowledge and radical exploration~(orange).
    % Pessimistic offline methods have a slow performance improvement~(green), while optimistic agents suffer from initial performance drop~(orange).
    We adopt a probability-matching approach to attain a fast and robust performance improvement~(blue).
    }
    \label{fig: demo}
\end{figure}
preference alignment \citep{ouyang2022training}. However, online RL algorithms rely on a significant amount of exploration, which can be time-consuming and expensive. Offline RL~\citep{levine2020offline} tackles such a problem by utilizing previously collected data and has gained increasing attention 
in recent years, with the potential to leverage large-scale and diverse datasets~\citep{kumar2022offline}. However, offline learned policies can be sub-optimal and generalize poorly due to insufficient data and overoptimization~\cite{gao2023scaling}, necessitating further online fine-tuning.

To address this challenge, a hybrid approach \citep{nair2020awac, lee2022offline, song2022hybrid} has been proposed, enabling sample-efficient learning utilizing both previously collected data and online environments. Pure online or offline strategies are known to fail on the offline-to-online (off-to-on) transition, and various solutions have been proposed, including a balanced replay buffer~\cite{lee2022offline, ball2023efficient}, ensembling~\cite{lee2022offline, ball2023efficient}, and value calibration~\cite{nakamoto2023cal}. However, there is a lack of a theoretical understanding of the optimistic-pessimistic dilemma, as depicted in Figure~\ref{fig: demo}, and previous methods do not fully address such a dilemma in a principled way. This naturally leads to the question:

% However, previous methods do not fully address the fundamental dilemma in offline-to-online (off-to-on) RL. That is, if the algorithm remains pessimistic as it does in offline algorithms, the agent learns slowly due to a lack of exploration. Conversely, when the algorithm is optimistic, the agent's performance may suffer from a sudden drop due to inefficient use of offline knowledge and radical exploration, as shown in Figure~\ref{fig: demo}. This naturally leads to the question: 

\begin{center}
    \textit{
    What are the design principles to balance offline data reuse and online exploration in offline-to-online reinforcement learning?
    % Can we design a offline-to-online algorithm that can effectively leverage offline data while exploring efficiently in a principled way?
}
\end{center}

To answer this question, we integrate information-theoretic concepts into the design and analysis of RL algorithms. Our results show that Bayesian principles are crucial in solving the dilemma and lead to methods superior to pure online and offline methods. Intuitively, by taking a probability matching approach, which samples from the posterior rather than the most optimistic or the most pessimistic policy, it balances between reusing known experiences and exploring the unknowns. We derive a concrete bound in linear MDPs and conduct experiments in didactic bandits to further demonstrate the superiority of such a Bayesian approach in off-to-on settings.

Based on the theoretical results, we design a simple yet efficient offline-to-online algorithm using approximated posterior sampling with bootstrapped datasets~\citep{osband2016deep}. Experiments show that our algorithm effectively resolves the dilemma, which effectively explores while avoiding a sudden drop in performance. Also, our algorithm is generally compatible with off-the-shelf offline RL methods for offline-to-online transition.

% To answer this question, we integrate information-theoretic concepts into the design and analysis of RL algorithms. Our results show that the Bayesian approach has strong guarantees and is superior over both optimism (e.g., UCB) and pessimism (e.g., LCB) in off-to-on settings. Intuitively, by sampling from the posterior rather than taking the most optimistic one or the most pessimistic one, it achieves a balance between reusing known experiences and exploring the unknowns. We derive a concrete bound in linear MDPs and conduct experiments in didactic bandits to further demonstrate the superiority of the Bayesian approach in off-to-on settings.

%Our algorithms achieved superior performance on various tasks.

% \subsection{Contributions:}
% \begin{enumerate}
  %   \item We provide an information-theoretic characterization of RL algorithms' performance that links online and offline performance with the agent's information gained about the environment.
  %   \item We demonstrate the superiority of the Bayesian approach in offline-to-online RL theoretically.
  %   \item We develop a practical approach with bootstrapping for offline-to-online RL and achieve superior performance on various tasks.
% \end{enumerate}
% Overall, our proposed approach provides a new perspective on offline-to-online fine-tuning that has the potential to enable more effective learning from offline data.

Our contribution is threefold: (1) we provide an information-theoretic characterization of RL algorithms' performance that links online and offline performance with the agent's gained information about the environment, (2) we demonstrate the superiority of the Bayesian approach in offline-to-online RL theoretically, and (3) we develop a practical approach with bootstrapping for offline-to-online RL and achieve superior performance on various tasks.
Overall, our proposed approach provides a new perspective on offline-to-online fine-tuning that has the potential to enable more effective learning from offline data.

% Our contribution is threefold: (1) we provide an information-theoretic characterization of RL algorithms' performance that links both the online and offline performance with the agent's information gain about the environment, (2) we demonstrate the superiority of the Bayesian approach in offline-to-online RL theoretically, and (3) we develop a practical approach with bootstrapping for offline-to-online RL and achieve superior performance on various tasks, such as locomotion and manipulation.
% Overall, our proposed approach provides a new perspective on offline-to-online fine-tuning that has the potential to enable more effective learning from offline data.

% Previous works [] focus on the to hybrid approach to enable sample-efficient online learning. 
% Online reinforcement learning has achieved great success in various domains, including video games \citep{mnih2015human}, class games~\citep{silver2016mastering} and continuous control~\citep{lillicrap2015continuous}. However, it requires extensive interaction with the environment to learn through trial-and-error. In many real-world problems, like personalized recommendation systems~\citep{swaminathan2015batch} and autonomous driving~\citep{shalev2016safe,singh2020cog}, exploration can be expensive and unsafe, which requires conducting RL in a supervised manner. In contrast, offline RL~\citep{levine2020offline,fujimoto2019off} enables learning from previously collected datasets, which shows great potential for real-world applications. 
\subsection{Related Works}
\paragraph{Offline RL.}
 Offline RL methods mainly address the extrapolation error issue in value and policy estimation, and can be roughly divided into policy constraint~\cite{yang2021believe,ma2021offline, yang2023flow}, pessimistic value estimation~\cite{kumar2020conservative}, and model-based methods~\cite{yu2021combo, hu2022role}.
Policy constraint methods keep the policy close to the behavior under a probabilistic distance~\cite{yang2021believe,ma2021offline, yang2023flow} while pessimistic value estimation methods enforce a regularization constraint on the critic loss to penalize over-generalization~\cite{kumar2020conservative}.
Model-based methods attempt to learn a model from offline data with minimal modification to the policy learning~\cite{yu2020mopo}.

\paragraph{Offline-to-Online RL.}
On the empirical side, \citet{nair2020awac} is among the first to propose a direct solution to offline-to-online RL. \citet{lee2022offline} proposes a balanced replay buffer and pessimistic Q-ensembles. \citet{zhang2023policy} proposes a policy expansion scheme to ensure a smooth offline-to-online transition. \citet{ball2023efficient} leverages ensemble methods and a high update-to-data ratio to enable faster online learning with offline data.
\citet{nakamoto2023cal}~observes an optimistic-pessimistic dilemma similar to ours and proposes calibrating the offline and online learned value function. However, they do not formally point out such a dilemma nor analyze it in a principled way. \citet{wagenmaker2023leveraging} proposes a novel alignment step in actor-critic RL to bridge the gap in online and offline value learning, which also help mitigate the performance drop during the finetuning stage.

Theoretically, \citet{xie2021policy} shows the importance of online exploration when the offline dataset only has partial coverage. \citet{song2022hybrid} demonstrates cases where a purely offline dataset can fail while a hybrid approach succeeds, and \citet{xie2022role} shows an interesting connection between offline concentration coefficient and online learning efficiency.
 \citet{yu2023actor} analyzes online learning with offline data in linear MDPs. Their algorithm allows for the incorporation of offline data without degraded performance compared to pure online learning.

\paragraph{Bayesian RL and Information-Theoretic Analysis.}
\citet{osband2017posterior,russo2014learning} theoretically justify the effectiveness of Bayesian methods like Thompson sampling. 
\citet{russo2016information,lu2019information} discuss the performance of TS in linear bandits. \citet{lu2019information,ouyang2017learning} provide the performance guarantee of TS on tabular MDPs. \citet{dann2021provably,zhong2022gec} analyze the frequentist regret of optimistic TS in general function approximation. 
\citet{uehara2021pessimistic} analyzes the performance of Bayesian methods in the offline setting. \citet{lu2019information} derives an information-theoretical formulation to analyze the regret bound of online learning algorithms like UCB and TS. \citet{xu2023bayesian} proposes an adaptive approach to backup Bayesian algorithms with a frequentist regret. Our work extends these works to offline and off-to-on settings. 
On the empirical side, \citep{osband2016deep} first adopts a Bayesian view on the exploration in deep RL. \citep{chua2018deep} proposes a model-based approach for Bayesian exploration. \citet{ghosh2022offline} adopts the Bayesian principle in the offline setting. Our work extends these works to the off-to-on setting.

% \citet{ball2023efficient} proposes to reuse the off-policy algorithm with modifications like ensembles and layer normalization.
 % Some works also use demonstration data \citep{rajeswaran2017learning, ball2023efficient}  or extract primitive skills from offline dataset \citep{ajay2020opal,yang2022flow,singh2020parrot}. They assume either optimality or hierarchical structure of the dataset.
% Prior works ~\citep{lee2022offline, zhang2023policy} propose various approaches, including balanced relay buffer and policy expansion, to reuse offline knowledge more efficiently. 

\section{Preliminaries}
\subsection{Episodic Reinforcement Learning}
We consider finite-horizon episodic Markov Decision Processes (MDPs), defined by the tuple $(\cS,\cA, H,\cP,r),$ where $\cS$ is a state space, $\cA$ is an action space, $H$ is the horizon and $\cP=\{\cP_h\}_{h=1}^H, r=\{r_h\}^H_{h=1}$ are the transition function and reward function, respectively.\par

A policy $\pi =\{\pi_h\}_{h=1}^H$ specifies a decision-making strategy in which the agent chooses its actions based on the current state, i.e., $a_h \sim \pi_h(\cdot \given s_h)$. The value function $V^\pi_{h}: \cS \rightarrow \RR$ is defined as the sum of future rewards starting at state $s$ and step $h\in [H]$, and similarly, the Q-value function, i.e.
\begin{align}
V^\pi_{h}(s) &= \EE_{\pi}\Big[ \sum_{t=h}^{H} r_t(s_t, a_t)\Biggiven s_h=s  \Big],\notag \\
 Q^\pi_{h}(s,a) &= \EE_\pi\Big[\sum_{t=h}^{H} r_h(s_t, a_t)\Biggiven s_h=s, a_h=a  \Big].
\label{eq:def_value_fct}
\end{align}
where the expectation is w.r.t. the trajectory $\tau$ induced by $\pi$. 
% The action-value function $Q^\pi_h:\cS\times \cA\to \RR$ is similarly defined as
% \begin{equation}
% \label{eq:def_q_fct}
% \end{equation}
We define the Bellman operator as
\begin{align}
(\BB_h f)(s,a) &= \EE\bigl[r_h(s, a) + f(s')\bigr],
\label{eq:def_bellman_op}
\end{align}
for any $f:\mathcal{S}\rightarrow \mathbb{R}$ and $h\in [H]$.
The optimal Q-function~$Q^*$, optimal value function~$V^*$ and are related by the Bellman optimality equation
\begin{align}
 V^*_{h}(s) & = \max_{a\in \cA}Q^*_{h}(s,a),\notag\\
  Q^*_{h}(s,a) & = (\BB_h V^*_{h+1}) (s,a), 
\label{eq:dp_optimal_values}
\end{align}
while the optimal policy is defined as
\begin{align*}
   \pi^*_{h} (\cdot \given s) & =\argmax_{\pi}\EE_{a\sim \pi}{Q^*_{h}(s, a)}.
\end{align*}

We define the suboptimality, or the per-episode regret as the performance difference of the optimal policy $\pi^*$ and the current policy $\pi_k$ given the initial state $s_1=s$. That is  
\begin{equation}
    \Delta_k= \text{SubOpt}(\pi_k;s) = V^{\pi^*}_{1}(s) - V^{\pi_k}_{1}(s). \notag
    \label{eq:def_regret_2}
\end{equation}

\subsection{Linear Function Approximation}
To derive a concrete bound for Bayesian offline-to-online learning, we consider the \textit{linear MDP}~\citep{jin2020provably,jin2021pessimism} as follows, where the transition kernel and expected reward function are linear with respect to a feature map, which indicate that the value function is also linear.

\begin{definition}[Linear MDP] \label{assumption:linear}
    $\rm{MDP}(\cS, \cA, H, \PP, r)$ is a \emph{linear MDP} with a feature map $\phi: \cS \times \cA \rightarrow \RR^d$, if for any $h\in [H]$, there exist $d$ \emph{unknown} (signed) measures $\mu_h = (\mu_h^{(1)}, \ldots, \mu_h^{(d)})$ over $\cS$ and an \emph{unknown} vector $\theta_h \in \RR^d$, such that 
     for any $(s, a) \in \cS \times \cA$, we have 
    % \vspace{-2ex}
    \begin{align}\label{eq:linear_transition}  
    % \forall (x, a) \in \cS\times \cA, \qquad 
    \PP_h(\cdot\given s, a) = \la\phi(s, a), \mu_h(\cdot)\ra, \quad 
    r_h(s, a) = \la\phi(s, a), \theta_h\ra.  
    \end{align}
    Without loss of generality, we assume $\norm{\phi(s, a)} \le 1$ for all $(s,a ) \in \cS \times \cA$, and $\max\{\norm{\mu_h(\cS)}, \norm{\theta_h}\} \le \sqrt{d}$ for all $h \in [H]$.
\end{definition}

\subsection{Information Gain and Bayesian Learning}
Let $\cH_{k,h} = (s_{1,1},a_{1,1},r_{1,1},\ldots, s_{k,h-1},a_{k,h-1},r_{k,h-1},s_{k,h})$ be all the history up to step $h$ of episode $k$. We use subscript $k,h$ to indicate quantities conditioned on $\cH_{k,h}$, i.e. $\PP_{k,h} =  \PP(\cdot|\cH_{k,h}), \EE_{k,h}[\cdot]= \EE[\cdot|\cH_{k,h}]$. The filtered mutual information is defined as 
\$
I_{k,h}(X;Y) = D_{\text{KL}}(\mathbb{P}_{k,h}(X,Y)||\mathbb{P}_{k,h}(X)\mathbb{P}_{k,h}(Y)),
\$
which is a random variable of $\cH_{k,h}$. For a horizon dependent quantity $f_{k,h}$, we define $\EE_{k}{[f_k]}=\sum_{h=1}^H{\EE_{k,h} [f_{k,h}]}$ and similarly for $\PP_k$. We use $t$ instead of $k,h$ for simplicity when it does not lead to confusion, e.g., $I_t \overset{\Delta}{=} I_{k,h}$.

We assume that the MDP can be described by an unknown model parameter $w=\{w_h\}_{h=1}^H$, which governs the outcome distribution. This is true in linear MDP since $Q$-function is  always linear, i.e., $Q_{w_h}(s,a) = \langle w_h, \phi(s,a) \rangle$. The agent's belief over the environment at the $k$-th episode is represented as a distribution $\beta_k$ over $w$.

We also define the information ratio \citep{russo2016information} as the ratio between the expected single step regret and the expected reduction in entropy of the unknown parameter as follows
\begin{definition}[Information Ratio]
    The information ratio $\Gamma_{k,h}$ given history $\cH_{k,h}$ is the minimum value $\Gamma$ such that
    the following event 
    \begin{align}
    \left|Q_{w_h}(s,a) - \EE Q_{w_h}(s,a) \right|\leq \frac{\Gamma}{2} \sqrt{I_{k,h}(w_h;r_{h},s_{h+1}\given s,a)}, \label{eq:info_ratio}
    % \right. \notag\\
    % \left. \forall h\in[H], s\in \cS, a\in\cA\right)>1-\frac{\delta}{2}.\label{eq:info_ratio}
    \end{align}
    holds for all $h\in[H], s\in \cS, a\in\cA$ with probability $1-\delta/2$. 
\end{definition}

 % We reload $\pi$ as an algorithm that generates
% a sequence of functions $\{\pi_{k,h}\}^K_{k=1}$ that map histories and current states to distributions over actions. 

We consider the Bayesian regret of
an algorithm $\pi$ over $T$ periods
\$
\text{BayesRegret}(T,\pi) = \EE[\text{Regret}(T,\pi)] = \EE[\sum_{k=1}^{K} \EE_{\beta_k}[\Delta_k]],
\$
where $T=HK$ and the expectation is taken over the randomness in outcomes, algorithm  $\pi$, as well as the posterior
distribution $\beta_{k}$ over $w$. We also use $\text{BayesRegret}(N, T,\pi)$ to denote the offline-to-online regret of an algorithm $\pi$ that uses an offline dataset of size $N$ and interacts online for $T$ steps.

Similar to the definition of the coverage coefficient in offline RL literature~\citep{jin2021pessimism, uehara2021pessimistic}, we can generalize such a concept by taking the expectation over the belief $\beta$. Specifically, we have the following definition from \citet{uehara2021pessimistic}.
\begin{definition}
    The Bayesian coverage coefficient with respect to the feature map $\phi(s,a)$ and posterior $\beta$ is defined as
    % \begin{equation}
    %     C_{\beta} = \max_{h\in[H]} \EE_{w \sim\beta} \frac{d_{\pi^*_w,h}(s,a)}{\rho_h(s,a)}
    % \end{equation}
    % When considering MDPs with feature maps $\phi(s, a)$ like linear MDPs~\citep{jin2020provably}, we have the refined coverage coefficient with respect to the feature map
    \begin{equation}
        C^{\dagger}_{\beta} = \max_{h\in[H]} \EE_{w \sim\beta} \sup_{\|x\|=1}\frac{x^{\top}\Sigma_{\pi^*_w,h} x}{x^{\top}\Sigma_{\rho_h} x},
    \end{equation}
    where 
    $
    \Sigma_{\pi^*_w,h} = \EE_{(s,a)\sim d_{\pi^*_w,h}(s,a)} [\phi(s,a)\phi(s,a)^{\top}], \Sigma_{\rho_h} = \EE_{\rho_h} [\phi(s,a)\phi(s,a)^{\top}].
    $
    
\end{definition}    
Bayesian coverage coefficient is a natural generalization of normal coverage coefficient \citep{uehara2021pessimistic,jin2021pessimism,rashidinejad2021bridging} 
 in Baysian settings.
 
 % and has appeared in previous work \citep{uehara2021pessimistic}.

\section{Theoretical Analysis}
\label{Sec: theory}
It is known that we should adopt optimistic algorithms (e.g., UCB~\citep{auer2002using}) in online settings to avoid missing optimal strategies, and we should adopt pessimistic algorithms (e.g., LCB~\citep{rashidinejad2021bridging}) to avoid overconfidence in unknown regions. However, it is unclear what the principled way for offline-to-online settings is where both an offline dataset and an online environment are available. As Figure~\ref{fig: demo} demonstrates, optimistic online algorithms~(e.g., TD3~\citep{fujimoto2018addressing}) can mismanage prior knowledge in the dataset, leading to a sudden drop in performance. On the other hand, pessimistic offline algorithms~(e.g., TD3+BC~\citep{fujimoto2021minimalist}) can be too conservative in exploration, which leads to slow learning.

Such a dilemma is fundamentally intractable under the frequentist (i.e., worst-case) scenario. For instance, consider a multi-arm bandit problem where one arm has limited historical pulls in the dataset. Opting for exploration, we risk significant suboptimality in scenarios where this arm proves less favorable. Conversely, avoiding exploration might incur substantial regret in cases where the arm is optimal. This predicament highlights the limitations of a frequentist analysis for offline-to-online RL. Therefore, we resort to the Bayesian point of view and conduct an information-theoretic analysis in Section~\ref{sec: info} to understand how we can use both the dataset and the online environment properly under the average regret criteria. Interestingly, we show that a probability matching agent (e.g., Thompson Sampling; TS) can make a good balance and outperform pure online and offline agents. Intuitively, uniformly sampling from the posterior rather than acting according to the most optimistic or pessimistic estimation strikes a proper balance between efficient exploration and safe exploitation.

In Section~\ref{sec:linearmdp}, we derive a concrete performance bound for such agents in linear MDPs. Such theoretical performance bound matches well with empirical observations on the didactic bandit problem, as shown in Figure~\ref{fig: theory}.
The probability-matching approach contrasts drastically with pure optimism, pure pessimism, and a naive approach that gradually switches from pessimism to optimism.

\subsection{Information-Theoretic Analysis}
\label{sec: info}
A good exploration strategy should avoid careless trials and incur regret only when it can learn enough from such action. Therefore, the suboptimality should be proportional to the possible information gain from the current policy. Similarly, after learning from the offline dataset, a good exploitation strategy should incur a suboptimality only due to its uncertainty about the environment. The following theorem formalizes such an intuition under the lens of information theory.

\begin{theorem}
    \label{theorem:information}
    % Suppose
    % \#
    % \PP_k\left(\left|Q_{t,w}(s,a) - \bar{Q}_{t,w}(s,a) \right|\leq \frac{\Gamma_{t}}{2} \sqrt{I_{t}(w_h;r_{t,a},s_{t+1,a})}, \right. \notag \\
    % \left. \forall h\in[H], s\in \cS, a\in\cA\right)\label{confidence_set}
    % \#
    % is greater than $1-\delta/2$, where $\bar{Q}_{t,w}(s,a)$ is the Bayesian average value, i.e., $\bar{Q}_{t,w}(s,a) = \EE_{w\sim\beta_t} \left[Q_{t,w}(s,a)\right]$.
    % $\bar{Q}_{t,w}(s,a)=r_{h,w}(s,a)+\PP_h V_{h+1,w}(s,a)$.
    
    Then the per-episode regret of Thompson Sampling and UCB agents satisfies
    % with upper confidence function $Q_{t}(s,a)=\bar{Q}_{t,w}(s,a)+ \frac{\Gamma_{t}}{2}\sqrt{I_{t}(w_h;r_{t,a},s_{t+1,a})}$ satisfies
    \#
    \EE_k[\Delta_k]\leq \sum_{h=1}^{H}\Gamma_{k,h} \sqrt{I_{k,h}(w_h;a_{k,h},r_{k,h},s_{k,h+1})}+2\delta H^2, \label{online_bound}
    \#
    where $a_{k,h}\sim \pi_{k,h}$. Similarly, the per-episode regret of Thompson Sampling and LCB agents satisfies
    % with $Q_{t}(a)=\bar{Q}_{t,w}(s,a)- \frac{\Gamma_{t}}{2}\sqrt{I_{t}(w_h;r_{t,a},s_{t+1,a})}$ satisfies
    \#
    \EE_k[\Delta_k]\leq \sum_{h=1}^{H}\Gamma_{k,h} \sqrt{I_{k,h}(w_h;a^*_{h},r_{k,h},s_{k,h+1})}+ 2\delta H^2,
    \label{offline_bound}
    \#
    where $a_{h}^*\sim \pi_{h}^*$. 
\end{theorem}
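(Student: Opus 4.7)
The plan is to decompose the per-episode regret into per-step $Q$-value gaps via the value-difference identity
\$
\Delta_k = \sum_{h=1}^H \EE_{s_{k,h}\sim\pi_k}\bracks{Q^*_h(s_{k,h},a^*_h) - Q^*_h(s_{k,h},a_{k,h})},
\$
with $a^*_h = \pi^*_h(s_{k,h})$ and $Q^*_h = Q_{w_h}$, then rewrite each summand in a form that exposes either the taken action $a_{k,h}$ or the optimal action $a^*_h$, and finally bound it through a single application of \eqref{eq:info_ratio}.

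For Thompson Sampling the sampled parameter $w_{k,h}$ is an independent draw from the posterior $\beta_k$, so $(w_h,a^*_h)$ and $(w_{k,h},a_{k,h})$ have the same law given $\cH_{k,h}$, and $(w_h,w_{k,h})$ is exchangeable. Centering through the posterior mean $\overline{Q}_h = \EE_{k,h}Q_{w_h}$ and using these two identities, each per-step gap equals
\$
\EE_{k,h}\bracks{Q_{w_{k,h}}(s_{k,h},a_{k,h}) - Q_{w_h}(s_{k,h},a_{k,h})}
\$
(the \emph{taken-action} form, yielding \eqref{online_bound}), and also equals
\$
\EE_{k,h}\bracks{Q_{w_h}(s_{k,h},a^*_h) - Q_{w_{k,h}}(s_{k,h},a^*_h)}
\$
(the \emph{optimal-action} form, yielding \eqref{offline_bound}). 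For UCB, the chain $Q^*_h(s,a^*_h) \leq \overline{Q}_h(s,a^*_h) + b(s,a^*_h) \leq \overline{Q}_h(s,a_{k,h}) + b(s,a_{k,h})$ (valid because $a_{k,h}$ maximizes $\overline{Q}_h + b$) reproduces the taken-action form with bonus $b$ in place of $|Q_{w_h} - Q_{w_{k,h}}|$; for LCB the symmetric chain produces the optimal-action form. The bonus is calibrated to $\tfrac{\Gamma_{k,h}}{2}\sqrt{I_{k,h}}$ so that it matches the information-ratio bound.

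Two applications of \eqref{eq:info_ratio}, one to $w_h$ and one to $w_{k,h}$, joined through $\overline{Q}_h$ by the triangle inequality, give
\$
|Q_{w_h}(s,a) - Q_{w_{k,h}}(s,a)| \leq \Gamma_{k,h}\sqrt{I_{k,h}(w_h; r_h, s_{h+1}\given s, a)}
\$
with probability at least $1-\delta$. Evaluating at $(s,a) = (s_{k,h},a_{k,h})$ and summing over $h$ yields \eqref{online_bound}; evaluating at $(s,a) = (s_{k,h},a^*_h)$ and summing yields \eqref{offline_bound}, after relaxing the conditional mutual information via the chain rule $I(w_h; r_h, s_{h+1}\given a) \leq I(w_h; a, r_h, s_{h+1})$. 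A union bound over the $H$ steps gives total failure probability at most $\delta H$, on which the per-step gap is bounded by $2H$ since $|Q|\leq H$, producing the additive $2\delta H^2$ slack.

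I expect the main subtlety to lie in the unified handling of LCB, which is not a probability-matching algorithm: the exchangeability that drives the TS derivation has to be replaced by a careful calibration of the pessimistic bonus to the right-hand side of \eqref{eq:info_ratio} at the optimal action, and one has to verify that the various conditioning events (state visitation under $\pi_k$ and optimal action under $w_h$) compose cleanly inside the mutual information on the right-hand side.
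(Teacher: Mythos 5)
Your argument is correct, but it follows a genuinely different route from the paper's. The paper does not start from the performance-difference identity in terms of $Q^*$; it invokes the regret decomposition of \citet{cai2020provably} (Lemma~\ref{regret_decompose}), writing the per-episode regret as a greedy-policy term (nonpositive) plus $\sum_h(\EE_{\pi^*}[\iota_{k,h}]-\EE_k[\iota_{k,h}])$, where $\iota_{k,h}$ is the Bellman residual of the \emph{algorithm's} value estimate. Optimism forces $-\Gamma_{k,h}\sqrt{I_{k,h}}\le\iota_{k,h}\le 0$ on the confidence event, so only the on-policy term survives (giving \eqref{online_bound}); pessimism forces $0\le\iota_{k,h}\le\Gamma_{k,h}\sqrt{I_{k,h}}$, so only the $\pi^*$-term survives (giving \eqref{offline_bound}); and TS inherits both via probability matching of $\widehat{w}_k$ and $w$. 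You instead work directly with $Q_{w_h}=Q^*$, use the Russo--Van~Roy exchangeability of $(w_h,a^*_h)$ and $(w_{k,h},a_{k,h})$ to recenter through the posterior mean, and handle UCB/LCB by the usual optimism/pessimism chains. Both proofs converge on the same final step (Jensen plus $I(w_h;a_t)=0$ to fold the action distribution into the joint mutual information), and your accounting of the $2\delta H^2$ slack matches. Your route is more self-contained for TS (no external decomposition lemma, no martingale term to dispose of), and it makes the symmetry between the two bounds transparent. What the paper's route buys is control of \emph{where} the LCB bound lands: its $\EE_{\pi^*}[\iota_{k,h}]$ term puts the surviving information term on the full $d^{\pi^*}$ state--action occupancy, which is what the coverage-coefficient argument in Theorem~\ref{theorem:1} consumes. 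Your LCB chain evaluates the bonus at states drawn from the LCB policy's own occupancy with $a^*_h\sim\pi^*_h(\cdot\given s_{k,h})$, a different distribution; this still matches the (loosely stated) form of \eqref{offline_bound}, and for TS the distinction evaporates by probability matching, but if you intend the bound to feed into the downstream linear-MDP analysis you should restate the LCB case with the $\pi^*$-trajectory expectation as the paper does.
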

\begin{proof}
    Please refer to Appendix~\ref{proof:information} for detailed proof.
\end{proof}

% Equation~\eqref{confidence_set} posits that the uncertainty in the Q-value function can be bounded through the information gain from single-step feedback. This assumption aligns with the intuition that the total amount of uncertainty is the information gain from an unlimited amount of feedback. It can be shown that Equation~\eqref{confidence_set} generally holds in various settings, including linear MDPs~\citep{jin2020provably}, factored MDPs~\citep{lu2019information}, and kernel MDPs~\citep{yang2020reinforcement}. 

Here UCB, LCB and Thompson Sampling strategies are defined in an abstract and information-theoretic manner, with details shown in Appendix~\ref{appendix: alg_abs}.

% Theorem~\ref{theorem:information} connects the suboptimality for an efficient RL strategy with its information gain. An effective online strategy should incur suboptimality only when it receive information about the environment. An effective offline strategy should incur a suboptimality only due to the intrinsic uncertainty proportional to the information gain about optimal action (and its subsequences) after knowing the true parameter.

% Equation~\eqref{confidence_set} assumes that the uncertainty of the Q-value function can be bounded via the information gain from one-step feedback, which is intuitive since the total amount of uncertainty is the information gain from unlimited amount of feedback, which should be proportional to the one-step feedback. 

% Theorem~\ref{theorem:information} provides an information-theorectic interpretation of an efficient RL strategy. Equation~\eqref{online_bound} indicates that  

Theorem~\ref{theorem:information} leads to an information-theoretic performance bound for both online exploration and offline exploitation. Equation~\eqref{online_bound} indicates an online $\tilde{\cO}(\sqrt{T})$-regret bound using the chain rule of mutual information, as depicted in Proposition~\ref{online_guarantee}. With additional assumption on the coverage of the dataset, Equation~\eqref{offline_bound} implies an $\tilde{\cO}(\sqrt{C/N})$ offline performance bound where $C$ is the coverage coefficient. Note that we have no assumptions on the structures of the MDPs (e.g., linear MDP, etc.). Instead, we only assume that the uncertainty in the Q-value function can be reduced at a certain rate as we gain more information about the environment and the total information available does not grow too fast. This assumption generally holds in various settings, including linear MDPs~\citep{jin2020provably}, factored MDPs~\citep{lu2019information}, and kernel MDPs~\citep{yang2020reinforcement}. Please refer to  \citet{lu2019information} for a detailed derivation.

Note that Thompson sampling enjoys both regret bounds in Equation~\eqref{online_bound} and Equation~\eqref{offline_bound}, which indicates that Thompson sampling is suitable for both offline and online settings. Moreover, it indicates that a Bayesian approach enjoys better guarantees in offline-to-online settings since it can avoid sudden performance drop (due to Equation~\eqref{offline_bound}) and explore efficiently (due to Equation~\eqref{online_bound}).
This is summarized in Table~\ref{tab:1}, where we classify existing settings and corresponding doctrines. Table~\ref{tab:1} suggests that the Bayesian approach is consistent across different settings and recommends a realist approach in offline-to-online settings instead of optimism or pessimism.

\begin{table}[H]
    
    \centering
    \begin{tabular}{lcl}\toprule
    Setting           & Doctrine  & Algorithm \\ \midrule
    Online Learning   & Optimism  & TS, UCB    \\ \hline
    Offline Learning  & Pessimism & TS, LCB    \\ \hline
    Offline-to-online & Realism   & TS        \\ \bottomrule
    
    \end{tabular}

    \caption{A taxonomy of the doctrines in different settings of reinforcement learning. a Bayesian approach like TS is generally suitable for online, offline and offline-to-online settings, and is the only one that works in the offline-to-online setting.}
    \label{tab:1}
\end{table}

\begin{figure*}[t]
    \centering
    \subfigure{\includegraphics[scale=0.4]{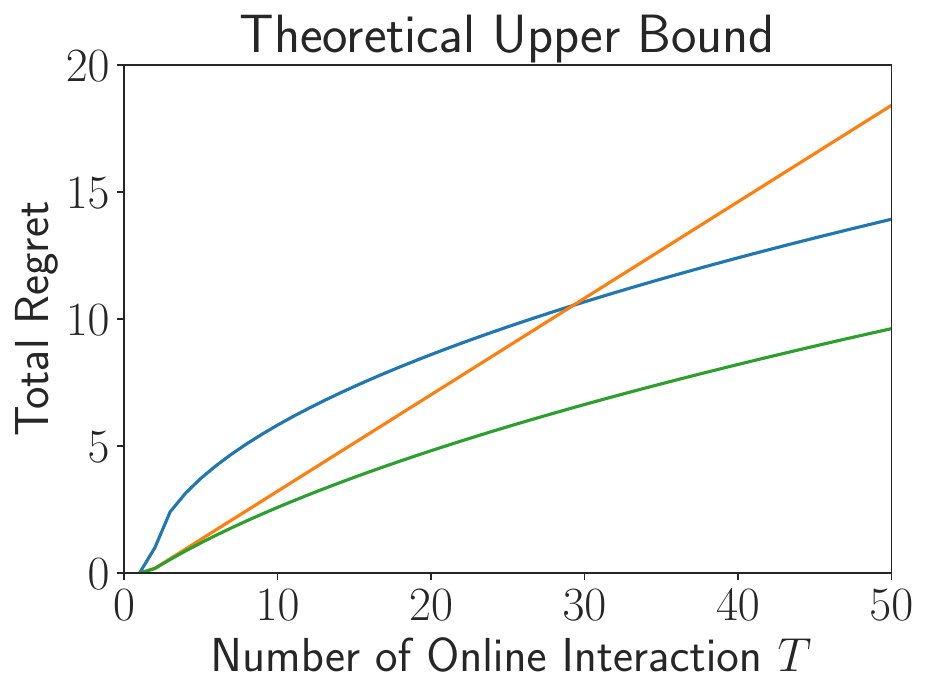}}
    \subfigure{\includegraphics[scale=0.4]{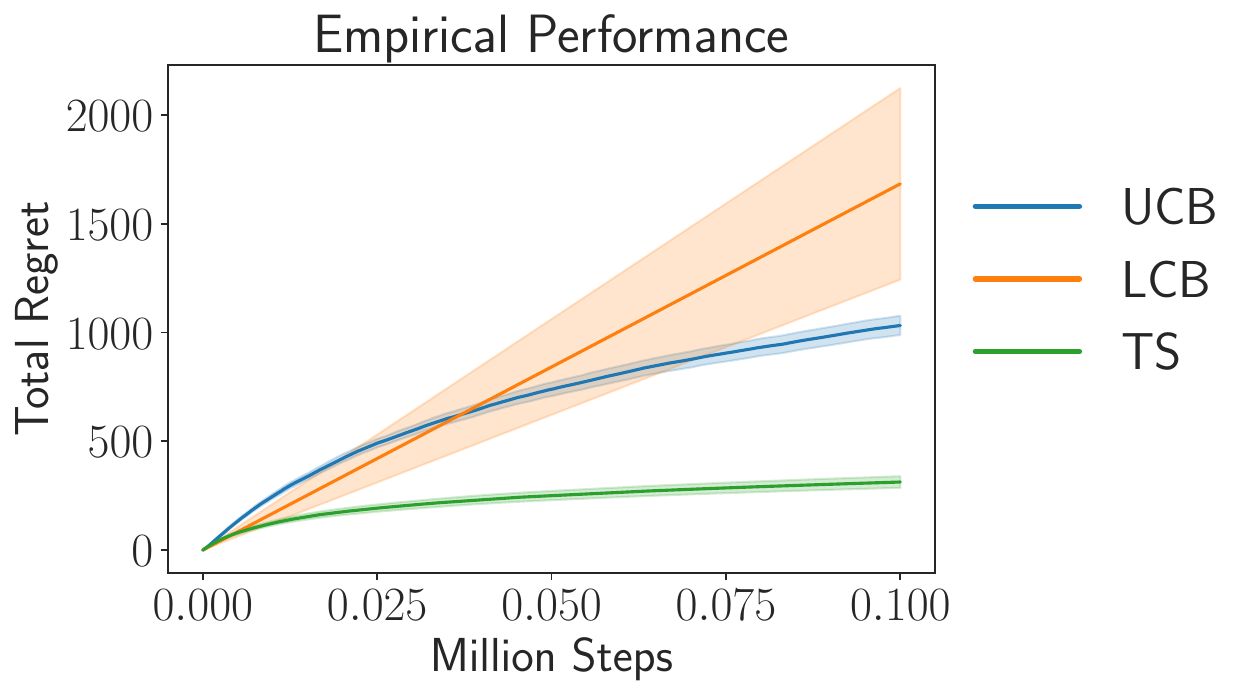}}
    \caption{Theoretical upper bound in Theorem~\ref{theorem:1} and experiment results on Bernoulli bandits. The performance of a Bayesian approach matches the performance of LCB at an early stage by using prior knowledge in the dataset properly and matches the performance of UCB in the run by allowing efficient exploration. Therefore, a realistic Bayesian agent performs better than both optimistic UCB and pessimistic LCB agents.
    % Therefore, a Bayes agent performs better than both UCB and LCB agents. Experiments on multi-arm bandits match well with our theoretical prediction.
    }
    \label{fig: theory}
\end{figure*}

% \subsection{The Role of Probability Matching in Offline-to-Online RL}
% The key reason for the Bayesian agent be suited for both settings is that it implements a probability-matching strategy (Thompson, 1933), so it makes a balance between  optimism and pessimism. Any probability-matching decisions work in both offline and online settings, as shown in the following proposition.

% \begin{proposition}
%     For methods that implements probability matching, we have
%     \begin{equation}
%         I_t(w_h;a_t,r_t,s_{t+1}) = I_t(w_h;a*,r_t,s_{t+1}).
%     \end{equation}
% \end{proposition}

% It is tempting to replace Thompson sampling with more advanced algorithms such as Information-directed sampling~\cite{} and feel-good Thompson sampling~\cite{}, since it allows much more efficient online learning. There are cases where Thompson sampling requires exponentially more samples than information-directed sampling. However, as the following proposition shows, probability matching is fundamental for offline-to-online RL to ensure a balance between optimality and robustness, therefore these advanced methods can fail in offline-to-online setting. This indicates that the offline-to-online task is complexier than a pure online one. Please refer to Appendix for concrete examples.

\subsection{Specification in Linear MDPs}
\label{sec:linearmdp}

To provide more insights about Theorem~\ref{theorem:information}, we consider a concrete regret bound for Bayesian methods in linear MDPs. Applying Theorem~\ref{theorem:information} to linear MDPs as defined in Definition~\ref{assumption:linear}, we have the following theorem.

\begin{theorem}[Regret of Bayesian Agents in Linear MDPs, informal]
    \label{theorem:1}
    Given an offline dataset $\cD$ of size $N$, 
    % and a fixed posterior $\beta$ during the online interaction phase, 
    the regret of Thompson sampling during online interaction satisfies the following bound:
    \#
        \label{eq:main_result}
        \text{\rm BayesRegret}(N,T,\pi)  \leq c\sqrt{d^3H^3\iota} \left(\sqrt{\frac{N}{C^\dagger_\beta}+ T}-\sqrt{\frac{N}{C^\dagger_\beta}}\right),
    \#
 where $\iota$ is a logarithmic factor and $c$ is an absolute constant.
\end{theorem}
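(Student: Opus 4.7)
The plan is to specialize Theorem~\ref{theorem:information} to linear MDPs, treating the offline dataset of size $N$ as observations that reshape the prior belief $\beta$ at the start of the online phase. I would apply the online bound~\eqref{online_bound} to Thompson sampling, sum over the $K=T/H$ online episodes, and choose $\delta=1/T$ so that the $2\delta H^2$ term contributes only to the logarithmic factor $\iota$. The problem then reduces to controlling $\mathbb{E}\bigl[\sum_{k,h}\Gamma_{k,h}\sqrt{I_{k,h}}\bigr]$, where $I_{k,h}$ is evaluated along the Thompson-sampled trajectory.

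Two linear-MDP-specific ingredients then enter. First, since $Q_{w_h}(s,a)=\langle w_h,\phi(s,a)\rangle$ with $\|\phi\|\le 1$ and the posterior over $w_h$ is (approximately) Gaussian with covariance $\Sigma_{k,h}$, ellipsoidal concentration gives $|Q_{w_h}-\mathbb{E}_{\beta_k}Q_{w_h}|\lesssim H\sqrt{d}\,\|\phi(s,a)\|_{\Sigma_{k,h}}$ (with the $H$ arising from the range of the $Q$-function and the $\sqrt{d}$ from the dimension of $w_h$), while the Gaussian-channel formula gives $I_{k,h}(w_h;r_h,s_{h+1}\mid s,a)\gtrsim \|\phi(s,a)\|^2_{\Sigma_{k,h}}/H^2$ after accounting for the noise of the reward-plus-bootstrapped-value observation. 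Dividing these yields a uniform information-ratio bound of the right dimensional form, in the Russo--Van Roy style; after pulling this factor out and summing over $h$, the task reduces to bounding $\sum_k\|\phi(s_{k,h},a_{k,h})\|_{\Sigma_{k,h}}$ for each~$h$.

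Second, the Bayesian coverage coefficient translates the offline dataset into effective prior precision. By definition of $C^\dagger_\beta$, the offline feature covariance $\Sigma_{\rho_h}$ dominates $(C^\dagger_\beta)^{-1}\mathbb{E}_{w\sim\beta}\Sigma_{\pi^*_w,h}$, so after a matrix-concentration step the initial posterior precision satisfies $\Sigma_{0,h}^{-1}\succeq (N/C^\dagger_\beta)\,\Sigma_{\pi^*_w,h}$ in the directions activated by the optimal policy. Because Thompson sampling probability-matches the posterior over $\pi^*_w$, a change-of-measure argument transfers this spectral dominance to the directions actually visited online, yielding the per-step potential bound $\|\phi(s_{k,h},a_{k,h})\|^2_{\Sigma_{k,h}}\lesssim d/(N/C^\dagger_\beta+k)$. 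The integral comparison
$$\sum_{k=1}^{K}\sqrt{\frac{d}{N/C^\dagger_\beta+k}}\le 2\sqrt{d}\left(\sqrt{\tfrac{N}{C^\dagger_\beta}+K}-\sqrt{\tfrac{N}{C^\dagger_\beta}}\right)$$
then produces the exact telescoping form in~\eqref{eq:main_result}; combining the information-ratio factor, the sum over $h$, the substitution $K=T/H$, and the standard accounting of dimensional and horizon factors yields the claimed $\sqrt{d^3H^3\iota}$ prefactor.

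The main obstacle will be the change-of-measure step linking the coverage coefficient to the elliptical potential: $C^\dagger_\beta$ is defined via spectral dominance along the optimal-policy feature covariance $\Sigma_{\pi^*_w,h}$, but the per-step information gain is measured along the Thompson-sampled $\phi(s_{k,h},a_{k,h})$, and bridging the two requires carefully exploiting the equality in distribution between the Thompson-sampled policy and the posterior over $\pi^*_w$, together with a matrix concentration step converting the expected-coverage inequality into a per-episode spectral bound. A secondary technical issue is justifying the Gaussian-posterior approximation used for the information-ratio calculation under the actual linear-MDP prior; this is standard via Laplace-type arguments but requires care when the posterior is still diffuse at small~$k$.
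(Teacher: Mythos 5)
Your outline follows the paper's proof in its essentials---specialize Theorem~\ref{theorem:information}, compute the mutual information in closed form as $\tfrac12\log(1+\phi^\top\Lambda^{-1}\phi)$ (Lemma~\ref{linear_mi}), convert the offline data into prior precision via $C^\dagger_\beta$, and finish with the integral comparison $\sum_k (a+bk)^{-1/2}\le \tfrac2b(\sqrt{a+bK}-\sqrt{a})$---but you handle the crux differently, and your version is the shakier one. You start from the online bound~\eqref{online_bound} along the realized Thompson-sampled features and then try to establish a \emph{pointwise} per-step potential bound $\|\phi(s_{k,h},a_{k,h})\|^2_{\Lambda_k^{-1}}\lesssim d/(N/C^\dagger_\beta+k)$ via a ``change of measure''; such a pointwise bound is not true in general (the standard elliptical potential lemma only controls the \emph{sum} logarithmically), and you correctly flag this as the main obstacle. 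The paper sidesteps it by instead invoking the $\pi^*$-side bound~\eqref{offline_bound}, which for Thompson sampling coincides in expectation with the online one by probability matching. This makes the quantity to control $\EE_\beta\,\Tr(\Sigma_{\pi^*,h}\Lambda_t^{-1})$ after Jensen, and the change of measure you worry about becomes two exact identities: the coverage coefficient gives $\EE[\sum_\ell\phi_\ell\phi_\ell^\top]\succeq (N/C^\dagger_\beta)\Sigma_{\pi^*_\beta,h}$ for the offline part, and probability matching gives $\EE[\sum_k\phi_{k,h}\phi_{k,h}^\top]=K\,\Sigma_{\pi^*_\beta,h}$ for the online part, with a single matrix-concentration step on the \emph{cumulative} Gram matrix (introducing the condition number $\kappa_\beta$ and the event $\cE$) converting these expected dominances into $\Lambda_t\succeq\tfrac12(N/C^\dagger_\beta+K)\Sigma_{\pi^*_\beta,h}$; the trace is then bounded eigenvalue-by-eigenvalue using $\lambda_j(h)\in[0,1]$. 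If you rewrite your argument in expectation over the Bayes regret rather than per-step pointwise, your route becomes equivalent. One smaller correction: the information ratio in linear MDPs here is $\Gamma\asymp Hd\sqrt{\iota}$ (Lemma~\ref{linear_uncertainty}), not $H\sqrt{d}$; the $d^{3/2}$ in the final bound arises as $\Gamma\cdot\sqrt{d}$ (the extra $\sqrt{d}$ from the trace), so your dimensional accounting as stated would undercount by a factor of $\sqrt{d}$.
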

\begin{proof}
    Please refer to Appendix~\ref{proof:theorem1} for detailed proof.
\end{proof}

% A fixed posterior distribution serves as a good approximation when $N \gg T$, i.e., when the agent has gathered enough evidence from the offline dataset that the online data does not significantly alter the belief. 
% We conduct experiments on multi-arm bandits for the theoretical prediction of Theorem~\ref{theorem:1}.

Theorem~\ref{theorem:1} demonstrates that the Bayesian approach provides a robust regret guarantee. From simple algebraic observations that $\sqrt{a+b}-\sqrt{a} \leq \sqrt{b}$ and $\sqrt{a+b}-\sqrt{a}\leq b/(2\sqrt{a})$, Theorem~\ref{theorem:1} indicates that Bayesian agent can converge to the optimal policy at an $\tilde{\cO}(\sqrt{T})$ rate while having a bounded single-step suboptimality (i.e., $\tilde{\cO}(\sqrt{C^\dagger_\beta/N})$), a feat neither naive online nor offline approaches can accomplish alone. This is further formalized in Propositions~\ref{prop:1} and ~\ref{prop:2}. 

\begin{proposition}
    \label{prop:1}
    Under the same assumption of Theorem~\ref{theorem:1}, the expected first-step suboptimality of UCB can be unbounded (i.e. $\tilde{O}(1)$), while the expected suboptimality of Thompson sampling satisfies
    \$
        \text{\rm SubOpt}(N,T, \pi) \leq c \sqrt{\frac{C^\dagger_\beta d^3 H^3 \iota}{ N}} = \tilde{\cO}\left(\sqrt{\frac{C^\dagger_\beta d^3 H^3}{N}}\right),
    \$
    where $\iota$ is a logarithmic factor and $c$ is an absolute constant.
\end{proposition}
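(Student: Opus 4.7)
The plan is to treat the two assertions of Proposition~\ref{prop:1} by very different means, since they call for different techniques: the upper bound on Thompson sampling is a short algebraic corollary of Theorem~\ref{theorem:1}, while the claim that UCB can have $\tilde{O}(1)$ first-step suboptimality requires exhibiting an explicit instance on which any UCB-style agent is forced to probe an uncovered region.

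First, for the Thompson sampling bound, I would start from the regret bound of Theorem~\ref{theorem:1},
\[\text{BayesRegret}(N,T,\pi)\leq c\sqrt{d^3H^3\iota}\left(\sqrt{N/C^\dagger_\beta + T}-\sqrt{N/C^\dagger_\beta}\right),\]
and specialize to the first online episode, i.e., take the online horizon on the right-hand side to be of constant order relative to $N$. Applying the elementary inequality $\sqrt{a+b}-\sqrt{a}\leq b/(2\sqrt{a})$ with $a = N/C^\dagger_\beta$ collapses the square-root difference to an $O(\sqrt{C^\dagger_\beta/N})$ factor, which when absorbed into the leading $\sqrt{d^3H^3\iota}$ yields the claimed $\tilde{\cO}(\sqrt{C^\dagger_\beta d^3 H^3 / N})$ bound on $\text{SubOpt}(N,T,\pi)$ up to a redefinition of the absolute constant $c$.

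Second, for the UCB lower bound, I would construct a two-armed Bernoulli bandit viewed as a linear MDP with $H=1$, $d=2$ and feature map $\phi(a_i) = e_i$. The offline dataset is taken to contain $N$ pulls of arm $1$ and zero pulls of arm $2$, and I would place a uniform prior $\beta$ on the two mean rewards on $[0,1]^2$, so that $C^\dagger_\beta = \Theta(1)$ (arm $1$ is optimal under the offline distribution with constant posterior probability). Since the posterior of arm $2$'s mean remains the prior with full support on $[0,1]$, any UCB rule with a valid high-probability confidence region must assign arm $2$ an upper confidence of at least $1$, while arm $1$'s index is concentrated near its posterior mean in $[0,1]$; hence UCB pulls arm $2$ on the first online step regardless of $N$. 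Conditioning on the constant-probability event that arm $2$'s true mean lies in, say, the lower quartile of the prior then gives an expected first-step suboptimality of $\Omega(1)$, establishing the $\tilde{O}(1)$ lower bound.

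The main obstacle is the UCB direction: one must choose an instance on which $C^\dagger_\beta$ is genuinely of constant order (so the comparison to the Thompson sampling bound is made on the same family of problems) and simultaneously argue that the pathological behavior is forced by the UCB doctrine itself rather than by a poor choice of confidence width. By contrast, the Thompson sampling direction is essentially a two-line manipulation of Theorem~\ref{theorem:1} and presents no real difficulty.
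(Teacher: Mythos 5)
Your approach coincides with the paper's on both halves: the Thompson sampling bound is obtained exactly as in the paper by setting $T=1$ in Theorem~\ref{theorem:1} and applying the elementary inequality $\sqrt{a+b}-\sqrt{a}\le b/(2\sqrt{a})$, and the UCB claim is established, as in the paper's Lemma~\ref{lemma:ucb_failure}, by a two-armed bandit in which the bonus on an under-sampled arm forces UCB to pull a constant-probability-bad arm, giving $\Omega(1)$ first-step regret. The only substantive difference is that the paper's counterexample gives the risky arm a single offline pull (behavior policy $\mu(a_2)=1/N$) and works with the concrete index $\widehat r(a)+k\sqrt{\log N/N_a}$, whereas you give it zero pulls and argue abstractly from validity of the confidence region; both routes deliver the conclusion.

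One concrete claim in your UCB construction is false, however: with zero offline pulls of arm $2$ and a prior under which arm $2$ is optimal with constant posterior probability, the Bayesian coverage coefficient is \emph{infinite}, not $\Theta(1)$. Indeed $\Sigma_{\rho_h}=e_1e_1^\top$ while $\Sigma_{\pi^*_w,h}=e_2e_2^\top$ for every $w$ with arm $2$ optimal, so the supremum over $x$ in the definition of $C^\dagger_\beta$ diverges on a set of posterior measure $1/2$. This does not invalidate the proposition as stated (the UCB claim only requires exhibiting \emph{some} instance with constant first-step suboptimality, and the TS bound is a universal consequence of Theorem~\ref{theorem:1}), but it does mean your instance fails the "same family of problems with $C^\dagger_\beta=\Theta(1)$" standard you set for yourself; note that the paper's own instance, with one pull of arm $2$, has $C^\dagger_\beta=\Theta(N)$ and does not achieve that standard either.
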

\begin{proof}
    Please refer to Appendix~\ref{proof:prop1} for detailed proof.
\end{proof}
\begin{table*}[t]
    \centering
    \begin{tabular}{c|c|cccc|c}
        \toprule
         Task & Type & ODT & PEX & Cal-QL & BOORL & RLPD\\
         \midrule
        \multirow{5}{*}{Hopper} & random & 10.1$\rightarrow$30.8 & 7.6$\rightarrow$10.1 & 9.3$\rightarrow$11.9 & 8.8$\rightarrow$75.7 & \textbf{84.1}\\
          & medium & 66.9$\rightarrow$97.5 & 63.8$\rightarrow$78.6 & 75.8$\rightarrow$100.6 & 61.9$\rightarrow$\textbf{109.8} & 107.3 \\
          & medium-replay & 86.6$\rightarrow$88.8 & 89.8$\rightarrow$103.3 & 95.4$\rightarrow$106.1 & 75.5$\rightarrow$\textbf{111.1} & 58.9\\
          & medium-expert & 107.6$\rightarrow$\textbf{111.1} & 91.5$\rightarrow$107.8 & 85.0$\rightarrow$\textbf{111.6} & 89.0$\rightarrow$103.4 & 95.2\\
          & expert & 108.1$\rightarrow$\textbf{110.7} & 102.4$\rightarrow$96.6 & 94.8$\rightarrow$\textbf{110.3} & 111.5$\rightarrow$\textbf{109.2} & 100.4\\
          \midrule
        \multirow{5}{*}{Walker2d} & random & 4.6$\rightarrow$8.8 & 2.9$\rightarrow$40.7 & 14.8$\rightarrow$17.3 & 4.8$\rightarrow$\textbf{93.6} & 76.4\\
          & medium & 72.1$\rightarrow$76.7 & 79.8$\rightarrow$94.8 & 80.8$\rightarrow$89.6 & 83.6$\rightarrow$\textbf{107.7} & \textbf{108.6}\\
          & medium-replay & 68.9$\rightarrow$76.8 & 73.6$\rightarrow$89.3 & 83.8$\rightarrow$94.5 & 69.1$\rightarrow$\textbf{114.4} & \textbf{115.0}\\
          & medium-expert & 108.1$\rightarrow$108.7 & 109.6$\rightarrow$\textbf{117.9} & 106.8$\rightarrow$111.0 & 110.8$\rightarrow$\textbf{116.2} & \textbf{115.1}\\
          & expert & 108.2$\rightarrow$107.6 & 108.6$\rightarrow$111.9 & 108.8$\rightarrow$109.2 & 110.0$\rightarrow$118.6 & \textbf{119.8}\\
          \midrule
        \multirow{5}{*}{Halfcheetah} &    random & 1.1$\rightarrow$2.2 & 9.6$\rightarrow$61.2 &  22.0$\rightarrow$45.1 & 10.7$\rightarrow$\textbf{97.7} & 63.0\\
          & medium & 42.7$\rightarrow$42.1 & 47.3$\rightarrow$67.8 & 48.0$\rightarrow$72.3 & 47.9$\rightarrow$\textbf{98.7} & 90.5 \\
          & medium-replay & 39.9$\rightarrow$40.4 & 44.1$\rightarrow$55.2 & 46.5$\rightarrow$59.5 & 44.5$\rightarrow$\textbf{91.5} & 87.6 \\
          & medium-expert & 86.8$\rightarrow$94.1 & 86.7$\rightarrow$91.0 & 48.0$\rightarrow$90.2 & 77.7$\rightarrow$\textbf{97.9} & 94.3 \\
          & expert & 87.3$\rightarrow$94.3 & 90.5$\rightarrow$95.5 & 64.5$\rightarrow$92.1 & 97.5$\rightarrow$\textbf{98.4} & 93.2 \\
        \midrule
        \multirow{6}{*}{Antmaze} & umaze & 56.6$\rightarrow$83.5 & 81.6$\rightarrow$\textbf{100.0} & 78.5$\rightarrow$\textbf{100.0} & 81.7$\rightarrow$\textbf{100.0} & 95.6 \\
        & umaze-diverse & 51.4$\rightarrow$78.9 & 74.1$\rightarrow$78.7 & 75.1$\rightarrow$\textbf{96.0} & 78.3$\rightarrow$\textbf{99.4} & 93.3\\
        & medium-play & 0.0$\rightarrow$0.0 & 68.6$\rightarrow$90.8 & 59.4$\rightarrow$91.9 & 50.6$\rightarrow$\textbf{100.0} & 96.3\\
        & medium-diverse & 0.0$\rightarrow$0.0 & 61.3$\rightarrow$86.4 & 69.5$\rightarrow$84.1 & 61.7$\rightarrow$86.6 & \textbf{94.6}\\
        & large-play & 0.0$\rightarrow$0.0 & 49.9$\rightarrow$68.2 & 24.2$\rightarrow$55.9 & 61.0$\rightarrow$75.8 & \textbf{81.6}\\
        & large-diverse & 0.0$\rightarrow$0.0 & 45.7$\rightarrow$70.0 & 35.2$\rightarrow$69.1 & 50.9$\rightarrow$79.9 & \textbf{86.8}\\
        \midrule
        $\delta_{\rm sum}$~(0.2M) & & 146.0 & 326.8 & 392.0 & \textbf{698.1} & -\\
        \bottomrule
    \end{tabular}
    \caption{Normalized score before and after the online fine-tuning with five random seeds. Each method is pre-trained with 1M steps and then fine-tuned with 0.2M online steps.
    Since offline algorithms' performance differs, we focus on performance improvement within a limited time, $\delta_{\rm sum}$~(0.2M), which denotes the sum of performance improvement on all tasks within 0.2M steps.
    RLPD starts from a random initialization rather than the offline pretrain policy, so we only compare the final performance with RLPD rather than the relative performance improvement.
    }
    \label{tab: d4rl}
\end{table*}

\begin{proposition}
    \label{prop:2}
    Under the same assumption of Theorem~\ref{theorem:1}, the regret of LCB can be unbounded (i.e. $\tilde{O}(T)$), while the regret of Thompson sampling satisfies
    \$
        \text{\rm BayesRegret}(N, T,\pi) \leq  2c\sqrt{d^3H^3T\iota} = \tilde{\cO}(\sqrt{d^3H^3T}),
    \$
    where $\iota$ is a logarithmic factor and $c$ is an absolute constant.
\end{proposition}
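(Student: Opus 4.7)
The plan is to derive the Thompson sampling upper bound as a one-line corollary of Theorem~\ref{theorem:1} via elementary calculus, and to exhibit a simple instance within the linear MDP class showing that LCB can incur linear regret once the offline dataset lacks coverage of the optimal action.

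For the Thompson sampling side, I start from the inequality of Theorem~\ref{theorem:1},
$$
\text{BayesRegret}(N,T,\pi) \leq c\sqrt{d^3H^3\iota}\left(\sqrt{\tfrac{N}{C^\dagger_\beta}+T}-\sqrt{\tfrac{N}{C^\dagger_\beta}}\right),
$$
and apply the elementary inequality $\sqrt{a+b}-\sqrt{a}\leq \sqrt{b}$ valid for all $a,b\geq 0$ (since $(\sqrt{a}+\sqrt{b})^2\geq a+b$). Taking $a=N/C^\dagger_\beta$ and $b=T$ immediately yields $\text{BayesRegret}(N,T,\pi)\leq c\sqrt{d^3H^3 T\iota}\leq 2c\sqrt{d^3H^3 T\iota}$, which is exactly the claimed bound; the extra factor of $2$ is harmless slack kept for uniform notation with Proposition~\ref{prop:1}. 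Note that this argument is completely insensitive to $N$ and $C^\dagger_\beta$, reflecting the fact that Thompson sampling can never be hurt by the presence of offline data in the Bayesian sense.

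For the LCB failure claim, the plan is to construct a one-step, two-action linear MDP (i.e., a two-armed bandit) within Definition~\ref{assumption:linear}, using one-hot features $\phi(s,a_i)=e_i\in\mathbb{R}^2$. I set $r(a_1)=1/2$ and $r(a_2)=1$ so that $a_2$ is optimal with gap $1/2$, and let the offline dataset consist of $N$ samples drawn only from $a_1$, so that $\Sigma_{\rho_h}$ is singular along the direction corresponding to $a_2$ and the coverage coefficient for the optimal policy is infinite. A standard LCB estimator then returns $\hat Q_{\text{LCB}}(a_1)\to 1/2$ as $N$ grows while holding $\hat Q_{\text{LCB}}(a_2)$ at the pessimistic default (e.g.\ $0$), so LCB deterministically selects $a_1$ on every one of the $T$ online episodes. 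Each such episode incurs suboptimality $1/2$, giving cumulative regret $\Omega(T)=\tilde{\cO}(T)$. Placing a prior with constant mass on this bandit instance preserves the linear lower bound under the Bayesian expectation.

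The main obstacle is not the algebra but making the LCB counterexample tight: I must check that the instance genuinely lies in the linear MDP class (immediate with one-hot features and $d=2$) and that the failure mode is a generic feature of pessimistic value-targeting rather than an artifact of a particular LCB tuning, which follows because any estimator using an $N$-independent pessimistic default on uncovered directions exhibits the same behavior. Once these are in place the two halves of Proposition~\ref{prop:2} fit together without further work.
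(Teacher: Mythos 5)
Your proposal is correct and follows essentially the same route as the paper: the Thompson-sampling bound is read off from Theorem~\ref{theorem:1} via the elementary inequality $\sqrt{a+b}-\sqrt{a}\le\sqrt{b}$ (the paper equivalently sets $N=0$, which is the worst case since the bound is decreasing in $N$), and the LCB failure is exhibited by a two-armed bandit in which the optimal arm is essentially uncovered offline and LCB, gaining no information from repeatedly pulling the covered arm, never explores it. The only cosmetic difference is that the paper's Lemma~\ref{lemma:lcb_failure} gives the optimal arm a single offline sample and a stochastic reward (so the coverage coefficient remains finite), whereas you give it zero offline samples, making $C^\dagger_\beta$ infinite; both constructions yield the same $\Omega(T)$ conclusion.
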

\begin{proof}
    Please refer to Appendix~\ref{proof:prop2} for detailed proof.
\end{proof}

The performance bound in Theorem~\ref{theorem:1} incorporates the number of online interactions $T$ and the offline dataset size $N$, demonstrating that both elements play a key role in minimizing the average regret. This contrasts significantly with previous theoretical findings. \citet{xie2021policy} analyzes the benefit of online exploration when offline data only has partial coverage. At the same time, our result shows that we can benefit from online exploration in Bayesian settings, even if offline data has full coverage. \citet{song2022hybrid} proposes Hy-Q, which allows a weaker notion of coverage by combining a certain proportion of online and offline data, while our result shows that Bayesian strategies achieve the best of both worlds regardless of data proportions.
% Different from previous result, our result shows that using offline data with full coverage can improve over the $O(\sqrt{T})$ online regret bound by adopting a Bayesian method.
% (e.g., \citet{xie2021policy, song2022hybrid}), which suggested that online interaction only proves beneficial when the offline data lacks sufficient coverage. 
% In contrast, our results indicate that combining offline and online RL can improve the agent's performance significantly, even with sufficient coverage of the offline dataset. 

% From proposition 1 and proposition 2, as well as the lower bound results in [], we may assert that Algorithm 1 enjoys both small suboptimality at start and global convergence guarantees.

To further verify our theoretical findings, we conducted experiments on didactic Bernoulli bandit, and the result is shown in Figure~\ref{fig: theory}. The performance of the TS agent aligns well with our predictions in 
Equation~\eqref{eq:main_result} outperforms both pure online and offline agents. We also compare TS with a naive approach where we gradually switch from pessimism to optimism by interpolating the weight on the bonus term. This naive agent performs badly regardless of the choice of the interpolation scheme. This result highlights the importance of adopting a realistic probability match strategy rather than committing to optimism or pessimism. Please refer to Appendix~\ref{appendix: multi-arm bandit} for more results and details on the didactic bandits.
% Experiment results align well with our theoretical predictions in Equation~\ref{eq:main_result}. 

% \subsection{From Bayesian Regret to Frequentist Regret}
% \label{sec: freq}
% While Bayesian regret and the frequentiest regret are the same asymptotically~\citep{russo2014learning}, it is desirable to have a frequentist guarantee that does not rely on a pre-defined prior. To address this challenge, we propose to leverage the adaptive posterior sampling approach~\citep{xu2023bayesian} that compute and adapt to the worst possible prior at each step. This allow us to transfer our guarantee on Bayesian regret to a frequentist one seemlessly. \textcolor{blue}{Please refer to Appendix~\ref{append:freq} for a detailed description of the algorithm and its theoretical guarantees.}

\begin{figure*}[t]
    \centering
    \subfigure{\includegraphics[width=0.245\linewidth]{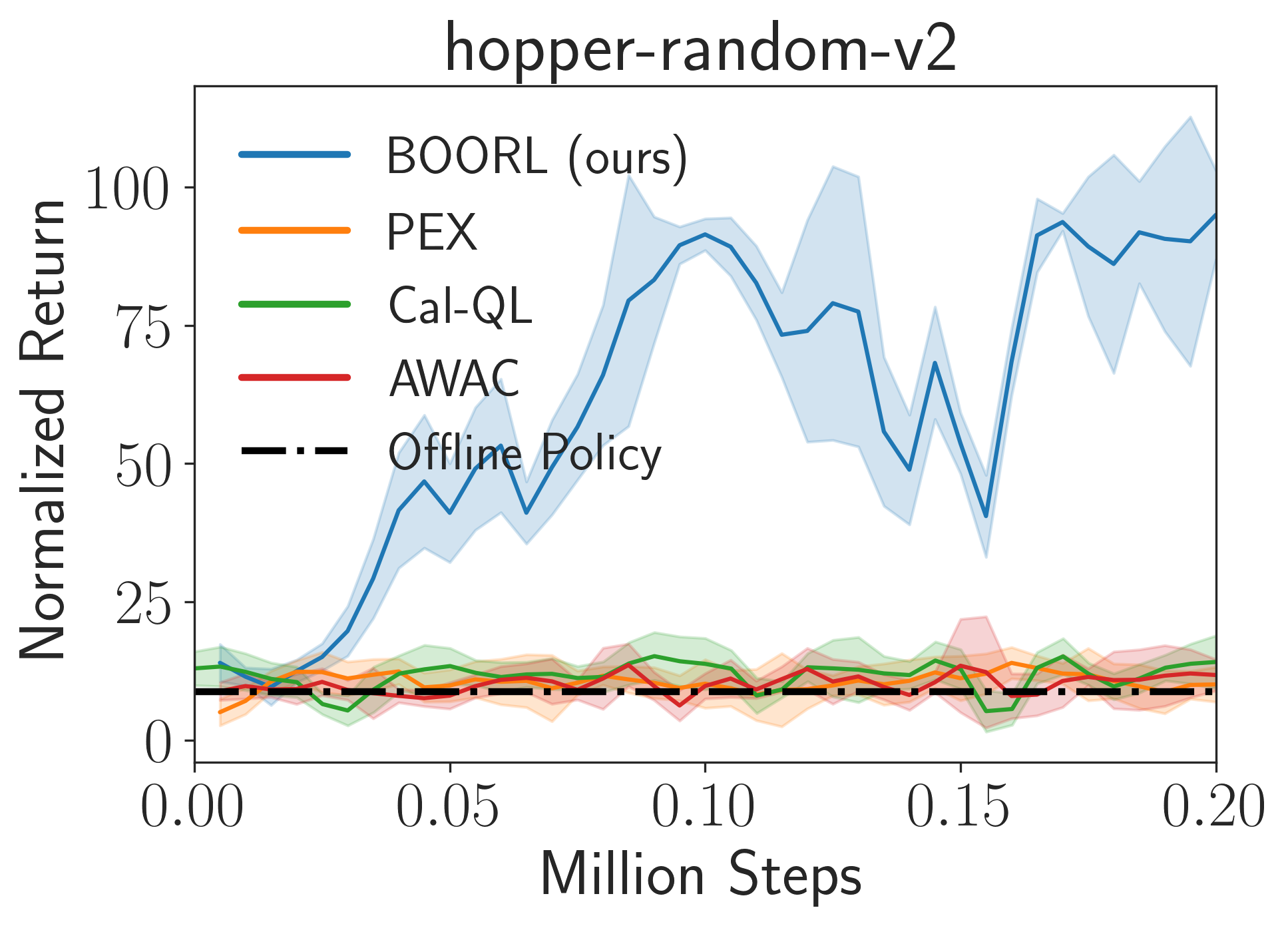}}
    \hfill
    \subfigure{\includegraphics[width=0.245\linewidth]{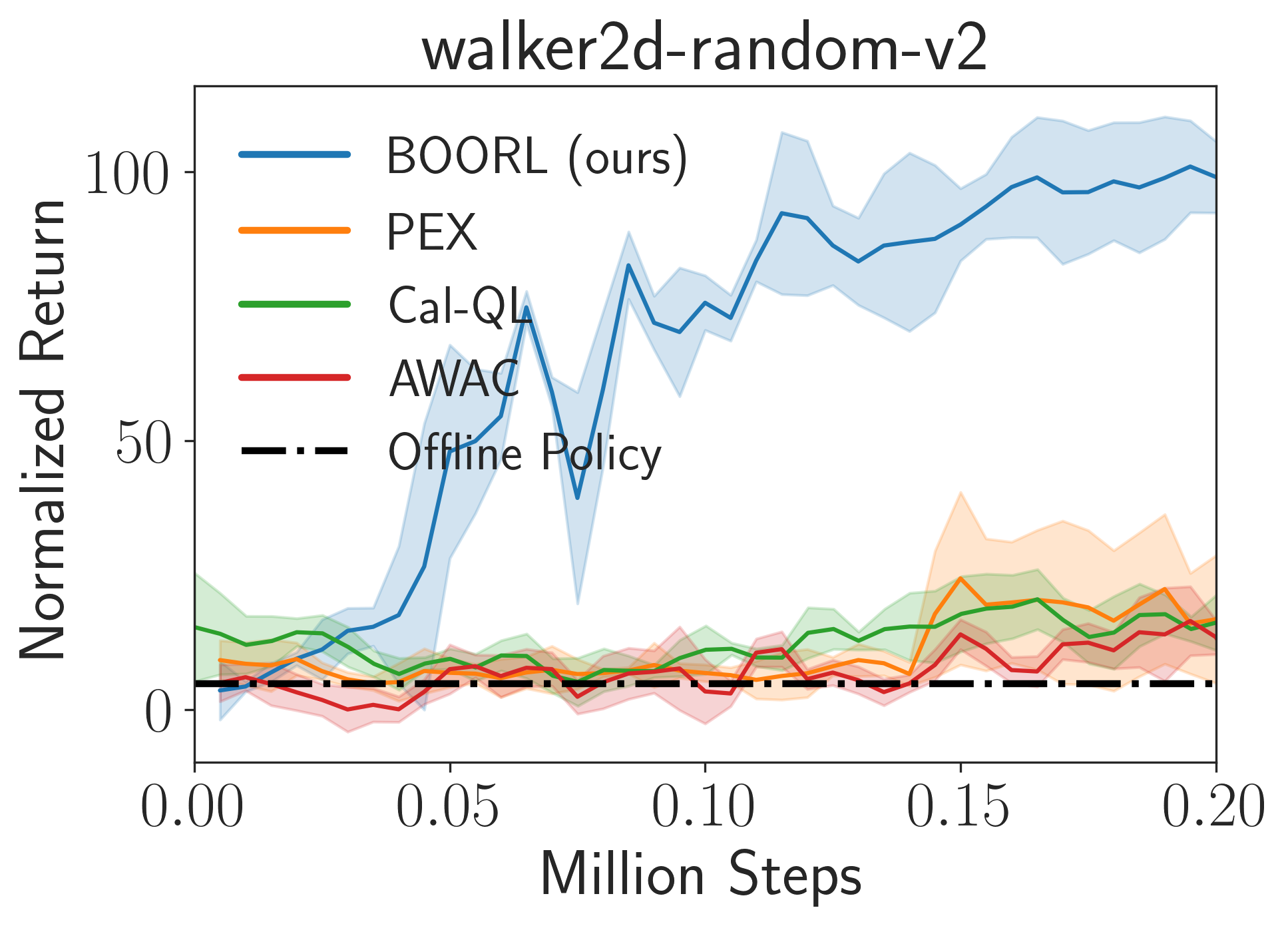}}
    \hfill
    \subfigure{\includegraphics[width=0.245\linewidth]{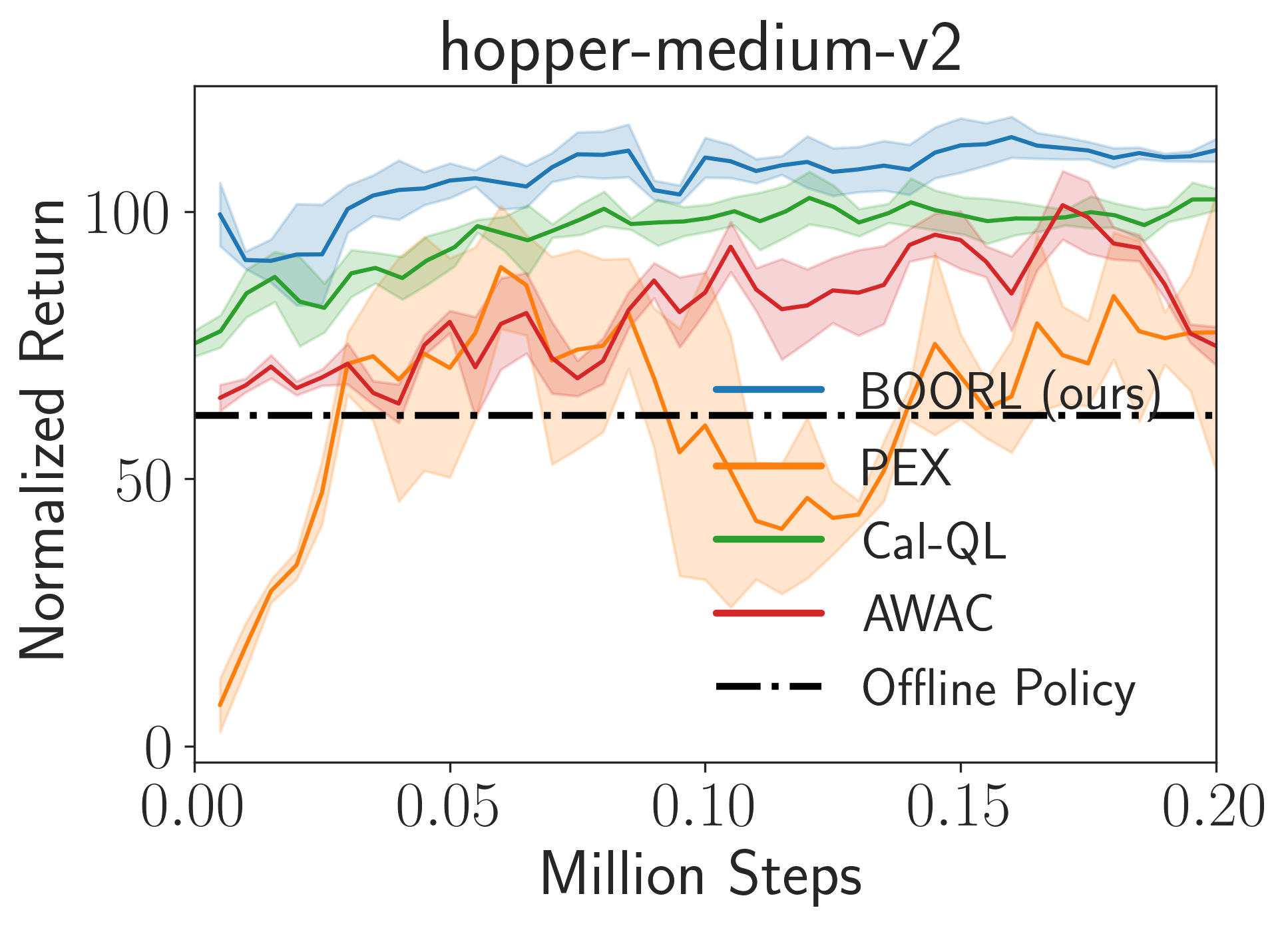}}
    \hfill
    \subfigure{\includegraphics[width=0.245\linewidth]{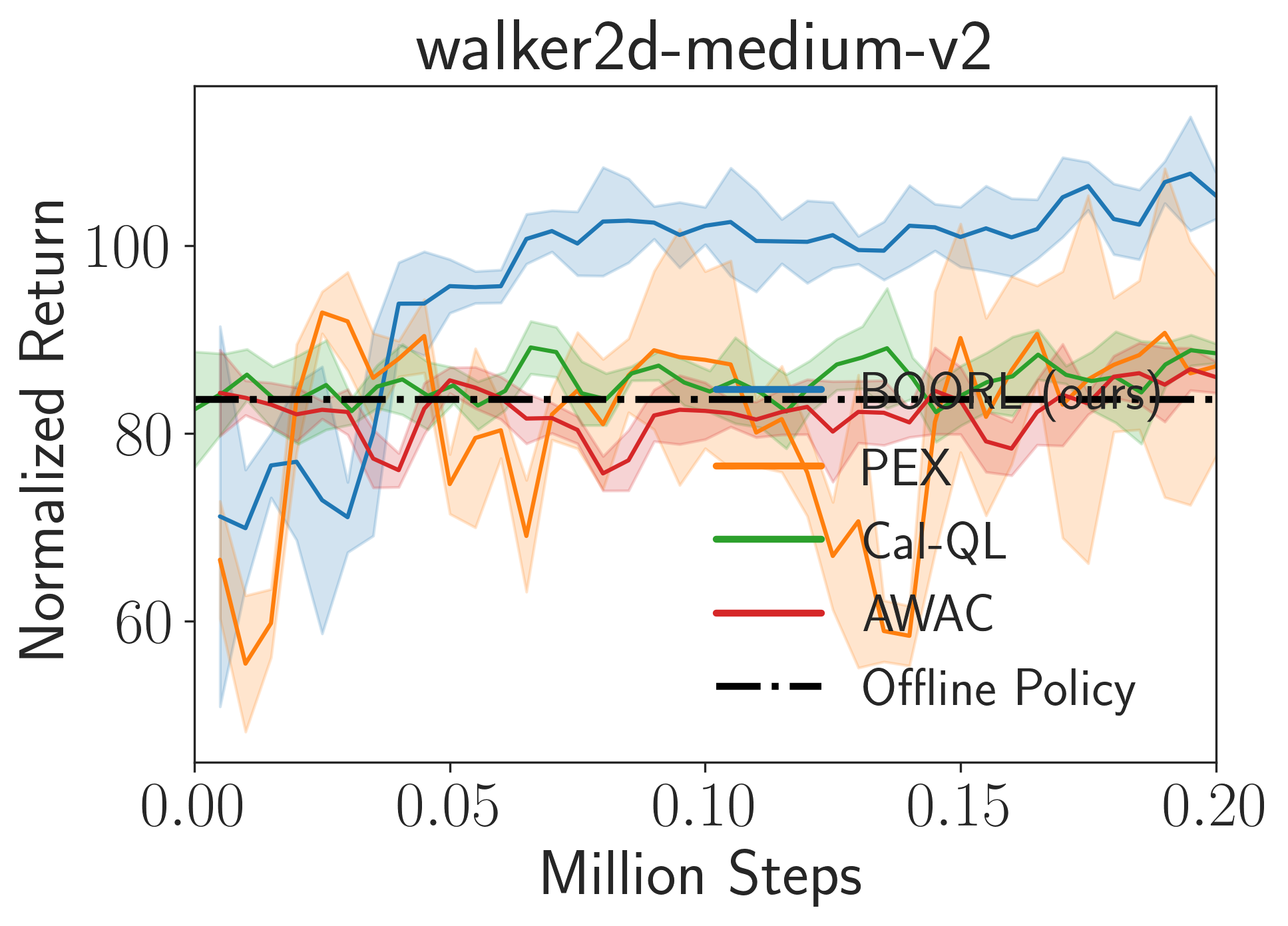}}
    
    \subfigure{\includegraphics[width=0.245\linewidth]{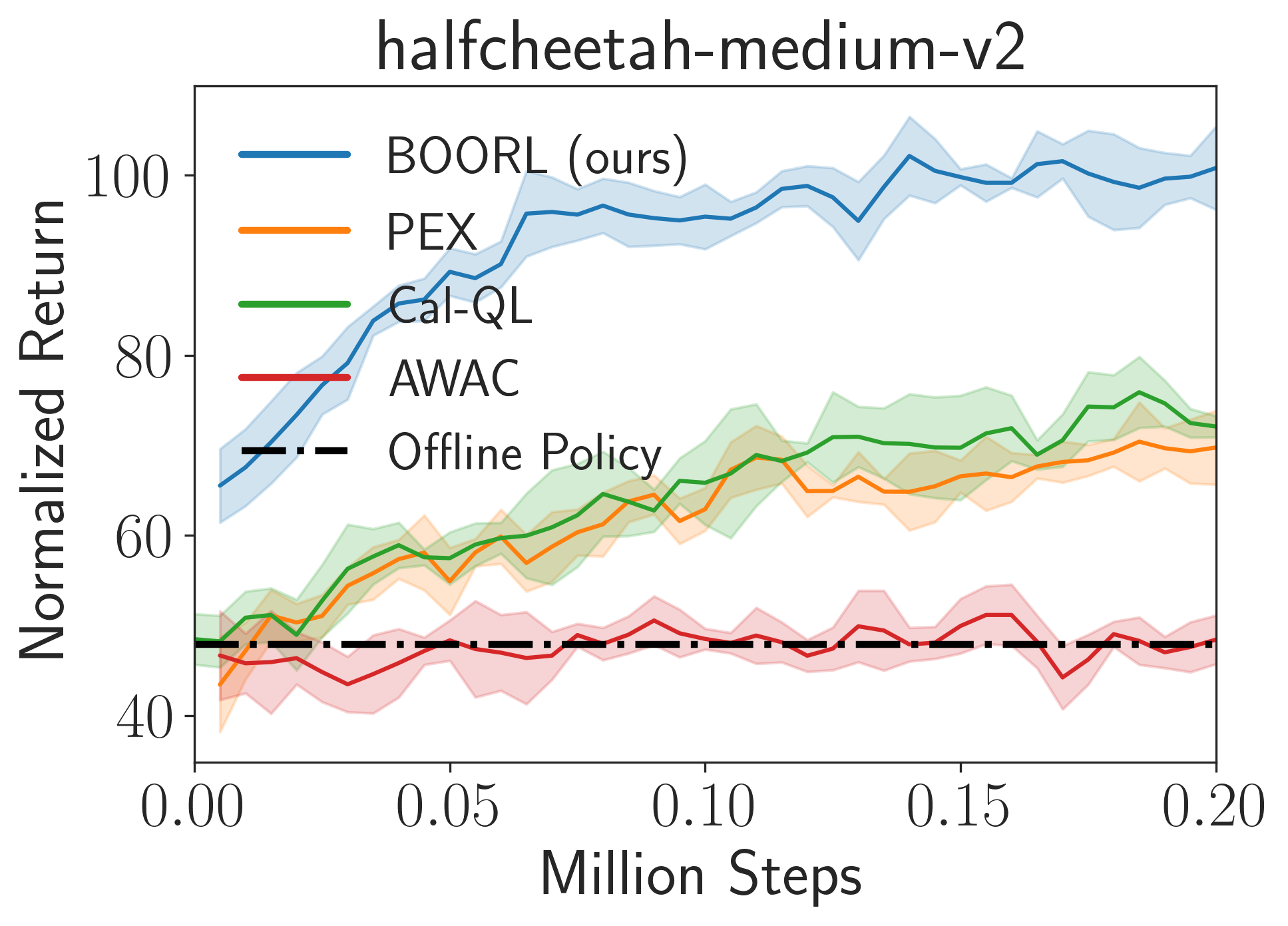}}
    \hfill
    \subfigure{\includegraphics[width=0.245\linewidth]{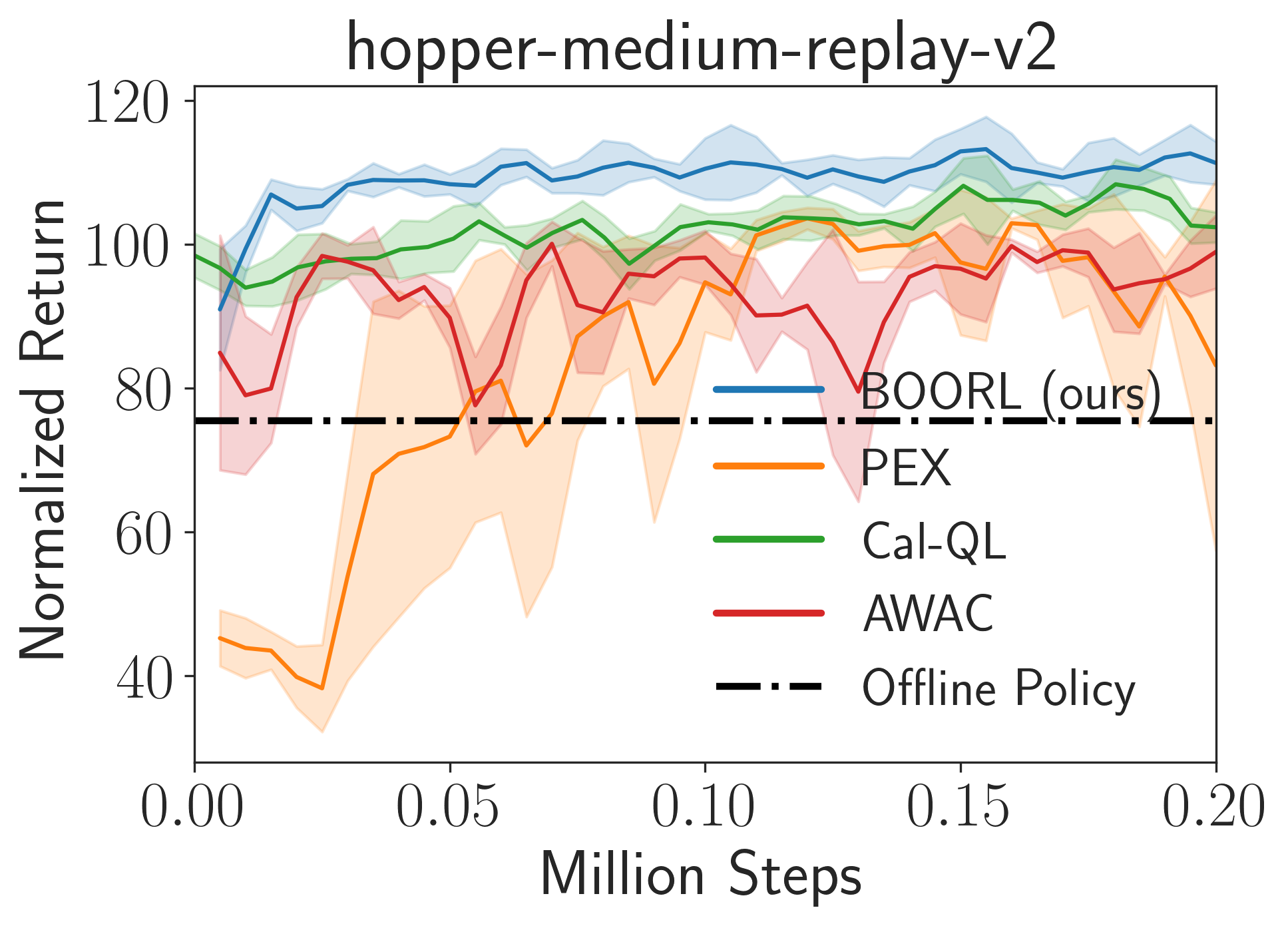}}
    \hfill
    \subfigure{\includegraphics[width=0.245\linewidth]{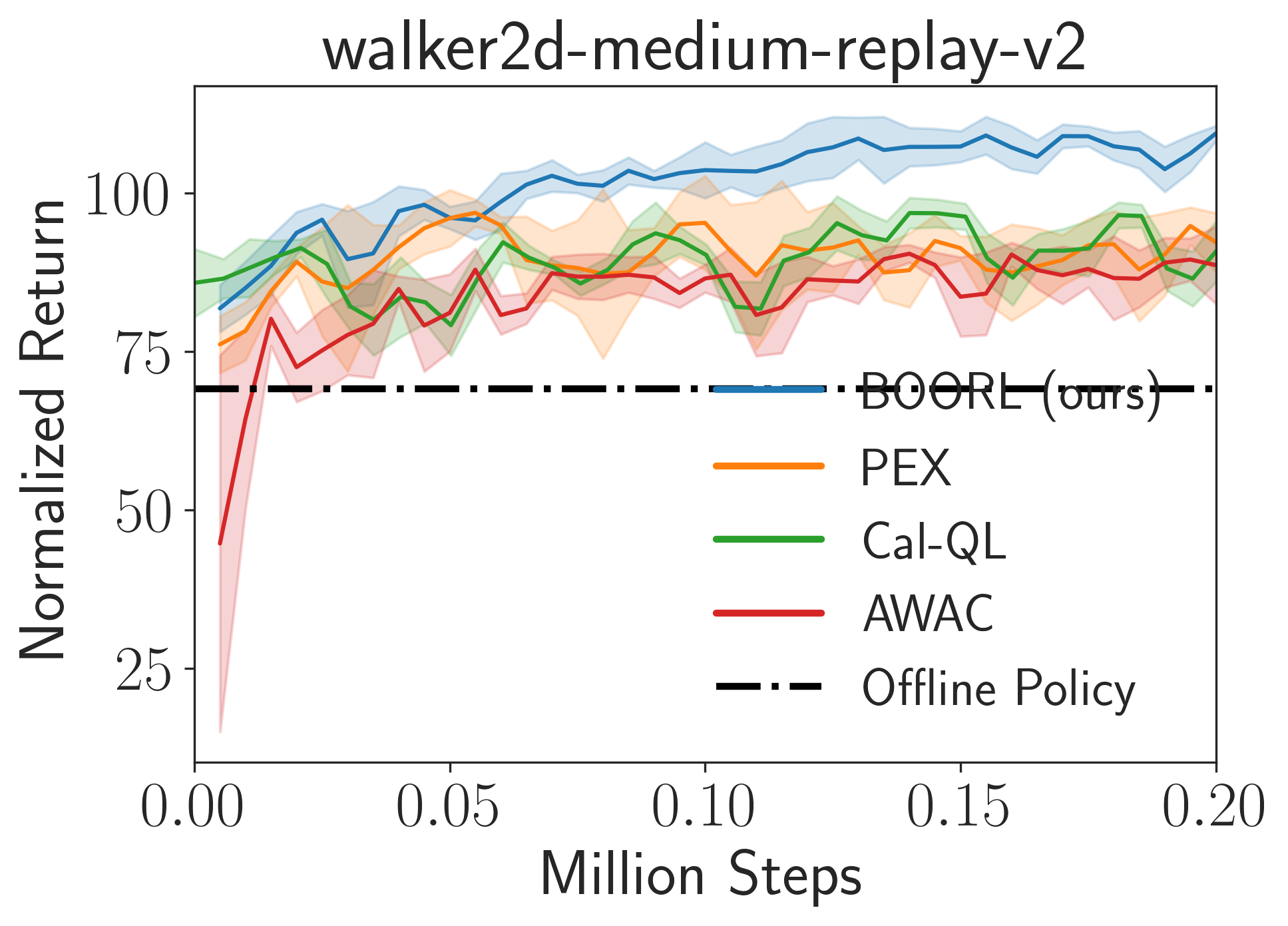}}
    \hfill
    \subfigure{\includegraphics[width=0.245\linewidth]{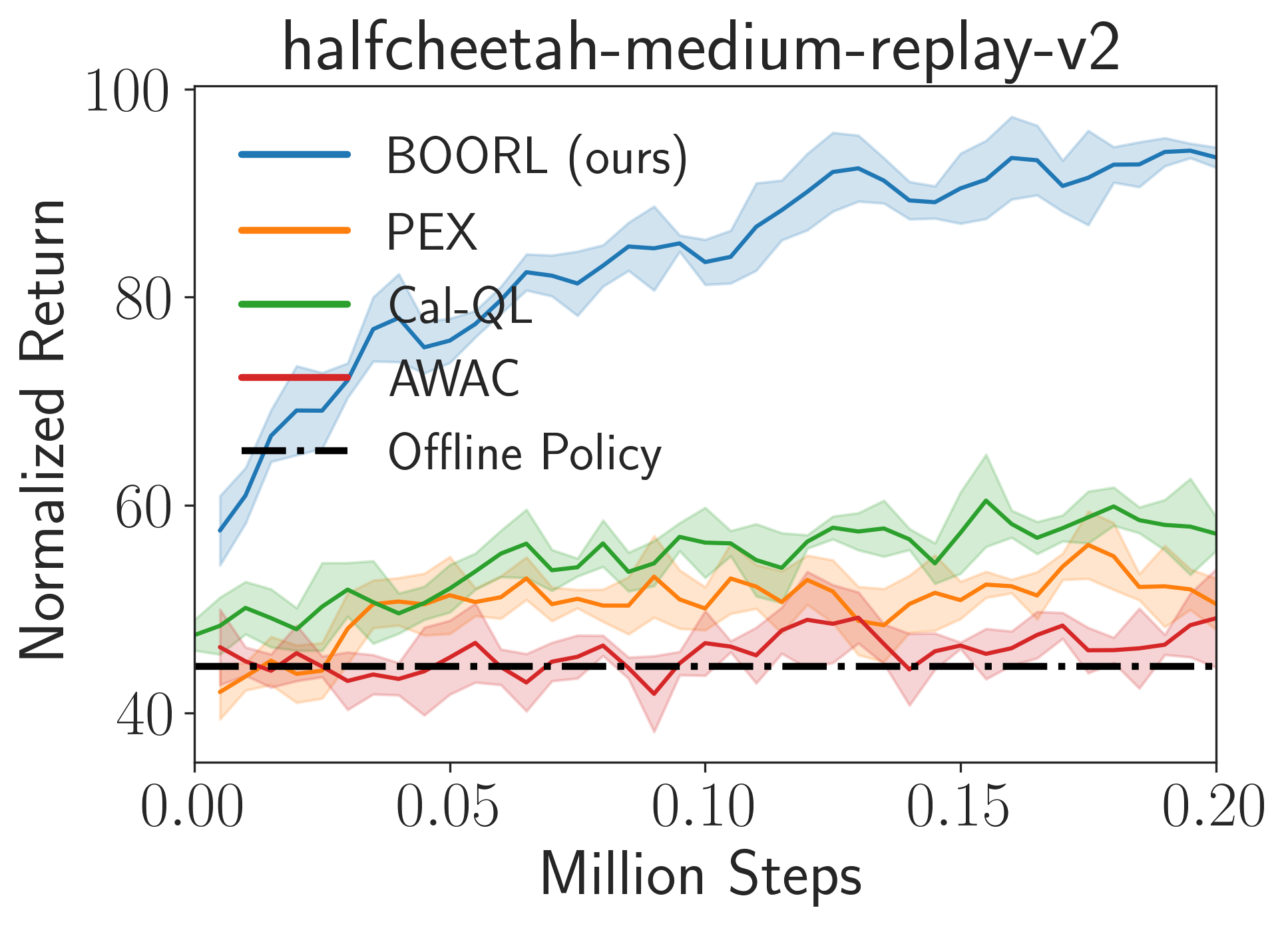}}
    
    \subfigure{\includegraphics[width=0.245\linewidth]{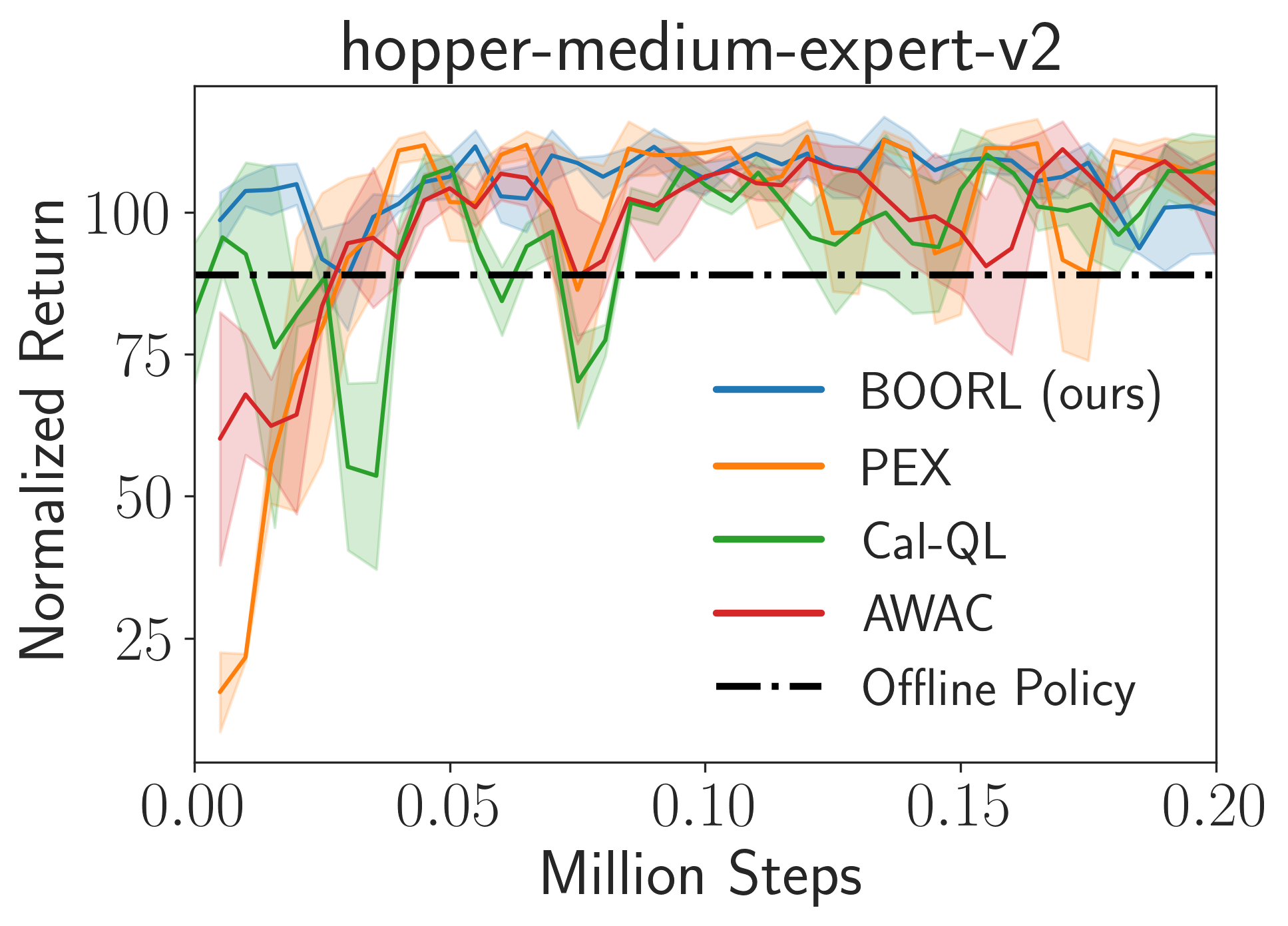}}
    \hfill
    \subfigure{\includegraphics[width=0.245\linewidth]{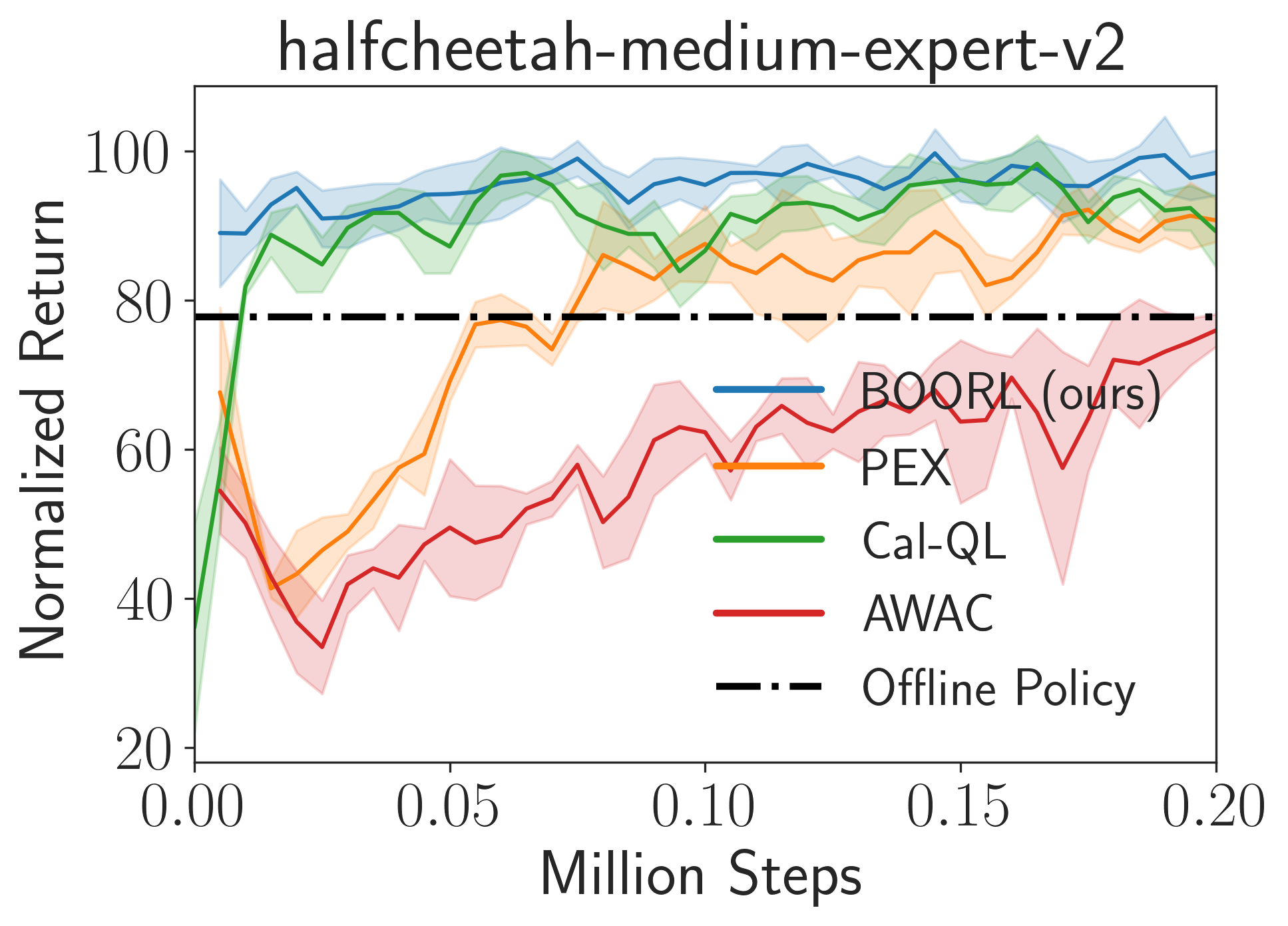}}
    \hfill
    \subfigure{\includegraphics[width=0.245\linewidth]{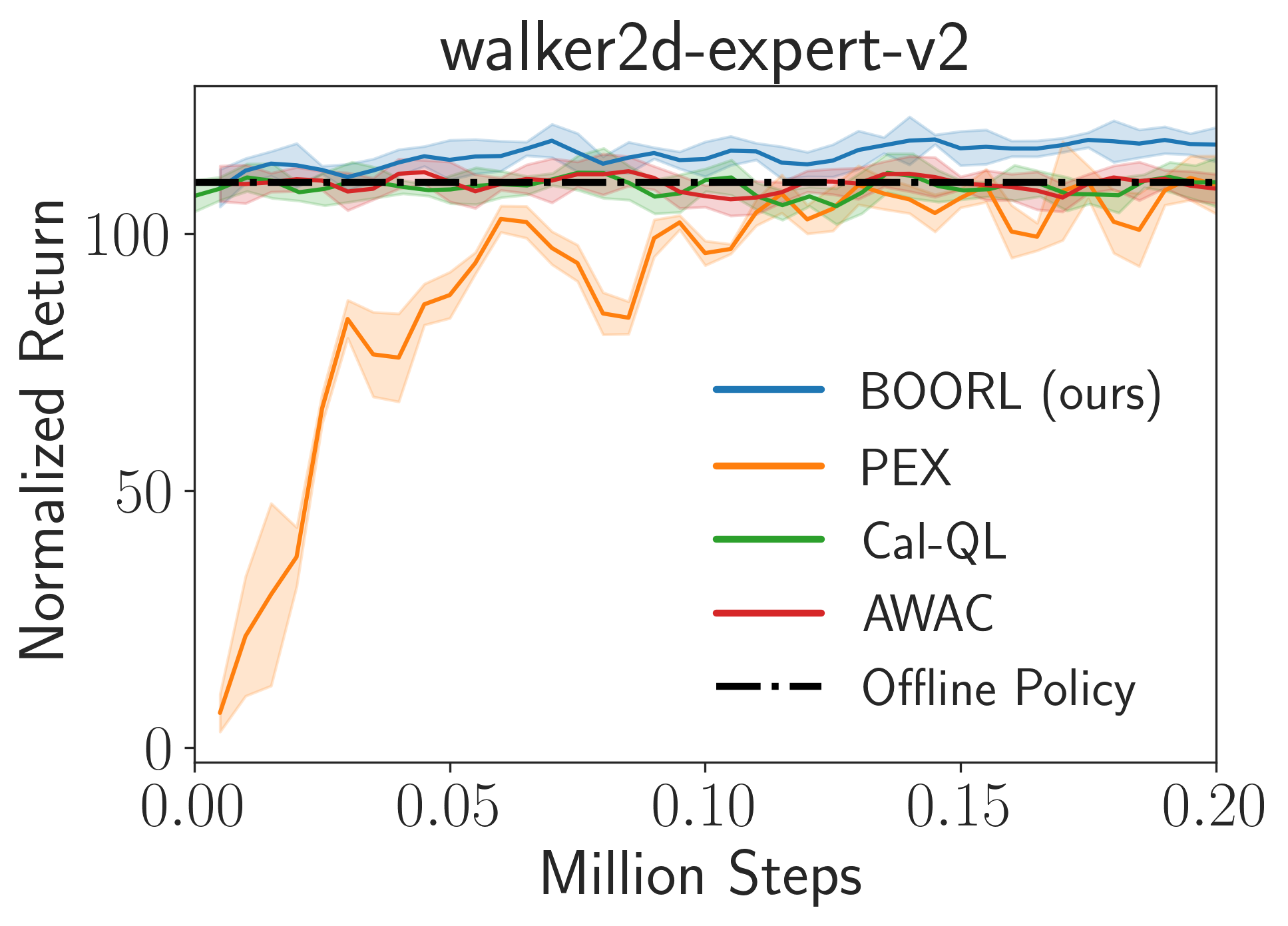}}
    \hfill
    \subfigure{\includegraphics[width=0.245\linewidth]{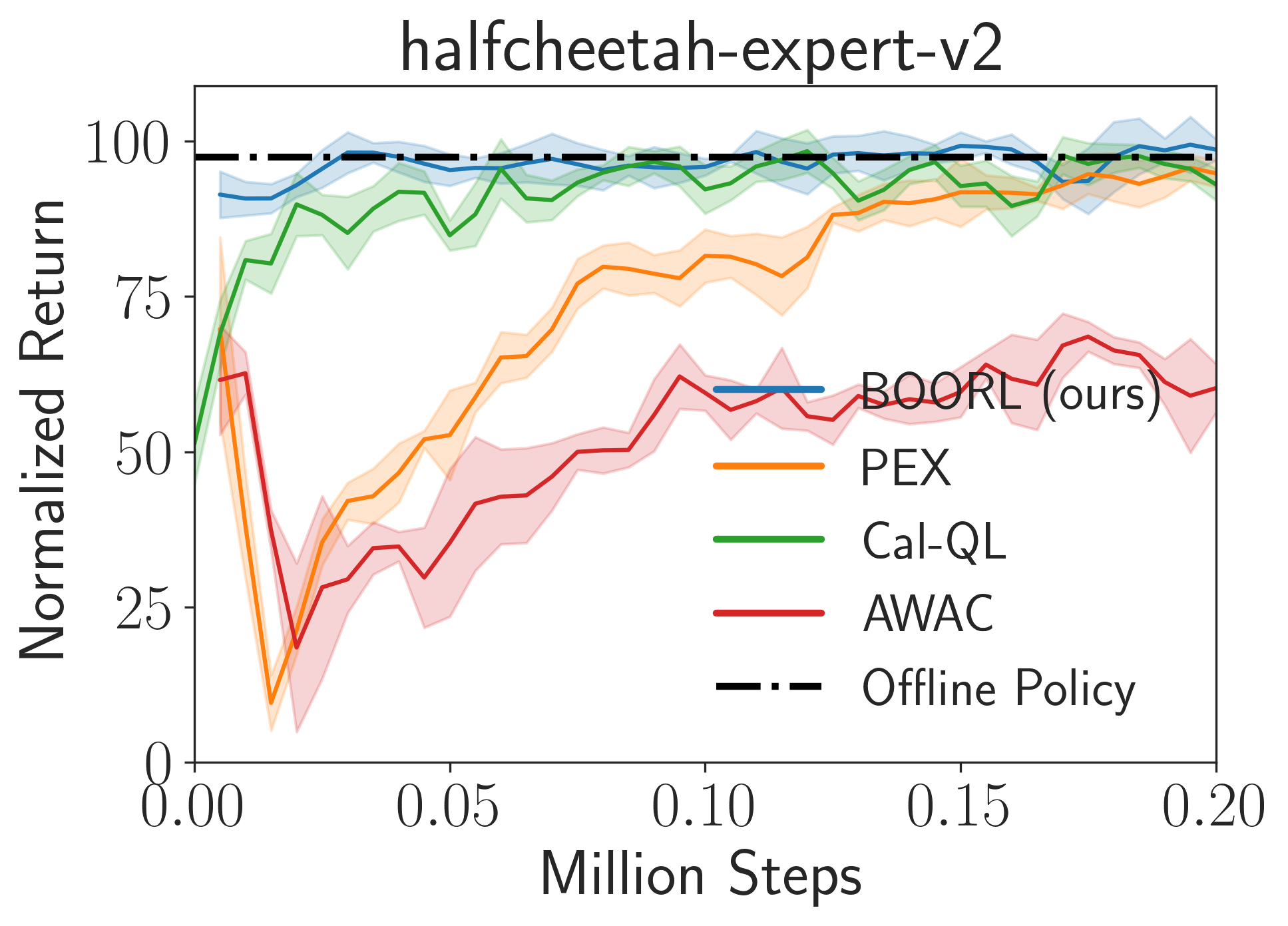}}
    \caption{Experiments between several baselines and BOORL within 0.2M time steps. The reference line is the performance of TD3+BC. The experimental results are averaged with five random seeds.
    Please refer to Appendix~\ref{appendix: pex} for more results.
    }
    \label{fig: new baseline}
\end{figure*}

\begin{algorithm}[h]
\caption{BOORL, Offline Phase}
\label{alg: offline rl}
\begin{algorithmic}[1]
    \STATE {\bf Require}: Ensemble size $N$, offline dataset $\cD^{\rm off}$, masking distribution $M$
    \STATE Initialize parameters of $N$ independent TD3+BC agents $\{Q_{\theta_i}, \pi_{\phi_i}\}_{i=1}^{N}$
    \FOR{$i=1, \cdots, N$}
    \STATE \textcolor{purple}{Sample bootstrap mask $m \sim M$~(e.g., Bernoulli distribution)}
    \STATE \textcolor{purple}{Add $m$ to $\cD^{\rm off}$ as $\cD_{i}^{\rm off}$}
    \FOR{each training iteration}
    \STATE Sample a random minibatch $\{\tau_j\}_{j=1}^{B}\sim \cD_{i}^{\rm off}$
    \STATE Calculate $L_{\rm critic}^{\rm offline}(\theta_i)$ and update $\theta_i$
    \STATE Calculate $L_{\rm actor}^{\rm offline}(\phi_i)$ and update $\phi_i$
    \ENDFOR
    \ENDFOR\\
    \STATE \textbf{Return} $ \{Q_{\theta_i}, \pi_{\phi_{i}}\}_{i=1}^{N}$
\end{algorithmic}
\end{algorithm}

\begin{algorithm}[h]
    \caption{BOORL, Online Phase}
    \label{alg: online rl}
    \begin{algorithmic}[1]
        \STATE {\bf Require}: $ \{Q_{\theta_i}, \pi_{\phi_{i}}\}_{i=1}^{N}$, offline dataset $\cD^{\rm off}$
        \STATE Initialize empty online replay buffer $\cD^{\rm on}$
        \FOR{each iteration}
        \STATE Obtain initial state from environment $s_0$
        \FOR{step $t=1, \cdots, T$}
        \STATE \textcolor{purple}{Construct a categorical distribution based on $$p_i= \frac{\exp(Q_{\theta_i}(s_t,\pi_{\phi_i}(s_t)))}{\sum_j\exp(Q_{\theta_j}(s_t,\pi_{\phi_j}(s_t)))}$$}
        % \STATE \textcolor{purple}{Pick an policy to act $a_t \sim \pi_{\phi_n}(\cdot\mid s_t)$ using $n\sim\text{Uniform}\{1, \cdots, N\}$}
        \STATE \textcolor{purple}{Pick an policy to act $a_t \sim \pi_{\phi_n}(\cdot\mid s_t)$ by sampling index $n$ based on $p=(p_1, p_2, ..., p_N)$}
        \STATE Store transition $(s_t, a_t, r_t, s_{t+1})$ in $\cD^{\rm on}$
        \STATE Sample minibatch $b$ from $\cD^{\rm off}$ and $\cD^{\rm on}$
        % \FOR{\textcolor{purple}{$g=1,\cdots,G$}}
        % \STATE \textcolor{purple}{Sample minibatch $b_{\rm on}$ of $\frac{B}{2}$ from $\cD^{\rm on}$}
        % \STATE \textcolor{purple}{Sample minibatch $b_{\rm off}$ of $\frac{B}{2}$ from $\cD^{\rm off}$}
        % \STATE \textcolor{purple}{Combine $b_{\rm on}$ and $b_{\rm off}$ to form batch $b$ of size $B$}
        \FOR{$i=1,\cdots,N$}
        \STATE With $b$, calculate $L_{\rm critic}^{\rm online}(\theta_{i})$ and update $\theta_{i}$
        \ENDFOR
        % \ENDFOR
        \FOR{$i=1,\cdots,N$}
        \STATE With $b$, calculate $L_{\rm actor}^{\rm online}(\phi_{i})$ and update $\phi_{i}$
        \ENDFOR
        \ENDFOR
        \ENDFOR
    \end{algorithmic}
\end{algorithm}

\section{Method}
\label{sec: algorithm}
% Offline to online Thompson Sampling.

% \begin{figure}[t]
%     \centering\subfigure{\includegraphics[scale=0.5]{}}
%     \caption{Intuition of using Thompson Sampling for online fine-tuning.
%     The erroneous peaks cented on suboptimal actions can arise when we update the offline pre-trained Q-function with online exploration data.
%     UCB-based algorithms~(e.g., TD3) in favor of erroneous unseen actions, resulting in deterioration of the pre-trained policy.
%     Differently, Thompson Sampling still select high-reward actions since it samples action from the posterior distribution of the Q-function.
%     }
% \end{figure}

Designing a posterior sampling-based offline-to-online agent faces two main challenges. The first is to ensure pessimism while obtaining a posterior distribution during offline pretraining. Naively applying posterior sampling during the offline phase loses frequentist (i.e., worst-case) guarantees~\citep{anonymous2024posterior} and can perform poorly in practice. The second challenge is to update the posterior smoothly despite the distributional shift between offline and online data. Otherwise, it may lead to unstable performance improvements. 

 To address the above challenges, we propose a simple yet effective Bayesian Offline-to-Online Reinforcement Learning~(BOORL) method to balance offline policy reuse and online exploration.

Our method consists of two phases. We combine the bootstrapping mechanism and pessimistic offline algorithms during the offline phase to obtain a posterior belief over optimal policies while ensuring offline performance. During the online phase, we generate the posterior belief over policies based on softmax $Q$-values, and we act by sampling from such posterior. The observation from the environment is then added on the fly to each bootstrapped dataset from the offline phase to update the posterior. We reweight the newly added online data to guarantee a smooth posterior update. 
% Appendix~\ref{appendix: alg} shows a formal description of the procedure.

% \subsection{Bayesian-based Offline-to-Online}
\paragraph{Offline Phase.}
% We adopt the bootstrapped mechanism as the natural adaptation of the Thompson Sampling heuristic to offline-to-online RL.
In the offline phase, we resample the dataset by generating a set of masks $\{M_\ell\}_{\ell=1}^L$. Each mask $M_\ell \in \{0,1\}^N$ is a binary vector representing if each data point in $\cD^{\rm off}$ is selected in the dataset $\cD_\ell^{\rm off}$. We adopt a standard bootstrapping method, and each $M_\ell$ is sampled independently from the Bernoulli distribution $\text{Ber}(p)$. These masks are stored in the memory replay buffer to represent $\mathcal{D}_{\ell}^{\rm off}$ and we train $L$ policy networks and corresponding $Q$-value networks $\{\pi_{\phi_\ell}, Q_{\theta_\ell}\}_{\ell=1}^{L}$ concerning each resampled dataset.
% In the offline phase, we modify the offline RL algorithm to approximate a distribution over policy via the bootstrapping.
% Specifically, we randomly initialize $L$ policy networks and corresponding $Q$-value networks $\{\pi_{\phi_\ell}, Q_{\theta_\ell}\}_{\ell=1}^{N}$.
% We use the independent mask $m_1, m_2, \cdots, m_N \in M$ for each policy to implement an offline bootstrap~(e.g., $M$ can be the Bernoulli distribution $\text{Ber}(p)$, where $p$ is the mask ratio).
% These masks are stored in the memory replay buffer $\mathcal{D}_{i}^{\rm off}$ and identify which policies are trained on which data.
% The usage of mask can ensure that each policy uses its own dataset, which helps diversify the policies and reflects the posterior properly.
% In order to keep track of which data belongs to which bootstrap policy, we store mask flags $m_1, m_2, \cdots, m_N \in \{0, 1\}$ from the masking distribution $M$ indicating which policy are privy to which data.
Next, each one of these offline policies is trained against its own pessimistic $Q$-value network and bootstrapped dataset with the offline RL loss~(e.g., TD3+BC~\citep{fujimoto2021minimalist}):

\begin{align}
    \cL_{\rm critic}(\theta_\ell) = \mathbb{E}_{(s,a,r,s',m_\ell)\sim\mathcal{D}_{\ell}^{\rm off}}\left[(r+\gamma Q_{\theta'_{\ell}}(s', \tilde{a})\right. \notag \\
    \left. -Q_{\theta_{\ell}}(s, a))^2*M_\ell\right], \\
    \cL_{\rm actor}(\phi_\ell) = -\mathbb{E}_{(s,a,m_\ell)\sim\mathcal{D}_{\ell}^{\rm off}}\left[(\lambda Q_{\theta_{\ell}}(s,\pi_{\phi_\ell}(s)) 
    \right.\notag\\
    \left. - (\pi_{\phi_\ell}(s)-a)^2 )*M_\ell\right],
\end{align}
where $\tilde{a}=\pi_{\phi'_\ell}(s')+\epsilon$ is the target policy smoothing regularization and $\lambda$ is the hyper-parameter for behavior cloning. $\theta'_{\ell}$ and $\phi'_{\ell}$ are the parameters for the Q-value and policy target network, respectively. 

\paragraph{Online Phase.}
% As for the online phase, we first load the offline pre-trained model.
% Next, we construct a categorical distribution based on $Q$-value  according to  
When acting in the online environment, we first compute posterior beliefs over candidate policies $\{\phi_\ell\}_{\ell=1}^{L}$ according to 
\begin{align}
    p_\ell= \frac{\exp(Q_{\theta_\ell}(s,a_\ell))}{\sum_j\exp(Q_{\theta_j}(s,a_j))},
\end{align}
where $a_\ell\sim\pi_{\phi_\ell}(s)$.  
% $\tau$ is a coefficient used to transition from offline to online.
% In our experiments, we linearly increase $\tau$ from -2 to 2 with the time steps $t$, enabling the agent to transition from pessimistic to optimistic.
Then, we sample index $\ell\in\{1, \cdots, L\}$ based on $p=(p_1, p_2, ..., p_L)$ at each time step and following $\pi_{n}$ to collect online data. Each loaded policy and $Q$-value network is continued to be trained with the online RL loss~(e.g., TD3~\citep{fujimoto2018addressing}).
Additionally, to incorporate prior data better, we reweight the online data so that for the training batch, $50\%$ of the sampled data are from the online phase, and the remaining $50\%$ from the offline replay buffer $\cD^{\rm off}$~\citep{ross2012agnostic}. 

% We increase the UTD ratio $G$ to make the Bellman backups perform as sample-efficiently as possible.

Intuitively, the Bayesian-based agent allows a ``robust'' policy improvement since it adopts a mild exploration strategy by sampling from a soft distribution, which reduces ``catastrophic'' attempts from optimistic exploration. The overall algorithm is summarized in Algorithm~\ref{alg: offline rl} and Algorithm~\ref{alg: online rl}.
% in Appendix~\ref{appendix: alg}.
We highlight elements important to our approach in \textcolor{purple}{Purple}.
Our approach only requires minor adjustments to existing methods, making its effectiveness easily implementable and following.

\section{Experiments}
% Experimental resutls

\begin{table*}[htb]
    \centering
    \begin{tabular}{c|c|ccccc}
        \toprule
         Task & Type & BOORL & Bayesian & $\delta$ & Hybrid RL & $\delta$ \\
         \midrule
        \multirow{3}{*}{Hopper} & random & 75.7$\pm$1.3 & 85.4$\pm$3.3 & -9.7 & 75.2$\pm$3.9 & 0.5 \\
          & medium & 109.8$\pm$1.6 & 109.6$\pm$1.5 & 0.2 & 91.4$\pm$1.2 & 18.4 \\
          & medium-replay & 111.1$\pm$0.3 & 110.6$\pm$0.6 & 0.5 & 103.5$\pm$2.7 & 7.6\\
          \midrule
        \multirow{3}{*}{Walker2d} & random & 93.6$\pm$4.4 & 92.4$\pm$4.7 & 1.2 & 15.4$\pm$0.8 & 78.2\\
          & medium & 107.7$\pm$0.5 & 96.5$\pm$3.5 & 11.2 & 86.4$\pm$0.4 & 21.3  \\
          & medium-replay & 114.4$\pm$0.9 & 103.7$\pm$2.1 & 10.7 & 99.7$\pm$2.4 & 14.7\\
          \midrule
        \multirow{3}{*}{Halfcheetah} & random & 97.7$\pm$1.1 & 94.5$\pm$4.2 & 3.2 & 85.2$\pm$0.5 & 12.5 \\
          & medium & 98.7$\pm$0.3 & 97.7$\pm$0.5 & 1.0 & 80.3$\pm$0.2 & 18.4 \\
          & medium-replay & 91.5$\pm$0.9 & 90.5$\pm$0.5 & 1.0 & 84.8$\pm$1.0 & 6.7  \\
        \bottomrule
    \end{tabular}
    \caption{Ablation results on Mujoco tasks with the normalized score metric.}
    \label{tab: ablation ts}
\end{table*}

% We design our experiments to answer the following questions:
% \begin{enumerate}
  %   \item Whether BOORL can effectively solve the dilemma in offline-to-online RL?
    % \item How does BOORL compare with other state-of-the-art approaches for finetuning pre-trained policies?
    % \item Whether BOORL can be effectively combined with other offline RL algorithms?
    % \item How do the individual components of our method influence its overall performance?
% \end{enumerate}

We design our experiments to answer the following questions:
(1) Whether BOORL can effectively solve the dilemma in offline-to-online RL?
(2) How does BOORL compare with other state-of-the-art approaches for finetuning pre-trained policies?
(3) Is BOORL general, and can it be effectively combined with other off-the-shelf offline RL algorithms?

To answer the questions above, we conduct experiments to test our proposed approach on the D4RL benchmark~\citep{fu2020d4rl}, which encompasses a variety of dataset qualities and domains.
We adopt the normalized score metric proposed by the D4RL benchmark~\citep{fu2020d4rl}, averaging over five random seeds with standard deviation.
% Scores roughly range from 0 to 100, where 0 corresponds to a random policy performance, and 100 indicates an expert.
% Please refer to Appendix~\ref{appendix: exp details} for the detailed experimental details.

\paragraph{Answer of Question 1:}
We compare BOORL with the online version of TD3+BC~\citep{fujimoto2021minimalist}, named $\texttt{TD3+BC~(online)}$, as well as directly using TD3 for finetuning, named $\texttt{TD3~(finetune)}$. 
For the fair comparison, these three methods are all pre-trained based on TD3+BC for 1 million time steps and adopt the TD3 algorithm for online learning.
The results in Figure~\ref{appendix fig: TD3 result} in Appendix~\ref{appendix: complete exp} show TD3+BC exhibits safe but slow performance improvement, resulting in worse asymptotic performance.
On the other hand, TD3 suffers from initial performance degradation, especially in narrow distribution datasets~(e.g., expert datasets), while BOORL attains a fast performance improvement with a smaller regret.

\paragraph{Answer of Question 2:}
We compare BOORL with several strong offline-to-online algorithms, inclduing ODT~\citep{zheng2022online}, %Off2On~\citep{lee2022offline}, 
AWAC~\citep{nair2020awac}, PEX~\citep{zhang2023policy}, Cal-QL~\citep{nakamoto2023cal} and RLPD~\citep{ball2023efficient}.
We re-run the official implementation to offline pre-train for 1 million steps.
Then, we report the fine-tuned performance for 200k online steps.
As for BOORL, we use TD3+BC and TD3 as the backbone of offline and online algorithms.
Table~\ref{tab: d4rl} shows that our algorithm achieves superior fine-tuning performance and notable performance improvement $\delta_{\rm sum}$ compared with other fine-tuning approaches.
The results in Figure~\ref{fig: new baseline} show that our method achieves better learning efficiency and stability than these baselines.
AWAC has limited efficiency due to a lack of online adaptability. To further test the performance of our method across various timesteps, we compare our method with PEX over 1M steps. Results in Appendix~\ref{appendix: pex} show that our method outperforms PEX across different timesteps.
% Please refer to Appendix~\ref{appendix: pex} for additional experimental results.
% We observed that the performance improvement of PEX is relatively slow within 0.2M steps.
% Therefore, we further compared with PEX over 1M steps and found that we still outperformed PEX.
% Please refer to Appendix~\ref{appendix: pex} for additional experimental results.
Cal-QL achieves comparable stability due to calibration, but our method generally demonstrates better sample efficiency.

% Further, the pessimistic method, IQL, has a limited performance improvement, similar to TD3+BC~(online).

\paragraph{Answer of Question 3:}
We incorporate BOORL with another offline RL algorithm, IQL, and evaluate it on the sparse reward task in the D4RL benchmark, Antmaze.
Consistent with the previous experimental setup, we first offline train IQL for 1 million time steps and then load the same pre-trained weight for BOORL.
The experimental results in Table~\ref{tab: d4rl} show that BOORL achieves superior performance and higher sample efficiency than other baselines.
This demonstrates that BOORL can be easily extended to various offline RL algorithms.

\paragraph{Ablation Study.}
To delve deeper into the performance of Bayesian methods,
we enforced a strict offline → online transition. Specifically, we exclusively loaded the offline-trained policy and Q-network module, omitting the offline data during the online phase. We refer to this setup as $\texttt{Bayesian}$. Furthermore, we examined the naive offline-to-online (TD3+BC → TD3) with the Hybrid RL framework to examine the effects of integrating offline data, termed $\texttt{Hybrid RL}$. Results in Table~\ref{tab: ablation ts} reveal that Thompson Sampling perform well in most tasks. 
Please refer to Appendix~\ref{appendix: additional exp} for additional ablation studies.

\section{Conclusion}

Our work presents a novel perspective on offline-to-online RL under Bayesian principles. By utilizing a probability-matching strategy, we can effectively tackle the inherent challenge of balancing exploration efficiency and utilizing offline data. Such intuition is formalized with an information-theoretic analysis, and we provide a concrete regret bound on linear MDPs. Previous methods like a balanced replay buffer and ensembling can be rationalized as implementing an implicit probability matching strategy.
Based on the above theoretical insights, we design a two-phase algorithm that implements approximate posterior sampling with bootstrapping, which provides a smooth and robust transition from offline to online.
 Our algorithm outperforms previous methods and demonstrates superior outcomes across various tasks. Adopting more advanced Bayesian methods to enable a more efficient and robust offline-to-online transition is an interesting future direction.
\section*{Impact Statement}
This paper focuses on the offline reinforcement learning area. 
This paper offers a reliable method for effectively improving online reinforcement learning methods by offline pre-training, substantiated by sufficient proof, which provides considerable societal importance. 
In contexts where substantial data is available, our method can save significant time and cost for online reinforcement learning and avoid the performance drop.
This has the potential to expand the current boundaries of application in the field of offline reinforcement learning, making it more accessible and applicable in a broader range of societal contexts.

As for ethical aspects, to the best of our knowledge, the research presented in this paper does not directly engage with them. 
However, we acknowledge the importance of ethical considerations in machine learning research and strive to ensure our work aligns with general ethical standards.

% In the unusual situation where you want a paper to appear in the
% references without citing it in the main text, use \nocite
% \clearpage
\bibliography{reference}

\begin{thebibliography}{49}
\providecommand{\natexlab}[1]{#1}
\providecommand{\url}[1]{\texttt{#1}}
\expandafter\ifx\csname urlstyle\endcsname\relax
  \providecommand{\doi}[1]{doi: #1}\else
  \providecommand{\doi}{doi: \begingroup \urlstyle{rm}\Url}\fi

\bibitem[Anonymous(2024)]{anonymous2024posterior}
Anonymous.
\newblock Posterior sampling via langevin monte carlo for offline reinforcement
  learning, 2024.
\newblock URL \url{https://openreview.net/forum?id=WwCirclMvl}.

\bibitem[Auer(2002)]{auer2002using}
Auer, P.
\newblock Using confidence bounds for exploitation-exploration trade-offs.
\newblock \emph{Journal of Machine Learning Research}, 3\penalty0
  (Nov):\penalty0 397--422, 2002.

\bibitem[Ball et~al.(2023)Ball, Smith, Kostrikov, and
  Levine]{ball2023efficient}
Ball, P.~J., Smith, L., Kostrikov, I., and Levine, S.
\newblock Efficient online reinforcement learning with offline data.
\newblock \emph{arXiv preprint arXiv:2302.02948}, 2023.

\bibitem[Cai et~al.(2020)Cai, Yang, Jin, and Wang]{cai2020provably}
Cai, Q., Yang, Z., Jin, C., and Wang, Z.
\newblock Provably efficient exploration in policy optimization.
\newblock In \emph{International Conference on Machine Learning}, pp.\
  1283--1294. PMLR, 2020.

\bibitem[Chua et~al.(2018)Chua, Calandra, McAllister, and Levine]{chua2018deep}
Chua, K., Calandra, R., McAllister, R., and Levine, S.
\newblock Deep reinforcement learning in a handful of trials using
  probabilistic dynamics models.
\newblock \emph{Advances in neural information processing systems}, 31, 2018.

\bibitem[Dann et~al.(2021)Dann, Mohri, Zhang, and Zimmert]{dann2021provably}
Dann, C., Mohri, M., Zhang, T., and Zimmert, J.
\newblock A provably efficient model-free posterior sampling method for
  episodic reinforcement learning.
\newblock \emph{Advances in Neural Information Processing Systems},
  34:\penalty0 12040--12051, 2021.

\bibitem[Degrave et~al.(2022)Degrave, Felici, Buchli, Neunert, Tracey,
  Carpanese, Ewalds, Hafner, Abdolmaleki, de~Las~Casas,
  et~al.]{degrave2022magnetic}
Degrave, J., Felici, F., Buchli, J., Neunert, M., Tracey, B., Carpanese, F.,
  Ewalds, T., Hafner, R., Abdolmaleki, A., de~Las~Casas, D., et~al.
\newblock Magnetic control of tokamak plasmas through deep reinforcement
  learning.
\newblock \emph{Nature}, 602\penalty0 (7897):\penalty0 414--419, 2022.

\bibitem[Fu et~al.(2020)Fu, Kumar, Nachum, Tucker, and Levine]{fu2020d4rl}
Fu, J., Kumar, A., Nachum, O., Tucker, G., and Levine, S.
\newblock D4rl: Datasets for deep data-driven reinforcement learning.
\newblock \emph{arXiv preprint arXiv:2004.07219}, 2020.

\bibitem[Fujimoto \& Gu(2021)Fujimoto and Gu]{fujimoto2021minimalist}
Fujimoto, S. and Gu, S.~S.
\newblock A minimalist approach to offline reinforcement learning.
\newblock \emph{Advances in neural information processing systems},
  34:\penalty0 20132--20145, 2021.

\bibitem[Fujimoto et~al.(2018)Fujimoto, Hoof, and
  Meger]{fujimoto2018addressing}
Fujimoto, S., Hoof, H., and Meger, D.
\newblock Addressing function approximation error in actor-critic methods.
\newblock In \emph{International conference on machine learning}, pp.\
  1587--1596. PMLR, 2018.

\bibitem[Gao et~al.(2023)Gao, Schulman, and Hilton]{gao2023scaling}
Gao, L., Schulman, J., and Hilton, J.
\newblock Scaling laws for reward model overoptimization.
\newblock In \emph{International Conference on Machine Learning}, pp.\
  10835--10866. PMLR, 2023.

\bibitem[Ghosh et~al.(2022)Ghosh, Ajay, Agrawal, and Levine]{ghosh2022offline}
Ghosh, D., Ajay, A., Agrawal, P., and Levine, S.
\newblock Offline rl policies should be trained to be adaptive.
\newblock In \emph{International Conference on Machine Learning}, pp.\
  7513--7530. PMLR, 2022.

\bibitem[Gittens \& Tropp(2011)Gittens and Tropp]{gittens2011tail}
Gittens, A. and Tropp, J.~A.
\newblock Tail bounds for all eigenvalues of a sum of random matrices.
\newblock \emph{arXiv preprint arXiv:1104.4513}, 2011.

\bibitem[Hu et~al.(2022)Hu, Yang, Zhao, and Zhang]{hu2022role}
Hu, H., Yang, Y., Zhao, Q., and Zhang, C.
\newblock On the role of discount factor in offline reinforcement learning.
\newblock In \emph{International Conference on Machine Learning}, pp.\
  9072--9098. PMLR, 2022.

\bibitem[Jin et~al.(2020)Jin, Yang, Wang, and Jordan]{jin2020provably}
Jin, C., Yang, Z., Wang, Z., and Jordan, M.~I.
\newblock Provably efficient reinforcement learning with linear function
  approximation.
\newblock In \emph{Conference on Learning Theory}, pp.\  2137--2143. PMLR,
  2020.

\bibitem[Jin et~al.(2021)Jin, Yang, and Wang]{jin2021pessimism}
Jin, Y., Yang, Z., and Wang, Z.
\newblock Is pessimism provably efficient for offline rl?
\newblock In \emph{International Conference on Machine Learning}, pp.\
  5084--5096. PMLR, 2021.

\bibitem[Kumar et~al.(2020)Kumar, Zhou, Tucker, and
  Levine]{kumar2020conservative}
Kumar, A., Zhou, A., Tucker, G., and Levine, S.
\newblock Conservative q-learning for offline reinforcement learning.
\newblock \emph{Advances in Neural Information Processing Systems},
  33:\penalty0 1179--1191, 2020.

\bibitem[Kumar et~al.(2022)Kumar, Agarwal, Geng, Tucker, and
  Levine]{kumar2022offline}
Kumar, A., Agarwal, R., Geng, X., Tucker, G., and Levine, S.
\newblock Offline q-learning on diverse multi-task data both scales and
  generalizes.
\newblock \emph{arXiv preprint arXiv:2211.15144}, 2022.

\bibitem[Lee et~al.(2022)Lee, Seo, Lee, Abbeel, and Shin]{lee2022offline}
Lee, S., Seo, Y., Lee, K., Abbeel, P., and Shin, J.
\newblock Offline-to-online reinforcement learning via balanced replay and
  pessimistic q-ensemble.
\newblock In \emph{Conference on Robot Learning}, pp.\  1702--1712. PMLR, 2022.

\bibitem[Levine et~al.(2020)Levine, Kumar, Tucker, and Fu]{levine2020offline}
Levine, S., Kumar, A., Tucker, G., and Fu, J.
\newblock Offline reinforcement learning: Tutorial, review, and perspectives on
  open problems.
\newblock \emph{arXiv preprint arXiv:2005.01643}, 2020.

\bibitem[Lu \& Van~Roy(2019)Lu and Van~Roy]{lu2019information}
Lu, X. and Van~Roy, B.
\newblock Information-theoretic confidence bounds for reinforcement learning.
\newblock \emph{Advances in Neural Information Processing Systems}, 32, 2019.

\bibitem[Ma et~al.(2021)Ma, Yang, Hu, Liu, Yang, Zhang, Zhao, and
  Liang]{ma2021offline}
Ma, X., Yang, Y., Hu, H., Liu, Q., Yang, J., Zhang, C., Zhao, Q., and Liang, B.
\newblock Offline reinforcement learning with value-based episodic memory.
\newblock \emph{arXiv preprint arXiv:2110.09796}, 2021.

\bibitem[Mnih et~al.(2013)Mnih, Kavukcuoglu, Silver, Graves, Antonoglou,
  Wierstra, and Riedmiller]{mnih2013playing}
Mnih, V., Kavukcuoglu, K., Silver, D., Graves, A., Antonoglou, I., Wierstra,
  D., and Riedmiller, M.
\newblock Playing atari with deep reinforcement learning.
\newblock \emph{arXiv preprint arXiv:1312.5602}, 2013.

\bibitem[Nair et~al.(2020)Nair, Gupta, Dalal, and Levine]{nair2020awac}
Nair, A., Gupta, A., Dalal, M., and Levine, S.
\newblock Awac: Accelerating online reinforcement learning with offline
  datasets.
\newblock \emph{arXiv preprint arXiv:2006.09359}, 2020.

\bibitem[Nakamoto et~al.(2023)Nakamoto, Zhai, Singh, Mark, Ma, Finn, Kumar, and
  Levine]{nakamoto2023cal}
Nakamoto, M., Zhai, Y., Singh, A., Mark, M.~S., Ma, Y., Finn, C., Kumar, A.,
  and Levine, S.
\newblock Cal-ql: Calibrated offline rl pre-training for efficient online
  fine-tuning.
\newblock \emph{arXiv preprint arXiv:2303.05479}, 2023.

\bibitem[Osband \& Van~Roy(2017)Osband and Van~Roy]{osband2017posterior}
Osband, I. and Van~Roy, B.
\newblock Why is posterior sampling better than optimism for reinforcement
  learning?
\newblock In \emph{International conference on machine learning}, pp.\
  2701--2710. PMLR, 2017.

\bibitem[Osband et~al.(2016)Osband, Blundell, Pritzel, and
  Van~Roy]{osband2016deep}
Osband, I., Blundell, C., Pritzel, A., and Van~Roy, B.
\newblock Deep exploration via bootstrapped dqn.
\newblock \emph{Advances in neural information processing systems}, 29, 2016.

\bibitem[Ouyang et~al.(2022)Ouyang, Wu, Jiang, Almeida, Wainwright, Mishkin,
  Zhang, Agarwal, Slama, Ray, et~al.]{ouyang2022training}
Ouyang, L., Wu, J., Jiang, X., Almeida, D., Wainwright, C.~L., Mishkin, P.,
  Zhang, C., Agarwal, S., Slama, K., Ray, A., et~al.
\newblock Training language models to follow instructions with human feedback.
\newblock \emph{arXiv preprint arXiv:2203.02155}, 2022.

\bibitem[Ouyang et~al.(2017)Ouyang, Gagrani, Nayyar, and
  Jain]{ouyang2017learning}
Ouyang, Y., Gagrani, M., Nayyar, A., and Jain, R.
\newblock Learning unknown markov decision processes: A thompson sampling
  approach.
\newblock \emph{Advances in neural information processing systems}, 30, 2017.

\bibitem[Rashidinejad et~al.(2021)Rashidinejad, Zhu, Ma, Jiao, and
  Russell]{rashidinejad2021bridging}
Rashidinejad, P., Zhu, B., Ma, C., Jiao, J., and Russell, S.
\newblock Bridging offline reinforcement learning and imitation learning: A
  tale of pessimism.
\newblock \emph{Advances in Neural Information Processing Systems},
  34:\penalty0 11702--11716, 2021.

\bibitem[Ross \& Bagnell(2012)Ross and Bagnell]{ross2012agnostic}
Ross, S. and Bagnell, J.~A.
\newblock Agnostic system identification for model-based reinforcement
  learning.
\newblock \emph{arXiv preprint arXiv:1203.1007}, 2012.

\bibitem[Russo \& Van~Roy(2014)Russo and Van~Roy]{russo2014learning}
Russo, D. and Van~Roy, B.
\newblock Learning to optimize via posterior sampling.
\newblock \emph{Mathematics of Operations Research}, 39\penalty0 (4):\penalty0
  1221--1243, 2014.

\bibitem[Russo \& Van~Roy(2016)Russo and Van~Roy]{russo2016information}
Russo, D. and Van~Roy, B.
\newblock An information-theoretic analysis of thompson sampling.
\newblock \emph{The Journal of Machine Learning Research}, 17\penalty0
  (1):\penalty0 2442--2471, 2016.

\bibitem[Silver et~al.(2016)Silver, Huang, Maddison, Guez, Sifre, Van
  Den~Driessche, Schrittwieser, Antonoglou, Panneershelvam, Lanctot,
  et~al.]{silver2016mastering}
Silver, D., Huang, A., Maddison, C.~J., Guez, A., Sifre, L., Van Den~Driessche,
  G., Schrittwieser, J., Antonoglou, I., Panneershelvam, V., Lanctot, M.,
  et~al.
\newblock Mastering the game of go with deep neural networks and tree search.
\newblock \emph{nature}, 529\penalty0 (7587):\penalty0 484--489, 2016.

\bibitem[Song et~al.(2022)Song, Zhou, Sekhari, Bagnell, Krishnamurthy, and
  Sun]{song2022hybrid}
Song, Y., Zhou, Y., Sekhari, A., Bagnell, J.~A., Krishnamurthy, A., and Sun, W.
\newblock Hybrid rl: Using both offline and online data can make rl efficient.
\newblock \emph{arXiv preprint arXiv:2210.06718}, 2022.

\bibitem[Uehara \& Sun(2021)Uehara and Sun]{uehara2021pessimistic}
Uehara, M. and Sun, W.
\newblock Pessimistic model-based offline reinforcement learning under partial
  coverage.
\newblock \emph{arXiv preprint arXiv:2107.06226}, 2021.

\bibitem[Wagenmaker \& Pacchiano(2023)Wagenmaker and
  Pacchiano]{wagenmaker2023leveraging}
Wagenmaker, A. and Pacchiano, A.
\newblock Leveraging offline data in online reinforcement learning.
\newblock In \emph{International Conference on Machine Learning}, pp.\
  35300--35338. PMLR, 2023.

\bibitem[Xie et~al.(2021)Xie, Jiang, Wang, Xiong, and Bai]{xie2021policy}
Xie, T., Jiang, N., Wang, H., Xiong, C., and Bai, Y.
\newblock Policy finetuning: Bridging sample-efficient offline and online
  reinforcement learning.
\newblock \emph{Advances in neural information processing systems},
  34:\penalty0 27395--27407, 2021.

\bibitem[Xie et~al.(2022)Xie, Foster, Bai, Jiang, and Kakade]{xie2022role}
Xie, T., Foster, D.~J., Bai, Y., Jiang, N., and Kakade, S.~M.
\newblock The role of coverage in online reinforcement learning.
\newblock \emph{arXiv preprint arXiv:2210.04157}, 2022.

\bibitem[Xu \& Zeevi(2023)Xu and Zeevi]{xu2023bayesian}
Xu, Y. and Zeevi, A.
\newblock Bayesian design principles for frequentist sequential learning.
\newblock In \emph{International Conference on Machine Learning}, pp.\
  38768--38800. PMLR, 2023.

\bibitem[Yang \& Wang(2020)Yang and Wang]{yang2020reinforcement}
Yang, L. and Wang, M.
\newblock Reinforcement learning in feature space: Matrix bandit, kernels, and
  regret bound.
\newblock In \emph{International Conference on Machine Learning}, pp.\
  10746--10756. PMLR, 2020.

\bibitem[Yang et~al.(2021)Yang, Ma, Li, Zheng, Zhang, Huang, Yang, and
  Zhao]{yang2021believe}
Yang, Y., Ma, X., Li, C., Zheng, Z., Zhang, Q., Huang, G., Yang, J., and Zhao,
  Q.
\newblock Believe what you see: Implicit constraint approach for offline
  multi-agent reinforcement learning.
\newblock \emph{Advances in Neural Information Processing Systems},
  34:\penalty0 10299--10312, 2021.

\bibitem[Yang et~al.(2023)Yang, Hu, Li, Li, Yang, Zhao, and
  Zhang]{yang2023flow}
Yang, Y., Hu, H., Li, W., Li, S., Yang, J., Zhao, Q., and Zhang, C.
\newblock Flow to control: Offline reinforcement learning with lossless
  primitive discovery.
\newblock In \emph{Proceedings of the AAAI Conference on Artificial
  Intelligence}, volume~37, pp.\  10843--10851, 2023.

\bibitem[Yu et~al.(2020)Yu, Thomas, Yu, Ermon, Zou, Levine, Finn, and
  Ma]{yu2020mopo}
Yu, T., Thomas, G., Yu, L., Ermon, S., Zou, J.~Y., Levine, S., Finn, C., and
  Ma, T.
\newblock Mopo: Model-based offline policy optimization.
\newblock \emph{Advances in Neural Information Processing Systems},
  33:\penalty0 14129--14142, 2020.

\bibitem[Yu et~al.(2021)Yu, Kumar, Rafailov, Rajeswaran, Levine, and
  Finn]{yu2021combo}
Yu, T., Kumar, A., Rafailov, R., Rajeswaran, A., Levine, S., and Finn, C.
\newblock Combo: Conservative offline model-based policy optimization.
\newblock \emph{Advances in neural information processing systems},
  34:\penalty0 28954--28967, 2021.

\bibitem[Yu \& Zhang(2023)Yu and Zhang]{yu2023actor}
Yu, Z. and Zhang, X.
\newblock Actor-critic alignment for offline-to-online reinforcement learning.
\newblock In \emph{International Conference on Machine Learning}, pp.\
  40452--40474. PMLR, 2023.

\bibitem[Zhang et~al.(2023)Zhang, Xu, and Yu]{zhang2023policy}
Zhang, H., Xu, W., and Yu, H.
\newblock Policy expansion for bridging offline-to-online reinforcement
  learning.
\newblock \emph{arXiv preprint arXiv:2302.00935}, 2023.

\bibitem[Zheng et~al.(2022)Zheng, Zhang, and Grover]{zheng2022online}
Zheng, Q., Zhang, A., and Grover, A.
\newblock Online decision transformer.
\newblock In \emph{International Conference on Machine Learning}, pp.\
  27042--27059. PMLR, 2022.

\bibitem[Zhong et~al.(2022)Zhong, Xiong, Zheng, Wang, Wang, Yang, and
  Zhang]{zhong2022gec}
Zhong, H., Xiong, W., Zheng, S., Wang, L., Wang, Z., Yang, Z., and Zhang, T.
\newblock Gec: A unified framework for interactive decision making in mdp,
  pomdp, and beyond.
\newblock \emph{arXiv preprint arXiv:2211.01962}, 2022.

\end{thebibliography}
\bibliographystyle{icml2024}

%%%%%%%%%%%%%%%%%%%%%%%%%%%%%%%%%%%%%%%%%%%%%%%%%%%%%%%%%%%%%%%%%%%%%%%%%%%%%%%
%%%%%%%%%%%%%%%%%%%%%%%%%%%%%%%%%%%%%%%%%%%%%%%%%%%%%%%%%%%%%%%%%%%%%%%%%%%%%%%
% APPENDIX
%%%%%%%%%%%%%%%%%%%%%%%%%%%%%%%%%%%%%%%%%%%%%%%%%%%%%%%%%%%%%%%%%%%%%%%%%%%%%%%
%%%%%%%%%%%%%%%%%%%%%%%%%%%%%%%%%%%%%%%%%%%%%%%%%%%%%%%%%%%%%%%%%%%%%%%%%%%%%%%
\newpage
\appendix
\onecolumn

\section{Algorithm Details}
\subsection{Details of abstract algorithms}
\label{appendix: alg_abs}

In this section, we provide an information-theoretic abstraction of the UCB, LCB and Thompson Sampling algorithm when both offline dataset and online environment are present. Here we use $t$ to represent $(k,h)$ for simplicity.

\begin{algorithm}[h]
\caption{Upper Confidence Bound}
\label{alg: abs_ucb}
\begin{algorithmic}[1]
    \STATE {\bf Require}: offline dataset $\cD^{\rm off}$, online interaction episodes $K$, exploration coefficient $\Gamma_t$,
    \STATE Initialize prior $\beta_0$ with the offline dataset $\cD^{\rm off}$.
    \FOR{$k=1, \cdots, K$}
        \FOR{$h=1,\cdots, H$}
        \STATE Calculate posterior mean $\bar{Q}_{w,t}(\cdot,\cdot) = \EE_{w\sim \beta_t} [Q_{t,w}(\cdot,\cdot)]$.
        \STATE Calculate optimistic value function $\widehat{Q}(\cdot,\cdot) = \bar{Q}_{w,t}(\cdot,\cdot) + \frac{\Gamma_t}{2} \sqrt{I_t(w;\cdot,\cdot)}$.
        \STATE Execute $a_t = \argmax_{a_t} \widehat{Q}(s_t,a)$ and receive feedback $s_{t+1}, r_t$.
        \STATE Update posterior $\beta_t$ with evidence $(a_t,r_t,s_{t+1})$.
        \ENDFOR
    \ENDFOR
\end{algorithmic}
\end{algorithm}

\begin{algorithm}[h]
\caption{Lower Confidence Bound}
\label{alg: abs_lcb}
\begin{algorithmic}[1]
    \STATE {\bf Require}: offline dataset $\cD^{\rm off}$, online interaction episodes $K$, exploration coefficient $\Gamma_t$,
    \STATE Initialize prior $\beta_0$ with the offline dataset $\cD^{\rm off}$.
    \FOR{$k=1, \cdots, K$}
        \FOR{$h=1,\cdots, H$}
        \STATE Calculate posterior mean $\bar{Q}_{w,t}(\cdot,\cdot) = \EE_{w\sim \beta_t} [Q_{t,w}(\cdot,\cdot)]$.
        \STATE Calculate pessimistic value function $\widehat{Q}(\cdot,\cdot) = \bar{Q}_{w,t}(\cdot,\cdot) - \frac{\Gamma_t}{2} \sqrt{I_t(w;\cdot,\cdot)}$.
        \STATE Execute $a_t = \argmax_{a_t} \widehat{Q}(s_t,a)$ and receive feedback $s_{t+1}, r_t$.
        \STATE Update posterior $\beta_t$ with evidence $(a_t,r_t,s_{t+1})$.
        \ENDFOR
    \ENDFOR
\end{algorithmic}
\end{algorithm}

\begin{algorithm}[h]
\caption{Thompson Sampling}
\label{alg: abs_ts}
\begin{algorithmic}[1]
    \STATE {\bf Require}: offline dataset $\cD^{\rm off}$, online interaction episodes $K$, exploration coefficient $\Gamma_t$,
    \STATE Initialize prior $\beta_0$ with the offline dataset $\cD^{\rm off}$.
    \FOR{$k=1, \cdots, K$}
        \FOR{$h=1,\cdots, H$}
        \STATE Sample parameter $w_t$ from posterior $\beta_t$. 
        \STATE Calculate corresponding value function $\widehat{Q}(\cdot,\cdot) = {Q}_{w_t,t}(\cdot,\cdot)$.
        \STATE Execute $a_t = \argmax_{a_t} \widehat{Q}(s_t,a)$ and receive feedback $s_{t+1}, r_t$.
        \STATE Update posterior $\beta_t$ with evidence $(a_t,r_t,s_{t+1})$.
        \ENDFOR
    \ENDFOR
\end{algorithmic}
\end{algorithm}

\newpage

\clearpage

\section{Additional Propositions}
\label{sec: proof}

\begin{proposition}
    \label{online_guarantee}
    Suppose the following equation for each episode holds with $\Gamma_{t} \leq \Gamma$ for all $k \in [K], h \in [H]$, 
    \#
    \EE_k[\Delta_k] \leq \sum_{h=1}^H{\Gamma_{t} \sqrt{I_{t}(w_h;a_{t},r_{t},s_{t+1})}+ \epsilon_{t}}. \label{regret_info_bound}
    \#
    Then
    \$
    \EE[\text{Regret}(T,\pi)] \leq \Gamma \sqrt{T I(w;\cH_{T})}+ \EE{\sum_{k=1}^K \sum_{h=1}^H \epsilon_{t}} .
    \$
\end{proposition}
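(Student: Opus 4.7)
}
The plan is to start from the per-episode bound in \eqref{regret_info_bound}, take expectations, sum over episodes, and then combine a single application of Cauchy--Schwarz with the chain rule of mutual information to collapse the information terms into a single $I(w;\cH_T)$. Concretely, I would first sum \eqref{regret_info_bound} over $k = 1,\dots, K$ and take total expectation, giving
\[
\EE[\text{Regret}(T,\pi)]
\;\leq\; \Gamma\,\EE\!\sum_{k=1}^{K}\sum_{h=1}^{H}\sqrt{I_{k,h}(w_h;a_{k,h},r_{k,h},s_{k,h+1})}
\;+\;\EE\!\sum_{k=1}^{K}\sum_{h=1}^{H}\epsilon_{k,h},
\]
where I used $\Gamma_{k,h}\leq\Gamma$ to pull a uniform constant out of the square root.

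Next, I would apply Cauchy--Schwarz to the double sum of square roots across the $T = KH$ time indices, obtaining
\[
\EE\!\sum_{k,h}\sqrt{I_{k,h}(w_h;a_{k,h},r_{k,h},s_{k,h+1})}
\;\leq\;\sqrt{T}\,\sqrt{\EE\!\sum_{k,h} I_{k,h}(w_h;a_{k,h},r_{k,h},s_{k,h+1})},
\]
using Jensen's inequality to move the expectation inside the outer square root. The remaining task is to bound the aggregate information gain by $I(w;\cH_T)$.

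For that step I would first use the data-processing inequality (or equivalently the nonnegativity of conditional mutual information) to replace the component $w_h$ by the full model parameter $w$, i.e.\ $I_{k,h}(w_h;\cdot)\leq I_{k,h}(w;\cdot)$. I would then invoke the chain rule of mutual information step by step across the filtration $\{\cH_{k,h}\}$: since $a_{k,h}$ is a measurable function of $\cH_{k,h}$ together with independent policy randomness, it contributes nothing to the mutual information with $w$ beyond history, so $\EE\,I_{k,h}(w;a_{k,h},r_{k,h},s_{k,h+1}) = \EE\,I_{k,h}(w;r_{k,h},s_{k,h+1})$, and telescoping these conditional mutual informations along the entire trajectory yields $\sum_{k,h}\EE\,I_{k,h}(w;\cdot) = I(w;\cH_T)$. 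Substituting back gives
\[
\EE[\text{Regret}(T,\pi)]
\;\leq\;\Gamma\sqrt{T\,I(w;\cH_T)}\;+\;\EE\!\sum_{k=1}^{K}\sum_{h=1}^{H}\epsilon_{k,h},
\]
as desired.

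The main obstacle I anticipate is the careful bookkeeping in the chain-rule step: one must verify that including the policy-chosen action $a_{k,h}$ in the per-step information does not break the telescoping, and that conditioning on the algorithmic randomness (which is independent of $w$) leaves the filtered mutual informations unchanged. Once this is handled, the rest of the argument is a clean Cauchy--Schwarz plus chain-rule combination that is standard in information-theoretic regret analysis (cf.\ \citet{russo2016information,lu2019information}).
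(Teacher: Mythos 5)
Your proof is correct and takes essentially the same route as the paper's, which simply invokes the chain rule of mutual information $I(X;Y_1,\ldots,Y_N)=\sum_{i}I(X;Y_i\mid Y_1,\ldots,Y_{i-1})$ together with Cauchy--Schwarz and declares the result immediate. You have merely filled in the bookkeeping the paper omits (the data-processing step from $w_h$ to $w$, Jensen to move the expectation inside the square root, and the telescoping of the filtered mutual informations along the history), all of which is sound.
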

\begin{proof}
    By leveraging the chain rule of mutual information, i.e. 
    \$
    I(X;Y_1,\ldots,Y_N) = \sum_{i=1}^N I(X;Y_i|Y_1,\ldots,Y_{i-1}),
    \$
    and the Cauchy-Schwartz inequality, we have the result immediately.
\end{proof}

\section{Missing Proofs}

\subsection{Proof of Theorem~\ref{theorem:information}}
\label{proof:information}
\begin{proof}
    By regret decomposition in Lemma~\ref{regret_decompose}, we have 
    \$
    \EE_k[\Delta_k] &=\sum_{h=1}^H\EE_{\pi^*}\left[\langle Q_{t}(s_h,\cdot),\pi^*_h(\cdot\given s_h)- \pi_{h,k}(\cdot \given s_h)\rangle\right]+ \sum_{h=1}^{H}\left(\EE_{\pi^*}[\iota_{t}(s,a)] - \EE_k{[\iota_{t}(s,a)]}\right)\\
    &\leq \sum_{h=1}^{H}\left(\EE_{\pi^*}[\iota_{t}(s,a)] - \EE_k{[\iota_{t}(s,a)]}\right),
    \$
    where $\iota_{t}(s,a) = r_{t}(s,a) + (\BB_h V_{t+1})(s,a) - Q_{t+1}(s,a)$. The inequality is due to the fact that $\pi_{t}$ is the greedy policy with respect to $Q_{t}$.

    Let $\cW_k$ be the confidence set of $w$ such that Equation~\eqref{eq:info_ratio} holds, we have 
    \$
    \PP_{k}(w\in\cW_k)\geq 1-\frac{\delta}{2}.
    \$
    For a UCB algorithm with upper confidence estimation $Q_{t}(s,a)=\bar{Q}_{t,w}(s,a)+ \frac{\Gamma_{t}}{2}\sqrt{I_{t}(w_h;r_{t,a},s_{t+1,a})}$, we have
    \$-\Gamma_{t}\sqrt{I_{t}(w_h;r_{t,a},s_{t+1,a})} \leq \iota_{t}(s,a) \leq 0\$ 
    when $w\in\cW_k$. Here $r_ {t,a}$ and $s_ {t+1,a}$ are the random variables for the reward and next state given action $a$ at time $t$.
    
    Then we have
    \$
    &\sum_{h=1}^{H}\left(\EE_{\pi^*}[\iota_{t}(s,a)] - \EE_k{[\iota_{t}(s,a)]}\right) \\
    \leq& \sum_{h=1}^{H}\left\{\mathbbm{1}_{w \in \cW_k} \left\{ \EE_{\pi^*}[\iota_{t}(s,a)] - \EE_k{[\iota_{t}(s,a)]}\right\}+\frac{1}{2}\delta \cdot 2H\right\} \\
    % &\leq \sum_{h=1}^{H}{\EE_k\left[\sum_{a\in\cA}\mathbbm{1}_{a_{t}=a}\Gamma_{t}\sqrt{I_{t}(w_h;R_{t,a})}\right]} + \frac{1}{2}\delta H B\\
    \leq& \sum_{h=1}^{H}{\EE_k\left[\sum_{a\in\cA}\PP{(a_{t}=a)}\Gamma_{t}\sqrt{I_{t}(w_h;r_{t,a},s_{t+1,a})}\right]} + \delta H^2\\
    \leq& \sum_{h=1}^{H}{\EE_k\left[\Gamma_{t}\sqrt{\sum_{a\in\cA}\PP{(a_{t}=a)}I_{t}(w_h;r_{t,a},s_{t+1,a})}\right]} + \delta H^2\\
    =&\sum_{h=1}^{H}{\EE_k\left[\Gamma_{t}\sqrt{\sum_{a\in\cA}\PP{(a_{t}=a)}I_{t}(w_h;r_{t,a_{t}},s_{t+1,a_{t}}\given a_{t}=a)}\right]} + \delta H^2\\
    =&\sum_{h=1}^{H}{\EE_k\left[\Gamma_{t}\sqrt{I_{t}(w_h;a_{t}, r_{t},s_{t+1})}\right]} + \delta H^2.\\
    \$

    Similarly, for a LCB algorithm with lower confidence functions $Q_{t}(s,a)=\bar{Q}_{t,w}(s,a) - \frac{\Gamma_{t}}{2}\sqrt{I_{t}(w_h;r_{t,a},s_{t+1,a})}$, we have
    \$0\leq \iota_{t}(s,a) \leq \Gamma_{t}\sqrt{I_{t}(w_h;r_{t,a},s_{t+1,a})}\$ 
    when $w\in\cW_k$.Then we have
    \$
    &\sum_{h=1}^{H}\left(\EE_{\pi^*}[\iota_{t}(s,a)] - \EE_k{[\iota_{t}(s,a)]}\right) \\
    \leq& \sum_{h=1}^{H}\left\{\mathbbm{1}_{w \in \cW_k} \left\{ \EE_{\pi^*}[\iota_{t}(s,a)] - \EE_k{[\iota_{t}(s,a)]}\right\}+\frac{1}{2}\delta \cdot 2H\right\} \\
    % &\leq \sum_{h=1}^{H}{\EE_k\left[\sum_{a\in\cA}\mathbbm{1}_{a_{t}=a}\Gamma_{t}\sqrt{I_{t}(w_h;R_{t,a})}\right]} + \frac{1}{2}\delta H B\\
    \leq& \sum_{h=1}^{H}{\EE_{\pi^*}\left[\sum_{a\in\cA}\PP{(a^*_{t}=a)}\Gamma_{t}\sqrt{I_{t}(w_h;r_{t,a},s_{t+1,a})}\right]} + \delta H^2\\
    \leq& \sum_{h=1}^{H}{\EE_{\pi^*}\left[\Gamma_{t}\sqrt{\sum_{a\in\cA}\PP{(a^*_{t}=a)}I_{t}(w_h;r_{t,a},s_{t+1,a})}\right]} + \delta H^2\\
    =&\sum_{h=1}^{H}{\EE_{\pi^*}\left[\Gamma_{t}\sqrt{\sum_{a\in\cA}\PP{(a^*_{t}=a)}I_{t}(w_h;r_{t,a^*_{t}},s_{t+1,a^*_{t}}\given a^*_{t}=a)}\right]} + \delta H^2\\
    =&\sum_{h=1}^{H}{\EE_{\pi^*}\left[\Gamma_{t}\sqrt{I_{t}(w_h;a^*_{t}, r_{t},s_{t+1})}\right]} + \delta H^2.\\
    \$

    For Thompson Sampling, note that the probability matching property implies that $\PP_k(\widehat{w}_k\in\cW_k) = \PP_k(w\in\cW_k) \geq 1-\frac{\delta}{2}$, we have 

    \$
    &\sum_{h=1}^{H}\left(\EE_{\pi^*}[\iota_{t}(s,a)] - \EE_k{[\iota_{t}(s,a)]}\right) \\
    \leq& \sum_{h=1}^{H}\left\{\mathbbm{1}_{w, \widehat{w}_k \in \cW_k} \left\{ \EE_{\pi^*}[\iota_{t}(s,a)] - \EE_k{[\iota_{t}(s,a)]}\right\}+\delta \cdot 2H\right\} \\
    \leq& \sum_{h=1}^{H}{\EE_{\pi^*}\left[\sum_{a\in\cA}\PP{(a^*_{t}=a)}\Gamma_{t}\sqrt{I_{t}(w_h;r_{t,a},s_{t+1,a})}\right]} + 2\delta H^2\\
    =& \sum_{h=1}^{H}{\EE_{k}\left[\sum_{a\in\cA}\PP{(a_{t}=a)}\Gamma_{t}\sqrt{I_{t}(w_h;r_{t,a},s_{t+1,a})}\right]} + 2\delta H^2.\\
    \$
    The last equality and the last equality is due to the fact that $\pi_{k,h}$ is the optimal policy under $\widehat{w}_k$, and $w$ and $\widehat{w}_k$ have the same distribution. 
    The rest of the proof is similar to the case of UCB and LCB and is omitted for simplicity.
\end{proof}

\subsection{Proof of Theorem~\ref{theorem:1}}
\label{proof:theorem1}

\begin{theorem}[Regret of Bayesian Agents in Linear MDPs, restatment]
    \label{theorem:1_restate}
    Given an offline dataset $\cD$ of size $N$, and a fixed posterior $\beta$ during the online interaction phase, 
    the regret of Thompson sampling during online interaction satisfies the following bound:
    \#
        % \label{eq:main_result}
        \text{\rm BayesRegret}(N,T,\pi)  \leq 4c\sqrt{d^3H^3\iota} \left(\sqrt{\frac{N}{C^\dagger_\beta}+ T}-\sqrt{\frac{N}{C^\dagger_\beta}}\right),
    \#
    for sufficiently large $N$ or $T$. Here $\iota$ is a logarithmic factor and $c$ is an absolute constant.
\end{theorem}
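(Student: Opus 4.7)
The plan is to begin by invoking Theorem~\ref{theorem:information} applied to Thompson sampling, which already expresses the per-episode Bayesian regret as a sum of terms of the form $\Gamma_{k,h}\sqrt{I_{k,h}(w_h; a_{k,h}, r_{k,h}, s_{k,h+1})}$ (plus a small $\delta$-term that I will set to $O(1/T)$). The proof then reduces to two tasks: bounding the information ratio $\Gamma_{k,h}$ in linear MDPs, and aggregating the mutual-information terms across $T$ online steps while properly accounting for the offline warm-start determined by $\cD$.

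For the first task, I would use the linearity $Q_{w_h}(s,a) = \langle w_h, \phi(s,a)\rangle$ together with a standard posterior-covariance calculation (in the spirit of \citet{lu2019information}) to show $\Gamma_{k,h} \leq c\sqrt{d^2 H \iota}$ for a suitable logarithmic factor $\iota$. In the (approximately) Gaussian posterior induced by the linear model, the filtered mutual information $I_{k,h}(w_h; a_{k,h}, r_{k,h}, s_{k,h+1})$ is proportional to $\|\phi(s_{k,h}, a_{k,h})\|_{\Lambda_{k,h}^{-1}}^2$, where $\Lambda_{k,h}$ is the posterior precision matrix aggregating information from both the offline dataset and the online history up to step $(k,h)$.

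For the second task, I would apply the Cauchy--Schwarz inequality across horizon and episodes (as in Proposition~\ref{online_guarantee}) and the chain rule of mutual information. The crux is to express the incremental information in a form that reflects the offline warm-start: after processing $N$ offline samples, the precision matrix along directions visited by the Thompson-sampled optimal policy is of order $N/C^\dagger_\beta + t$ at the $t$-th online step, where $N/C^\dagger_\beta$ is the effective offline sample size in those directions. Consequently, the per-step bonus decays like $(N/C^\dagger_\beta + t)^{-1/2}$, and the integral comparison $\sum_{t=1}^T (N/C^\dagger_\beta + t)^{-1/2} \leq 2(\sqrt{N/C^\dagger_\beta + T} - \sqrt{N/C^\dagger_\beta})$, multiplied by the prefactor $\sqrt{d^3 H^3 \iota}$ contributed by $\Gamma$ and the horizon sum, recovers the stated bound.

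The main obstacle will be rigorously transferring the Bayesian coverage coefficient $C^\dagger_\beta$ into a pointwise lower bound on the effective precision $\Lambda_{k,h}$ along feature directions actually visited during online play. The subtlety is that $C^\dagger_\beta$ is defined via the optimal-policy occupancy $\Sigma_{\pi^*_w, h}$ under the posterior, whereas the elliptic norm in the regret decomposition is evaluated at the Thompson-sampled trajectory. The probability-matching property of Thompson sampling is what makes this step possible: since the sampled policy and the true optimal policy are identically distributed under $\beta_k$, one can Bayesian-average the per-step bonus and invoke the coverage condition on the resulting expected feature covariance. Careful bookkeeping of this exchange, combined with a Jensen-type step to pull the expectation inside the elliptic norm, should close the argument and yield the factor $N/C^\dagger_\beta$ inside the square roots.
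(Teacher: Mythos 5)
Your proposal follows essentially the same route as the paper's proof: it starts from the information-theoretic per-episode bound of Theorem~\ref{theorem:information}, identifies the filtered mutual information in a linear MDP with the elliptic potential $\frac{1}{2}\log(1+\phi^\top\Lambda^{-1}\phi)$, bounds the information ratio by $\tilde{\cO}(d\sqrt{H})$ via the posterior-covariance calculation, converts the coverage coefficient on $\Sigma_{\pi^*_w,h}$ into a lower bound of order $N/C^\dagger_\beta + k$ on the precision matrix using exactly the probability-matching identity you describe, and closes with the integral comparison $\sum_k (a+k)^{-1/2}\le 2(\sqrt{a+K}-\sqrt{a})$. The one step where your sketch is too optimistic is the claim that a ``Jensen-type step'' suffices to pass from the expected feature covariance to the realized elliptic norms: since matrix inversion is operator-convex, Jensen gives $\EE[\Lambda^{-1}]\succeq(\EE[\Lambda])^{-1}$, which is the wrong direction, and the paper instead invokes a matrix concentration inequality to obtain a high-probability lower bound $\Lambda \succeq \tfrac{1}{2}(N/C^\dagger_\beta+K)\Sigma_{\pi^*_\beta,h}$ on the realized precision itself (Jensen is used only to pull the expectation inside the square root via concavity). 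With that substitution your argument matches the paper's.
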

\begin{proof}

    At each online episode $k$, we have
    \begin{align}
        \label{eq:bound_eigen}
        &\sum_{h=1}^{H}\Gamma_{t} \EE_{\pi^*}\left[\sqrt{I_{t}(w_h;a^*_{t},r_{t},s_{t+1})}\right]\notag\\ 
        =&\sum_{h=1}^{H}\Gamma_{t} \EE_{\pi^*}\left[\log{(1+\phi(s_h,a_h)^\top\Lambda_h^{-1}\phi(s_h,a_h))^{1/2}}\right]\notag\\ 
        \leq &\Gamma_t \EE_{\pi^*}\Bigl[ \sum_{h=1}^H  \bigl(\phi(s_t,a_t)^\top \Lambda_k^{-1}\phi(s_t,a_t)\bigr)^{1/2}\Bigr]\notag \\
        =& \Gamma_t\EE_{\pi^*}\Bigl[ \sum_{h=1}^H \sqrt{\Tr\big(\phi(s,a)\phi(s,a)^\top \Lambda_k^{-1}\big)} \Bigr]\notag \\
        \leq&   \Gamma_t \sum_{h=1}^H \sqrt{\Tr\Big(\EE_{d^{\pi^*}_h}\big[\phi(s,a)\phi(s,a)^\top \big]\Lambda^{-1}_{t}\Big)} \notag \\
        =& \Gamma_t \sum_{h=1}^H \sqrt{\Tr\Big(\Sigma_{\pi^*,h}^\top \Lambda^{-1}_{t}\Big)},
    \end{align}
    where $\Sigma_{\pi^*,h} \overset{\Delta}{=} \EE_{d^{\pi^*}_h}\big[\phi(s,a)\phi(s,a)^\top \big]$. The first equality uses Lemma~\ref{linear_mi}, The first inequality uses the fact that $\log(1+x)\leq x \forall x\geq 0 $. The second equality uses the trace trick and the last inequality due to Jensen inequality and the linearity of the trace function.

    By the definition of Bayesian coverage coefficient, we have 
    \$
    \EE{\left[\sum_{\ell=1}^{L}\phi(s_{\ell,h},a_{\ell,h})\phi(s_{\ell,h},a_{\ell,h})^{\top}\right]}  \succeq \frac{L}{C^\dagger_\beta} \Sigma_{\pi^*_\beta,h},
    \$
    where $\Sigma_{\pi^*_\beta,h} \overset{\Delta}{=} \EE_{w\sim\beta}\EE_{d^{\pi^*_w}_h}\big[\phi(s,a)\phi(s,a)^\top \big]$.

    From the probability matching property of Thompson sampling method, we have 
    \$
    \EE{\left[\sum_{k=1}^{K}\phi(s_{t},a_{t})\phi(s_{t},a_{t})^{\top}\right]} = K \Sigma_{\pi^*_\beta,h}.
    \$
    From matrix concentration inequalities~\citep{gittens2011tail}, with a probability $1-\xi$ where $\xi=\frac{d}{H} e^{-\frac{4(L+KC^\dagger_\beta)}{C^\dagger_\beta\kappa^2_{\beta}}}$, we have 
    \#
    \label{matrix_concentration}
    \sum_{\ell=1}^{L}\phi(s_{\ell,h},a_{\ell,h})\phi(s_{\ell,h},a_{\ell,h})^{\top}+\sum_{k=1}^{K}\phi(s_{t},a_{t})\phi(s_{t},a_{t})^{\top} \succeq \frac{1}{2}(\frac{L}{C^\dagger_\beta}+K) \Sigma_{\pi^*_\beta}.
    \#
    Here $\kappa_{\beta} = \max_{h\in[H]} \frac{\lambda_{\text{max}}(\Sigma_{\pi^*_\beta,h})}{\lambda_{\text{min}}(\Sigma_{\pi^*_\beta,h})}$ is the condition number for the feature matrix under expert policy $\pi^*$, $\lambda_{\text{max}}$ is the largest eigenvalue and $\lambda_{\text{min}}$ is the smallest \textit{non-zero} eigenvalue. 
    Let $\cE$ be the event such that Equation~\eqref{matrix_concentration} holds, then we have 
    \$
    &\EE_\beta[\Delta_k] \\
    \leq & \EE_\beta \left[\sum_{h=1}^{H}\Gamma_{t} \EE_{\pi^*}\left[\sqrt{I_{t}(w_h;a^*_{t},r_{t},s_{t+1})}\right]\right] + 2\delta H^2 \\
    \leq & \Gamma \sum_{h=1}^{H}\sqrt{\EE_\beta \Tr\Big(\Sigma_{\pi^*,h}^\top \Lambda^{-1}_{t}\Big)}  + 2\delta H^2 \\
    \leq & \mathbbm{1}_{\cE}\left\{\Gamma \sum_{h=1}^{H}\sqrt{\EE_\beta \Tr\Big(\Sigma_{\pi^*,h}^\top \Lambda^{-1}_{t}\Big)}\right\}  + \xi H^2 + 2\delta H^2 \\
    \leq & \Gamma \sum_{h=1}^{H}\sqrt{\EE_\beta \Tr\Big(\Sigma_{\pi^*,h}^\top \big(\lambda\cdot I + \frac{1}{2}(\frac{L}{C^\dagger_\beta}+K) \Sigma_{\pi^*,h} \big)^{-1}\Big)}+ \xi H^2  + 2\delta H^2 \\
    \leq & \Gamma \sum_{h=1}^{H}\sqrt{\sum_{j=1}^d \frac{\lambda_{j}(h)}{\lambda +\frac{1}{2}(\frac{L}{C^\dagger_\beta}+K) \cdot \lambda_{j}(h)}}  + (2\delta+\xi) H^2. \\
    \$
    Here $\{\lambda_{j}(h)\}_{j=1}^d$ are the eigenvalues of $\Sigma_{\pi^*,h}$ for all $h\in [H]$. The first inequality follows from Lemma~\ref{linear_uncertainty}, and the second to last inequality follows from the Jensen inequality and the definition of event $\cE$.

    Meanwhile, by definition, we have $\|\phi(s,a)\|\leq 1$ for all $(s,a)\in \cS \times \cA$. By Jensen's inequality, we have
    \begin{equation}
        \|\Sigma_{\pi^*,h}\|_{\oper} \leq \EE_{\pi^*}\big[ \|\phi(s,a)\phi(s,a)^\top \|_{\oper}  \big] \leq 1
    \end{equation}
    for all $h \in [H]$. As $\Sigma_{\pi^*,h}$ is positive semidefinite, we have $\lambda_{j}(h) \in [0,1]$ for all $h\in [H]$ and all $j\in [d]$. 

    Then we have 
    \$
    \EE_\beta[\Delta_k] \leq & \Gamma \sum_{h=1}^{H}\sqrt{\sum_{j=1}^d \frac{1}{\lambda+\frac{1}{2}(\frac{L}{C^\dagger_\beta}+K)}}  + (2\delta+\xi) H^2 \\
    \leq & H\Gamma\sqrt{ \frac{2d}{\frac{L}{C^\dagger_\beta}+K}}  + (2\delta+\xi) H^2 \\
    \leq & 2c \sqrt{ \frac{2d^3H^3\iota}{\frac{L}{C^\dagger_\beta}+K}}  + (2\delta+\xi) H^2, \\
    \$
    where $\iota=\log{\frac{4dT}{\delta}}$. For sufficiently large $L$ and $K$ such that $\xi = (\frac{L}{C^\dagger_\beta}+K)^{-1/2}K^{-1}$and let $\delta=(\frac{L}{C^\dagger_\beta}+K)^{-1/2}K^{-1}$. Using the fact that 
    \$
    \sum_{k=1}^K \sqrt{\frac{1}{a+bk}}\leq \int_{0}^{K}{\sqrt{\frac{1}{a+bx}}dx} \leq \frac{2}{b} (\sqrt{a+bK}-\sqrt{a}),
    \$ 
    we have the desired result.
\end{proof}

\subsection{Proof of Proposition~\ref{prop:1}}
\label{proof:prop1}
\begin{proof}
    Let $T=1$ in Theorem~\ref{theorem:1} and note that $\sqrt{x+1}-\sqrt{x}\leq 2/\sqrt{x}$, we have the result for Thompson sampling immediately. For an counterexample for UCB, Please refer to Lemma~\ref{lemma:ucb_failure}. 
\end{proof}

\subsection{Proof of Proposition~\ref{prop:2}}
\label{proof:prop2}
\begin{proof}
    Let $N=0$ in Theorem~\ref{theorem:1}, and we have the result for Thompson sampling immediately. For an counterexample for LCB, Please refer to Lemma~\ref{lemma:lcb_failure}.

\end{proof}

\subsection{Failure of UCB and LCB}
\begin{lemma}[Failure of UCB]
    \label{lemma:ucb_failure}
    For any $\epsilon < 0.05$, $N \geq 500$, there exists a bandit problem with two arms such that for $\widehat{a}^{\text{UCB}} = \argmax_a
\widehat{r}(a)+k \sqrt{\frac{\log{N}}{N_a}}$, with $k>0$, one has
    \begin{equation}
        \EE_{\cD}[r(a^\star)-r(\widehat{a}^{\text{UCB}})] \geq \epsilon 
    \end{equation}
\end{lemma}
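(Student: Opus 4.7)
The plan is to construct an explicit two-arm Bernoulli bandit together with a skewed data-collection scheme for which UCB, uniformly in $k>0$, picks the suboptimal arm with probability bounded away from zero. The key structural property I exploit is that UCB's exploration bonus $k\sqrt{\log N/N_a}$ is maximized at the arm with the fewest observations, so if the suboptimal arm is severely under-represented in the offline dataset, then its total UCB score exceeds that of the optimal arm on a large event, regardless of $k$.

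Concretely, I would take arm $1$ to deterministically yield reward $1$ and arm $2$ to be $\text{Bernoulli}(1-\Delta)$ for a small gap $\Delta$ (e.g.\ $\Delta=0.1$), so that arm $1$ is optimal with suboptimality gap $\Delta$. The dataset $\cD$ consists of $N-1$ pulls of arm $1$ and a single pull of arm $2$, giving $\widehat{r}(1)=1$ almost surely together with $N_1=N-1$, $N_2=1$, and $\widehat{r}(2)\in\{0,1\}$ with $\PP(\widehat{r}(2)=1)=1-\Delta$.

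The core step is a pointwise comparison of UCB scores. On the event $\{\widehat{r}(2)=1\}$ one has $\widehat{r}(2)+k\sqrt{\log N/N_2}=1+k\sqrt{\log N}$ and $\widehat{r}(1)+k\sqrt{\log N/N_1}=1+k\sqrt{\log N/(N-1)}$, so UCB strictly prefers arm $2$ for every $k>0$ and every $N\geq 2$. Therefore
\[ \EE_{\cD}[r(a^\star) - r(\widehat{a}^{\text{UCB}})] \;\geq\; \Delta \cdot \PP(\widehat{r}(2)=1) \;=\; \Delta(1-\Delta). \]
Choosing $\Delta=0.1$ gives $0.09\geq\epsilon$ for any $\epsilon<0.05$, and the hypothesis $N\geq 500$ is used only to ensure $\log N$ is well-defined and the bonus expression is meaningful.

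The main obstacle I anticipate is not technical but rather engineering the construction so that the failure mode is uniform in $k$. A more naive bandit would force a case split between ``small $k$'' (where the empirical-mean term dominates) and ``large $k$'' (where the bonus dominates), yielding a weaker bound with unwanted dependence on $k$. By making arm $1$ deterministic with $N-1$ samples and arm $2$ undersampled with a single observation, both the empirical-mean term and the bonus term for arm $2$ simultaneously beat arm $1$'s on the event $\{\widehat{r}(2)=1\}$, collapsing the case analysis into a single clean comparison and yielding the desired $k$-uniform lower bound.
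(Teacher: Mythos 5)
Your construction is correct and follows essentially the same route as the paper's proof: undersample the suboptimal arm so that, on a constant-probability event, its empirical mean is at least the optimal arm's and the strictly larger bonus $k\sqrt{\log N/N_a}$ forces UCB to select it, for every $k>0$. The only differences are cosmetic --- you fix $N_2=1$ deterministically rather than conditioning on the event $\{N(a_2)=1\}$ under a skewed behavior policy, and your fixed gap $\Delta=0.1$ actually delivers the stated constant ($0.09\ge\epsilon$), whereas the paper's own computation only reaches $0.1\epsilon$.
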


\begin{proof}
    The following proof is mainly adapted from Proposition 1 in \citet{rashidinejad2021bridging}.
    Consider a two-arm bandit $\cA=\{1,2\}$, where $r(a_1)=2\epsilon$, and 
    \begin{equation}
        r(a_2) = \left\{ 
            \begin{array}{lc}
                2.1\epsilon & p =0.5 \\
                0 &p=0.5\\
            \end{array}
        \right.
    \end{equation}
    
    and the behavior policy for data collection satisfies $\mu(a_1) = (N-1)/N, \mu(a_2) = 1/N$.
    Then consider the following event $\cE = \{N(a_2)=1\}$, We have $P(\cE)= (1-1/N)^{N-1}$. As long as $N\geq 500$, we have $P(\cE)\geq 0.36$.
    Then with probability $p\geq 0.18$, we have $\widehat{r}(1)=2\epsilon, \widehat{r}(2)=2.1\epsilon$. Note that the bonus term for $a_2$ is larger than for $a_1$ sinee $a_1$ is pulled more than $a_2$, then we have $P(\widehat{a}_{\text{UCB}}=a_2)\geq p=0.18$.

    Finally, we have 
    \begin{equation}
        \EE_{\cD}[r(a^\star)-r(\widehat{a}_{\text{UCB}})] \geq 0.95\epsilon \cdot p \geq 0.1\epsilon.
    \end{equation}

    % By construction we have $\widehat{r}(1)=2\epsilon$ and $\widehat{r}(2)=$

\end{proof}

\begin{lemma}[Failure of LCB]
    \label{lemma:lcb_failure}
    For any $\epsilon < 0.05$, $N \geq 500$, there exists a bandit problem with two arms such that for $\widehat{a}_{\text{LCB}} = \argmax_a
\widehat{r}(a)-k \sqrt{\frac{\log{N}}{N_a}}$, with $k>0$, one has

    \begin{equation}
        \EE_{\cD}\left[\sum_{t=1}^T (r(a^\star)-r(\widehat{a}^{\text{LCB}}_t))\right] \geq 0.1\epsilon\cdot T 
    \end{equation}

\end{lemma}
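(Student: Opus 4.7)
The plan is to mirror the construction in Lemma~\ref{lemma:ucb_failure} but arrange matters so the pessimistic bonus penalizes the truly optimal arm. First I would fix a two-arm bandit with $r(a_1)=\epsilon$ deterministic and $r(a_2)\in\{0,4\epsilon\}$ each with probability $1/2$, so that $a^\star=a_2$ with suboptimality gap $\epsilon$, and I would use the same lopsided data-collection policy $\mu(a_1)=(N-1)/N$, $\mu(a_2)=1/N$ as in the UCB counterexample. The point of this choice is that the optimal arm is simultaneously under-sampled \emph{and} has a non-trivial probability of producing a low empirical estimate, so pessimism will double-count against it.

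Next I would isolate the event $\cE=\{N(a_2)=1\ \text{and the single pull of}\ a_2\ \text{returns}\ 0\}$. The same binomial computation as in Lemma~\ref{lemma:ucb_failure} gives $\PP(N(a_2)=1)=(1-1/N)^{N-1}\ge 0.36$ for $N\ge 500$, and the independent reward factor is $1/2$, so $\PP(\cE)\ge 0.18$. On $\cE$ we have $\widehat r(a_2)=0\le \epsilon=\widehat r(a_1)$ and $N(a_2)=1<N-1=N(a_1)$, which makes the pessimistic penalty on $a_2$ strictly larger than on $a_1$. Hence for every $k>0$,
\$
\widehat r(a_2)-k\sqrt{\frac{\log N}{N(a_2)}}
\;<\;\widehat r(a_1)-k\sqrt{\frac{\log N}{N(a_1)}},
\$
so $\widehat a^{\text{LCB}}=a_1$ deterministically on all of $\cE$.

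Because $\widehat a^{\text{LCB}}$ is a fixed function of the offline dataset and is replayed at every online round, on $\cE$ the per-round suboptimality equals $r(a^\star)-r(a_1)=\epsilon$, and therefore
\$
\EE_\cD\Bigl[\sum_{t=1}^T\bigl(r(a^\star)-r(\widehat a^{\text{LCB}}_t)\bigr)\Bigr]
\;\ge\;\PP(\cE)\cdot\epsilon\cdot T\;\ge\;0.18\,\epsilon\,T\;\ge\;0.1\,\epsilon\,T.
\$
The main obstacle I anticipate is the same subtlety handled implicitly in Lemma~\ref{lemma:ucb_failure}: the bandit must be valid for \emph{any} $k>0$, so one cannot rely on the pessimism bonus alone to flip the ordering of arms when $k$ is small. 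Choosing a stochastic reward for $a_2$ whose support includes $0$ sidesteps this cleanly, because it makes the empirical order $\widehat r(a_2)\le \widehat r(a_1)$ already hold on $\cE$; the LCB penalty only reinforces that ordering, so the final bound is independent of the magnitude of $k$.
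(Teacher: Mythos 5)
Your proposal is correct and follows essentially the same route as the paper's own proof: the identical two-arm construction with $r(a_1)=\epsilon$, $r(a_2)\in\{0,4\epsilon\}$, the same lopsided behavior policy, the same event that $a_2$ is pulled once and returns $0$ with probability at least $0.18$, and the same observation that no information about $a_2$ is ever gained afterward, yielding the $0.1\epsilon T$ lower bound. No gaps.
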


\begin{proof}
    % The following proof is mainly adapted from Proposition 1 in \citet{rashidinejad2021bridging}.
    Similar to Lemma~\ref{lemma:ucb_failure}, consider a two-arm bandit $\cA=\{1,2\}$, where $r(a_1)=\epsilon$, and 
    \begin{equation}
        r(a_2) = \left\{ 
            \begin{array}{lc}
                4\epsilon & p =0.5 \\
                0 &p=0.5\\
            \end{array}
        \right.
    \end{equation}
    
    and the behavior policy for data collection satisfies $\mu(a_1) = (N-1)/N, \mu(a_2) = 1/N$.
    Then consider the following event $\cE = \{N(a_2)=1\}$, We have $P(\cE)= (1-1/N)^{N-1}$. As long as $N\geq 500$, we have $P(\cE)\geq 0.36$.
    Then with probability $p\geq 0.18$, we have $\widehat{r}(1)=\epsilon, \widehat{r}(2)=0$. Note that the bonus term for $a_2$ is larger than for $a_1$ sinee $a_1$ is pulled more than $a_2$, then for any $t$, we have $P(\widehat{a}^{\text{LCB}}=a_1)\geq p=0.18$. Note that pulling $a_1$ does not obtain any information, we can conclude that $P(\widehat{a}^{\text{LCB}}_t=a_1)\geq p=0.18$ for any $t$.

    Finally, we have 

    \begin{equation}
        \EE_{\cD}\left[\sum_{t=1}^T (r(a^\star)-r(\widehat{a}^{\text{LCB}}_t))\right] \geq \epsilon \cdot p \cdot T \geq 0.1\epsilon\cdot T
    \end{equation}

    % Then 
    % By construction we have $\widehat{r}(1)=2\epsilon$ and $\widehat{r}(2)=$

\end{proof}

\section{Auxiliary Lemmas}
\begin{lemma}[Regret Decomposition~\citep{cai2020provably}]
    \label{regret_decompose}
    We define the model prediction error as 
\#\label{eq:w11260901}
\iota_{k,h}(s,a) = r_{k,h}(s,a) + (\BB_h V_{k,h+1})(s,a) - Q_{k,h+1}(s,a),
\#
which arises from estimating $\mathbb{P}_h V^k_{h+1}$ in the Bellman equation based on only finite historical data. 
Also, we define the following filtration generated by the state-action sequence and reward functions.

\begin{definition}[Filtration]\label{def:w001}
 For any $(t)\in[K]\times[H]$, we define $\cF_{t,1}$ as the $\sigma$-algebra generated by the following state-action sequence and reward functions,
 \$
 \{(s_{\tau,i}, a_{\tau,i})\}_{(\tau, i)\in [k-1] \times [H]} \cup \{r^\tau\}_{\tau\in [k]} \cup \{(s_{k,h}, a_{k,h})\}_{i\in [h]} ,
 \$
 and $\cF_{t,2}$ as the $\sigma$-algebra generated by 
  \$
& \{(s_{\tau,i}, a_{\tau,i})\}_{(\tau, i)\in [k-1] \times [H]} \cup \{r^\tau\}_{\tau\in [k]}   \cup \{(s_{k,h}, a_{k,h})\}_{i\in [h]} \cup \{s^k_{h+1}\},
 \$
where, for the simplicity of discussion, we define $s^k_{H+1}$ as a null state for any $k\in [K]$. 
\end{definition}

    It holds that
    \# \label{1015243}
    \text{Regret}(T) &= \sum_{k=1}^K \bigl(V^{\pi^*,k}_1(s^k_1) - V^{\pi^k,k}_1(s^k_1)\bigr) \notag\\
    &=  \sum_{k=1}^K\sum_{h=1}^H \EE_{\pi^*} \bigl[ \la Q^{k}_h(s_h,\cdot), \pi^*_h(\cdot\,|\,s_h) - \pi_{k,h}(\cdot\,|\,s_h) \ra \bigr] + \cM_{K, H, 2} \notag \\
    &\qquad+\sum_{k=1}^K\sum_{h=1}^H\bigl( \EE_{\pi^*}[\iota^{k}_h(s_h,a_h)] - \iota^{k}_h(s_{k,h},a_{k,h})\bigr).
    \#
    Here $\{\cM_{t,m}\}_{(t,m)\in[K]\times[H]\times[2]}$ is a martingale adapted to the filtration $\{\cF_{t,m}\}_{(t,m)\in[K]\times[H]\times[2]}$. %Also, $\EE_{\pi^*}[\cdot]$ denotes the expectation with respect to the randomness of the state-action sequence $\{(s_h,a_h)\}_{h=1}^H$ with the initial state $s_1 = s^k_1$.
\end{lemma}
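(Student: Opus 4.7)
The plan is to follow the standard regret decomposition argument of \citet{cai2020provably}, which splits the per-episode gap $V^{\pi^*,k}_1(s^k_1) - V^{\pi^k,k}_1(s^k_1)$ by inserting the auxiliary value $V^k_1(s^k_1)$ associated with the algorithm's current $Q$-function estimate. Writing $V^{\pi^*}_1 - V^{\pi_k}_1 = (V^{\pi^*}_1 - V^k_1) + (V^k_1 - V^{\pi_k}_1)$, I would handle the two differences with slightly different expansions and then sum over $k=1,\ldots,K$.

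For the first piece $V^{\pi^*}_h - V^k_h$, I would unroll backward over $h$ under the optimal policy. Since $V^k_h(s) = \langle Q^k_h(s,\cdot), \pi_{k,h}(\cdot\mid s)\rangle$, a direct algebraic rearrangement gives $V^{\pi^*}_h - V^k_h = \langle Q^{\pi^*}_h - Q^k_h, \pi^*_h\rangle + \langle Q^k_h, \pi^*_h - \pi_{k,h}\rangle$. Applying the Bellman equation for $Q^{\pi^*}$ together with the definition \eqref{eq:w11260901} of the model prediction error $\iota^k_h$, the first summand becomes a conditional expectation of $V^{\pi^*}_{h+1} - V^k_{h+1}$ minus $\iota^k_h(s,a)$. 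Iterating this recursion and taking the outer expectation under $\pi^*$ produces exactly $\sum_h \EE_{\pi^*}[\langle Q^k_h(s_h,\cdot), \pi^*_h - \pi_{k,h}\rangle] + \sum_h \EE_{\pi^*}[\iota^k_h(s_h,a_h)]$ with the appropriate signs.

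For the second piece $V^k_1(s^k_1) - V^{\pi_k}_1(s^k_1)$, I would expand along the actual sample trajectory $\{(s^k_h, a^k_h)\}_{h=1}^H$ generated by $\pi_k$. A one-step expansion yields $-\iota^k_h(s^k_h, a^k_h)$, the remaining residual $V^k_{h+1}(s^k_{h+1}) - V^{\pi_k}_{h+1}(s^k_{h+1})$, and two centering differences that are mean zero conditional on the appropriate sub-$\sigma$-algebra from Definition~\ref{def:w001}: (i) replacing $\langle Q^k_h(s^k_h,\cdot), \pi_{k,h}\rangle$ by $Q^k_h(s^k_h, a^k_h)$, adapted to $\cF_{t,1}$, and (ii) replacing the conditional expectation of $V^k_{h+1}$ given $(s^k_h, a^k_h)$ by $V^k_{h+1}(s^k_{h+1})$, adapted to $\cF_{t,2}$. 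Telescoping over $h$ and summing over $k$ collects all such differences into the single martingale $\cM_{K,H,2}$ in the statement.

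Combining the two expansions then produces the three-term identity \eqref{1015243}. The main obstacle is the bookkeeping: one must track signs carefully so that $-\iota^k_h(s^k_h,a^k_h)$ from the trajectory expansion aligns with $\EE_{\pi^*}[\iota^k_h(s_h,a_h)]$ from the $\pi^*$ expansion to yield the stated difference, and one must verify term by term that each putative martingale increment is mean zero with respect to the correct element of $\{\cF_{t,m}\}$, so that the cumulative sum is a bona fide martingale rather than merely a telescoping object.
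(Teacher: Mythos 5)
Your proposal is correct and takes essentially the same route as the paper's proof, which simply defers to Lemma 4.2 of \citet{cai2020provably}: split off the algorithm's value estimate $V^k_1$, unroll $V^{\pi^*}_1 - V^k_1$ recursively under $\pi^*$, and unroll $V^k_1 - V^{\pi_k}_1$ along the sampled trajectory, collecting the centering terms into the martingale $\cM_{K,H,2}$. Two minor imprecisions that your own bookkeeping caveat already covers: with the convention $\iota^k_h = r_h + (\BB_h V^k_{h+1}) - Q^k_h$, the $\pi^*$-recursion produces $+\,\EE_{\pi^*}[\iota^k_h]$ rather than the intermediate ``minus'' you wrote (your final formula has the correct sign), and the martingale increments are centered versions of the \emph{differences} $Q^k_h - Q^{\pi_k}_h$ and $V^k_{h+1} - V^{\pi_k}_{h+1}$, not of $Q^k_h$ and $V^k_{h+1}$ alone.
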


\begin{proof}
    See Lemma 4.2 in \citet{cai2020provably} for a detailed proof.
\end{proof}

\begin{lemma}[Mutual Information in Linear MDP]
    \label{linear_mi}
    It hold that 
    \$
    I_{t}(w_h;a_{t}, r_{t}, s_{t+1}|\cD) = \frac{1}{2} \log{(1+\phi(s_{t},a_{t})^\top\Lambda_t^{-1}\phi(s_{t},a_{t}))}.
    \$
\end{lemma}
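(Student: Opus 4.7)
The plan is to exploit the Gaussian conjugacy that the linear MDP structure naturally induces. Since $Q_{w_h}(s,a) = \langle w_h, \phi(s,a)\rangle$ and both transition and reward are linear in $\phi$, placing a Gaussian prior on $w_h$ and treating the one-step evidence as linear-Gaussian produces a Gaussian posterior at step $t$ with precision matrix $\Lambda_t = \lambda I + \sum_{\tau < t}\phi(s_\tau,a_\tau)\phi(s_\tau,a_\tau)^\top$. Under this conjugacy, $I_t(w_h;a_t,r_t,s_{t+1}\mid \cD)$ reduces to the mutual information between a Gaussian parameter and a single linear-Gaussian observation with feature vector $\phi(s_t,a_t)$, which is exactly the setting where the log-determinant formula arises.

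First I would rewrite the mutual information as a difference of differential entropies,
$$I_t(w_h; a_t, r_t, s_{t+1}) = H_t(w_h \mid a_t) - H_t(w_h \mid a_t, r_t, s_{t+1}).$$
Since $a_t$ is chosen using only history $\cH_t$ and not $w_h$ itself, conditioning on $a_t$ does not change the posterior over $w_h$, so the first entropy is that of a Gaussian with covariance $\Lambda_t^{-1}$. The Bayesian update corresponding to one new linear-Gaussian observation at feature $\phi(s_t,a_t)$ yields a posterior with precision $\Lambda_t + \phi(s_t,a_t)\phi(s_t,a_t)^\top$, so the second entropy is that of a Gaussian with covariance $(\Lambda_t + \phi\phi^\top)^{-1}$.

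Next, plugging in the Gaussian entropy formula $H(\mathcal{N}(\mu,\Sigma)) = \tfrac{1}{2}\log\det(2\pi e\,\Sigma)$, the normalization constants cancel and I obtain
$$I_t(w_h; a_t, r_t, s_{t+1}) = \tfrac{1}{2}\bigl[\log\det(\Lambda_t + \phi\phi^\top) - \log\det(\Lambda_t)\bigr].$$
The closing step is the matrix determinant lemma applied to the rank-one update: $\det(\Lambda_t + \phi\phi^\top) = \det(\Lambda_t)\,(1 + \phi^\top\Lambda_t^{-1}\phi)$. Taking logarithms and dividing by $2$ produces the claimed identity.

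The main obstacle I anticipate is justifying that the joint observation $(r_t, s_{t+1})$ contributes information only through the rank-one statistic $\phi(s_t,a_t)\phi(s_t,a_t)^\top$. The reward channel is already linear-Gaussian in $\phi$, but the next-state channel, though also mediated by the linear decomposition $\PP_h(\cdot\mid s,a) = \langle \phi(s,a), \mu_h(\cdot)\rangle$, must be cast into the same conjugate form so that its Fisher-information contribution is proportional to the same outer product. This is handled by fixing the Gaussian observation model implicitly used in the information-theoretic analysis (absorbing the unit noise variance into $\lambda$ and $\Lambda_t$), after which the per-query information matrix is indeed rank one and the matrix determinant lemma closes the argument cleanly.
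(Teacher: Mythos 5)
Your proposal is correct and follows essentially the same route as the paper: a Gaussian prior on $w_h$ with linear-Gaussian evidence gives a posterior precision $\Lambda_t$, the mutual information is written as a difference of Gaussian differential entropies, and the rank-one determinant identity $\det(\Lambda_t+\phi\phi^\top)=\det(\Lambda_t)(1+\phi^\top\Lambda_t^{-1}\phi)$ closes the computation. Your closing remark about needing to cast the next-state channel into the same conjugate Gaussian form is a real subtlety that the paper's proof also passes over implicitly by simply positing the Gaussian posterior update (noting only its equivalence to the regularized least-squares solution), so you have not missed anything relative to the paper.
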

\begin{proof}
    Let the prior be $w_h \sim \cN(0,\lambda\cdot I)$, then we have the following closed form posterior 
    \$
    w_h | \cD \sim  \cN(\widehat{w}_h,\Lambda_{t}^{-1}), 
    \$
    where 
    \$
    \widehat{w}_h &= \Lambda_h^{-1}(\sum_{k=1}^K \phi(s_{t},a_{t})\cdot(r_{t}+\widehat{V}_{h+1}(s_{t+1}))), \\
    \Lambda_h &= \sum_{k=1}^K \phi(s_{t},a_{t})\phi(s_{t},a_{t})^\top+\lambda\cdot I.
    \$
    Note that this is equivalent to the regularized least-square solution for linear MDPs~\citep{jin2021pessimism}.
    Then we have 
    \$
    I_{t}(w_h;a_{t},r_{t}, s_{t+1}|\cD) &= H(w_h|\cD) - H(w_h|\cD \cup \{(r_{t}, a_{t},s_{t+1})\}) \\
    &= \frac{1}{2} \log{\frac{\det(\Lambda_{t}^\dagger)}{\det(\Lambda_{t})}} \\
    &= \frac{1}{2} \log{\det(I+\Lambda_{t}^{-1/2}\phi(s_{t},a_{t})\phi(s_{t},a_{t})^\top \Lambda_{t}^{-1/2})}\\
    &= \frac{1}{2} \log{(1+\phi(s_{t},a_{t})^\top\Lambda_{t}^{-1}\phi(s_{t},a_{t}))}.\\
    \$
    where $\Lambda_{t}^\dagger = \Lambda_{t} + \phi(s_h,a_h)\phi(s_h,a_h)^\top$.
\end{proof}
\begin{lemma}
    \label{linear_uncertainty}
    Under linear MDP, we have 
    \$
    \PP_k \left(\left|Q_{t,w}(s,a)-\bar{Q}_{t,w}(s,a)\right|\leq \frac{\Gamma_t}{2}\sqrt{I_t(w_h;r_{t,a},s_{t+1,a})}, \forall h \in [H], s \in \cS, a\in \cA\right) \geq 1-\frac{\delta}{2} 
    \$
    With $\Gamma_t \equiv \Gamma = 2 c Hd \sqrt{\log{\frac{4dT}{\delta}}},$ where $c$ is an absolute constant and $\bar{Q}_{t,w}(s,a)=r_{h,w}(s,a)+\PP_h V_{h+1,w}(s,a)$. 

\end{lemma}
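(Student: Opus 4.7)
The plan is to establish a pointwise Gaussian-type confidence bound on $Q_{t,w}(s,a)$ around its posterior mean $\bar Q_{t,w}(s,a)$, then convert the resulting variance proxy into the mutual-information form using Lemma~\ref{linear_mi}. First I would exploit the linear MDP structure of Definition~\ref{assumption:linear}: there exists $w_h\in\mathbb{R}^d$ (absorbing the reward parameter $\theta_h$ and the transition integral $\int V_{h+1,w}\, d\mu_h$) such that $Q_{h,w}(s,a)=\langle \phi(s,a), w_h\rangle$. Under the Gaussian prior $w_h\sim\mathcal{N}(0,\lambda^{-1}I)$ used in the proof of Lemma~\ref{linear_mi}, the posterior at episode/step $t$, conditioned on $\cH_{k,h}$, is $\mathcal{N}(\hat w_h,\Lambda_t^{-1})$ with $\Lambda_t = \lambda I + \sum_k \phi(s_{k,h},a_{k,h})\phi(s_{k,h},a_{k,h})^\top$.

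Second, I would write the deviation as $Q_{t,w}(s,a)-\bar Q_{t,w}(s,a)=\phi(s,a)^\top(w_h-\hat w_h)$ and apply Cauchy--Schwarz in the $\Lambda_t$-norm to obtain
\[
\bigl|\phi(s,a)^\top(w_h-\hat w_h)\bigr| \;\le\; \|w_h-\hat w_h\|_{\Lambda_t}\cdot \|\phi(s,a)\|_{\Lambda_t^{-1}}.
\]
Since $\|w_h-\hat w_h\|_{\Lambda_t}^2$ is $\chi^2_d$-distributed under the Gaussian posterior, standard chi-squared concentration gives $\|w_h-\hat w_h\|_{\Lambda_t}\le c_1\sqrt{d\log(1/\delta')}$ with posterior probability at least $1-\delta'$. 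The key subtlety is that $w_h$ depends on $V_{h+1,w}$, which is itself random, so I would control this uniformly by a standard $\epsilon$-net/covering argument over the induced class of linear value functions (parameters lie in a compact ball of radius $O(H\sqrt{d})$), yielding a covering number of order $(HdT/\delta)^d$. Union-bounding over the net and over $h\in[H]$, and choosing $\epsilon$ polynomially small in $T$, produces the factor $\sqrt{d\log(4dT/\delta)}$ in $\Gamma_t$; the additional $H$ comes from the standard range bound $\|V_{h+1,w}\|_\infty\le H$ propagated through the weight $w_h$.

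Third, I would convert $\|\phi(s,a)\|_{\Lambda_t^{-1}}$ into the mutual information. By Lemma~\ref{linear_mi}, $I_{t}(w_h;r_{t,a},s_{t+1,a})=\tfrac{1}{2}\log\bigl(1+\phi(s,a)^\top\Lambda_t^{-1}\phi(s,a)\bigr)$. Because $\|\phi\|\le 1$ and $\Lambda_t\succeq \lambda I$ (take $\lambda\ge 1$), the scalar $x=\phi(s,a)^\top\Lambda_t^{-1}\phi(s,a)$ lies in a bounded interval $[0,1/\lambda]$, on which $x\le c_2\log(1+x)$ for an absolute constant $c_2$. Hence $\|\phi(s,a)\|_{\Lambda_t^{-1}}\le \sqrt{2c_2\, I_{t}(w_h;r_{t,a},s_{t+1,a})}$. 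Combining this with the chi-squared bound gives
\[
\bigl|Q_{t,w}(s,a)-\bar Q_{t,w}(s,a)\bigr| \;\le\; c\, Hd\sqrt{\log(4dT/\delta)}\cdot \sqrt{I_{t}(w_h;r_{t,a},s_{t+1,a})},
\]
which, after identifying the $\tfrac{1}{2}$ factor, matches the stated $\Gamma_t=2cHd\sqrt{\log(4dT/\delta)}$.

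The main obstacle I anticipate is the covering argument for value-function-dependent parameters $w_h$: unlike the linear bandit case where the feature-parameter pair is fixed, here $w_h$ is defined through the still-random next-step value function $V_{h+1,w}$, so the confidence set must be uniform over an entire function class. This is analogous to the self-normalized martingale plus $\epsilon$-net technique in \citet{jin2020provably,jin2021pessimism}, but must be carried out in the Bayesian setting, which determines the explicit dependence of $\Gamma_t$ on $H$, $d$, and $\log(T/\delta)$. All other steps---the Gaussian posterior structure, chi-squared tail bound, and the elementary inequality $x\le c_2\log(1+x)$ on a bounded range---are routine.
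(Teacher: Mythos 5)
Your proposal is correct and follows essentially the same route as the paper: first bound the deviation by $\beta\,\|\phi(s,a)\|_{\Lambda_t^{-1}}$ with $\beta = cHd\sqrt{\log(4dT/\delta)}$ via a uniform, covering-based concentration over the value-function class, then convert $\phi(s,a)^\top\Lambda_t^{-1}\phi(s,a)$ into the mutual information using $x \le 2\log(1+x)$ on $[0,1]$ together with Lemma~\ref{linear_mi}. The only difference is in how the first step is justified: the paper simply cites the frequentist self-normalized bound of Lemma 5.2 in \citet{jin2021pessimism}, whereas you derive it from the Gaussian posterior (Cauchy--Schwarz in the $\Lambda_t$-norm, a $\chi^2_d$ tail bound, and the $\epsilon$-net over induced value functions), which is arguably better matched to the posterior probability $\PP_k$ in the statement and yields the same $\Gamma$ up to the absolute constant.
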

\begin{proof}
    
    % Note that $Q_{h,w}(s,a) = \phi(s,a)^\top w_h$ has a variance $\phi(s,a)^\top\Lambda_h^{-1}\phi(s,a)$, by Chernoff bound we have  
    % \$
    % &\PP_k \left(|Q_{t,w}(s,a)-\bar{Q}_{t,w}(s,a)|\geq \frac{\Gamma_t}{2}\sqrt{I_t(w_h;r_{t,a},s_{t+1,a})}\right) \\ 
    % \leq & 2 \exp\left(-\frac{(\frac{\Gamma_t}{2}\sqrt{I_t(w_h;r_{t,a},s_{t+1,a})})^2}{2\phi(s,a)^\top\Lambda_h^{-1}\phi(s,a)}\right) \\
    % \leq & 2 \exp\left(-\frac{\Gamma^2_t I_t(w_h;r_{t,a},s_{t+1,a})}{8\phi(s,a)^\top\Lambda_h^{-1}\phi(s,a)}\right) \\ 
    % = & 2 \exp\left(-\frac{\log{(1+\phi(s_h,a_h)^\top\Lambda_h^{-1}\phi(s_h,a_h))}}{\phi(s,a)^\top\Lambda_h^{-1}\phi(s,a)} \frac{\sigma^2_{max}}{\log(1+\sigma^2_{\max})} \log(\frac{4}{\delta})\right) \\
    % \leq &\frac{\delta}{2}.
    % \$
    Following a similar argument in Lemma 5.2 in \citet{jin2021pessimism}, We have with probability $1-\delta$,
     \$
     & \left|Q_{t,w}(s,a)-\bar{Q}_{t,w}(s,a)\right|\\
     \leq & \beta \sqrt{\phi(s,a)\Lambda_h^{-1}\phi(s,a)} \\
     \leq & \beta \sqrt{\log(1+\phi(s,a)\Lambda_h^{-1}\phi(s,a))\cdot \frac{\phi(s,a)\Lambda_h^{-1}\phi(s,a)}{\log(1+\phi(s,a)\Lambda_h^{-1}\phi(s,a))}} \\
     \leq & \beta \sqrt{2\log(1+\phi(s,a)\Lambda_h^{-1}\phi(s,a)) } \\
     \leq & 2\beta \sqrt{I_t(w_h;r_{t,a},s_{t+1,a}) },
    \$
    where $\beta = cHd \sqrt{\log{\frac{4dT}{\delta}}}$ and $\Lambda_h$ is defined as in Lemma~\ref{linear_mi}. The last inequality use the fact that $\phi(s,a)\Lambda_h^{-1}\phi(s,a)\leq 1$ and $2 \log(1+x) \geq x $ for $x\in[0,1]$. The last step follows from Lemma~\ref{linear_mi}.

\end{proof}

% \subsection{Proof of Theorem~\ref{thm:aps_online}}
% \label{sec: aps_online}
% \subsection{Proof of Theorem~\ref{thm:aps_offline}}
% \label{sec: aps_offline}

\clearpage
\section{Experiments on Bernoulli Bandits}
\label{appendix: multi-arm bandit}

\begin{figure}[h]
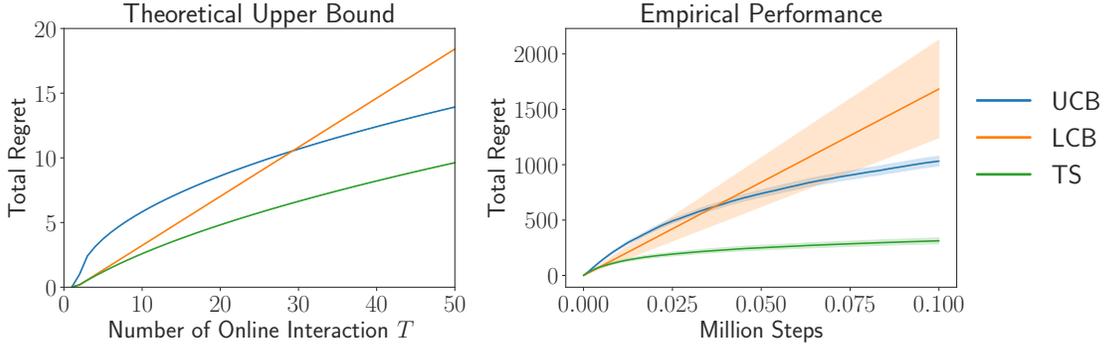

    \centering
    \subfigure{\includegraphics[scale=0.4]{theory_predict.pdf}}
    \subfigure{\includegraphics[scale=0.4]{bandit.pdf}}
    \caption{Theoretical upper bounds in Theorem~\ref{theorem:1} and experiment result on the Bernoulli bandit. 
    }
    \label{fig: appendix theory}
\end{figure}

\begin{figure}[h]
    \centering
    \subfigure[Hard switch of LCB to UCB]{\includegraphics[width=0.45\linewidth]{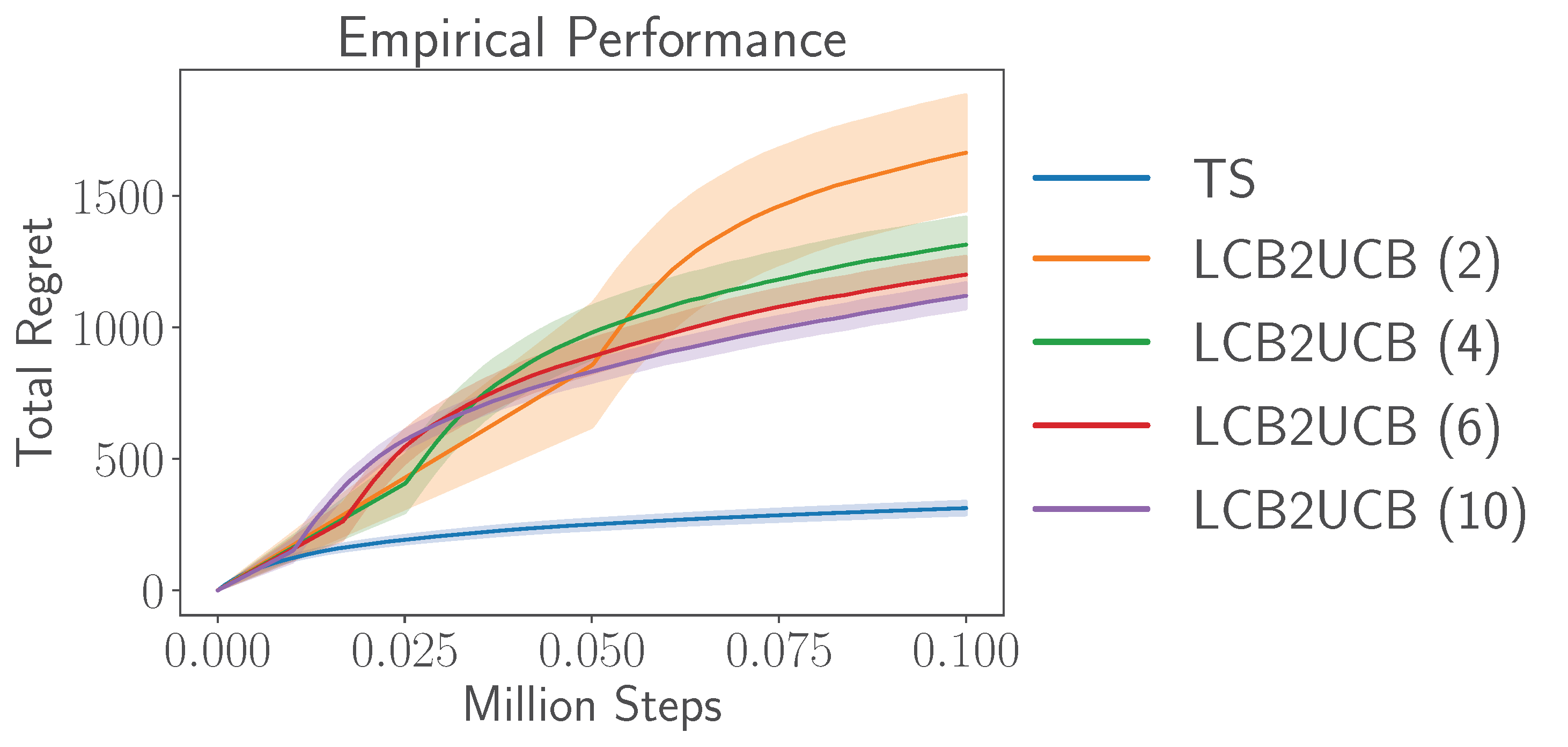}}\subfigure[Soft switch of LCB to UCB]{\includegraphics[width=0.45\linewidth]{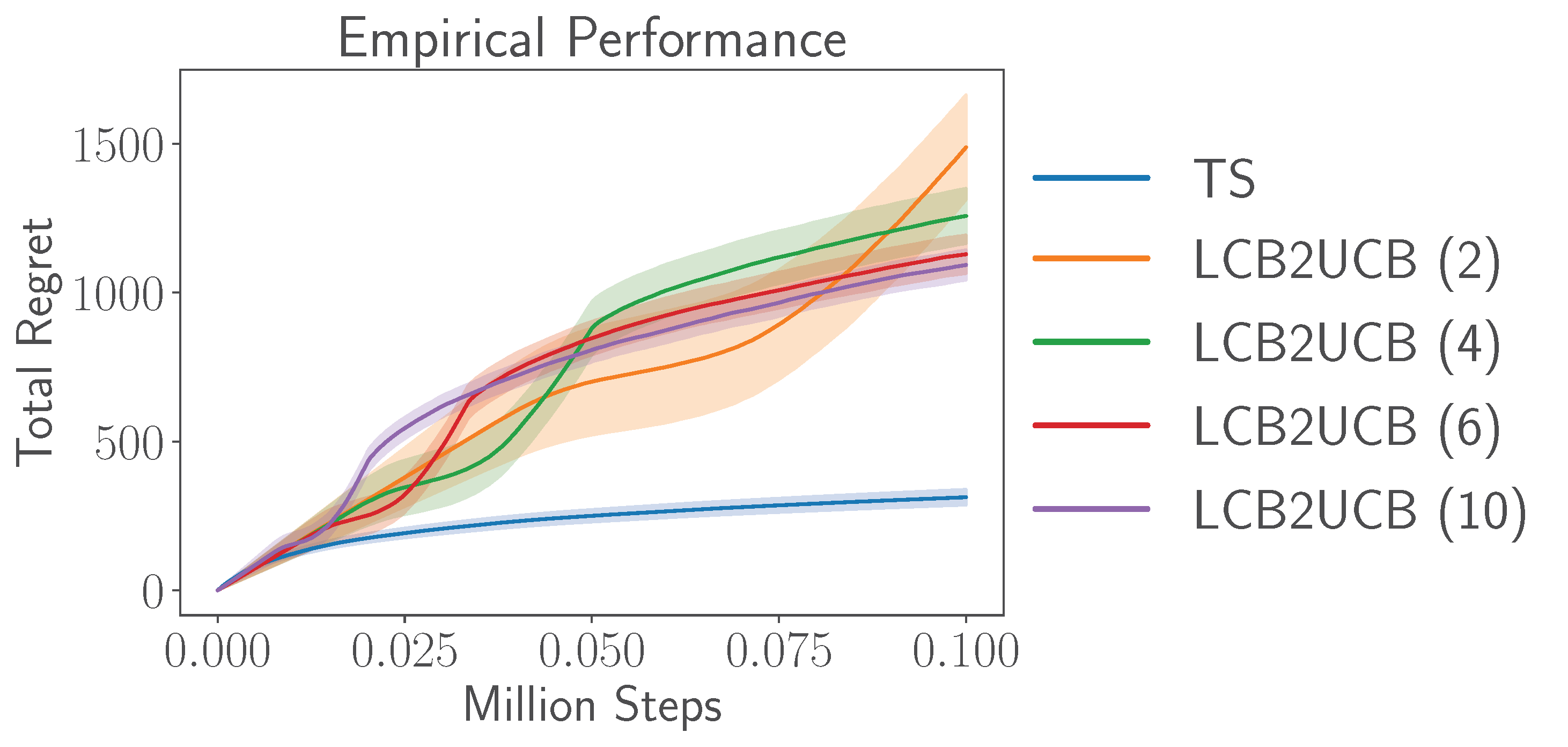}}
    \caption{Performance of different switch schemes from LCB to UCB on the Bernoulli bandit. 
    It incurs a large regret to switch from pessimism to optimism regardless of interpolation schemes.
    $x$ in \texttt{LCB2UCB}~($x$) represents the switch parameter.
    }
    \label{fig: switch}
\end{figure}

% We conduct experiments on the multi-arm bandits, where the number of the arm $n$ is 10.
% When we pull an arm $i$, we are rewarded with probability $p_i$.
% In this task, we need to find the best arm $p^*=\max\{p_i\}_{i=1}^{n}$.
% We use the random policy to pull the arm and collect data to generate an offline dataset.
% Then, we calculate the statistical variables based on the offline dataset, which are used to obtain $\pi_{\rm UCB}$, $\pi_{\rm LCB}$ and $\pi_{\rm TS}$ policies.
% Further, we use these policies to pull the arm and update the statistical variables.
% In each iteration, we calculate $\text{Regret}=p^*-p_{\pi}$.
% The cumulative regret is shown in Figure~\ref{fig: appendix theory}.
% The experimental results illustrate that Equation~\ref{eq:main_result} aligns well with multi-arm bandit setting results. 

\paragraph{Experiment Setup.} We consider a didactic Bernoulli bandit, where each arm pull yielded a binary reward based on probability $p_i$. the parameters $p_n$ are i.i.d. drawn from a Beta distribution for all arms $a_i$. The objective was to identify the arm associated with the maximum probability, $p^* = \max\{p_i\}_{i=1}^{n}$.
We use a uniform policy to collect the offline data. We use 10 arms, 1000 offline data points and the online phase lasts for 100000 steps.

\paragraph{Baseline Setup.} We use both offline and online data to calculate the statistics (i.e., mean of the reward $\widehat{\mu}$ and the corresponding confidence bound $\widehat{\sigma}$). Then Upper Confidence Bound (UCB) policy, $\pi_{\rm UCB}$ select arms by $\argmax_i \widehat{\mu}(a_i)+\widehat{\sigma}(a_i)$, while the Lower Confidence Bound (LCB) policy, $\pi_{\rm LCB}$ select arms by $\argmax_i \widehat{\mu}(a_i)-\widehat{\sigma}(a_i)$.

We also consider two switching scheme that switches from LCB to UCB. The soft switch interpolates the confidence weight by $k_t= \min \{At/T-1, 1\}$ where $A$ is a parameter. Then it select the arm by $\pi^t_{\rm soft} = \argmax_i \widehat{\mu}(a_i)+k_t\cdot\widehat{\sigma}(a_i)$. The hard switch interpolates the confidence weight by $k'_t= 2*\mathbbm{1}\{t\geq T/B\}-1$ where $B$ is the parameter. Then it select the arm by $\pi^t_{\rm hard} = \argmax_i \widehat{\mu}(a_i)+k'_t\cdot\widehat{\sigma}(a_i)$.

% Statistical attributes calculated from this dataset were employed to derive  the Lower Confidence Bound (LCB) policy, $\pi_{\rm LCB}$; and the Thompson Sampling (TS) policy, $\pi_{\rm TS}$.

\paragraph{Results.} The results are shown in Figure 4 and Figure 5, respectively.
We can see in Figure~\ref{fig: appendix theory} that UCB performs badly during the initial stage (i.e., the regret increases quickly), while LCB suffers from a linear regret in the long run. 
Bayesian agents, on the contrary, enjoys guarantees from the both world, which performs well at initial stage while achieving a sublinear regret. From Figure~\ref{fig: switch}, we can see that naively switches from LCB to UCB leads to suboptimal performance regardless of the interpolation scheme. 
It suffers from a sudden increase in regret the first time pessimism is switched to optimism.
On the contrary, Bayes-based methods enjoys a smooth regret curve.

% To better illustrate the advantage of the posterior sampling approach, 

% Subsequent arm pulls leveraged these policies, updating statistical variables each iteration. The regret, defined as the difference between $p^*$ and the chosen policy's probability $p_{\pi}$, was computed in each round. Accumulative regret was plotted and illustrated in Figure~\ref{fig: appendix theory}. The results confirmed the effective alignment of the experimental outcome with the theoretical results expressed by Equation~\ref{eq:main_result} in the context of the multi-arm bandit model.

% Specifically, in the bandit setting, we let each arm $a$ has a probability $\theta$ to yield a reward of 1, and a probability $1-\theta$ to yield a reward of 0. the parameters $\theta$ are i.i.d. drawn from a Beta distribution for all arms $a_n$. 

\clearpage
\section{Complete Experimental Results}
\label{appendix: complete exp}

The results in Figure~\ref{appendix fig: TD3 result} show TD3+BC exhibits safe but slow performance improvement, resulting in worse asymptotic performance.
On the other hand, TD3 suffers from initial performance degradation, especially in narrow distribution datasets~(e.g., expert datasets).
Differently, BOORL attains a fast performance improvement with a smaller regret.
Due to the offline bootstrap, the initial performance in the online phase between BOORL and baselines exits a small difference, while it does not change the conclusion.

\begin{figure}[h]
    \centering\subfigure{\includegraphics[scale=0.22]{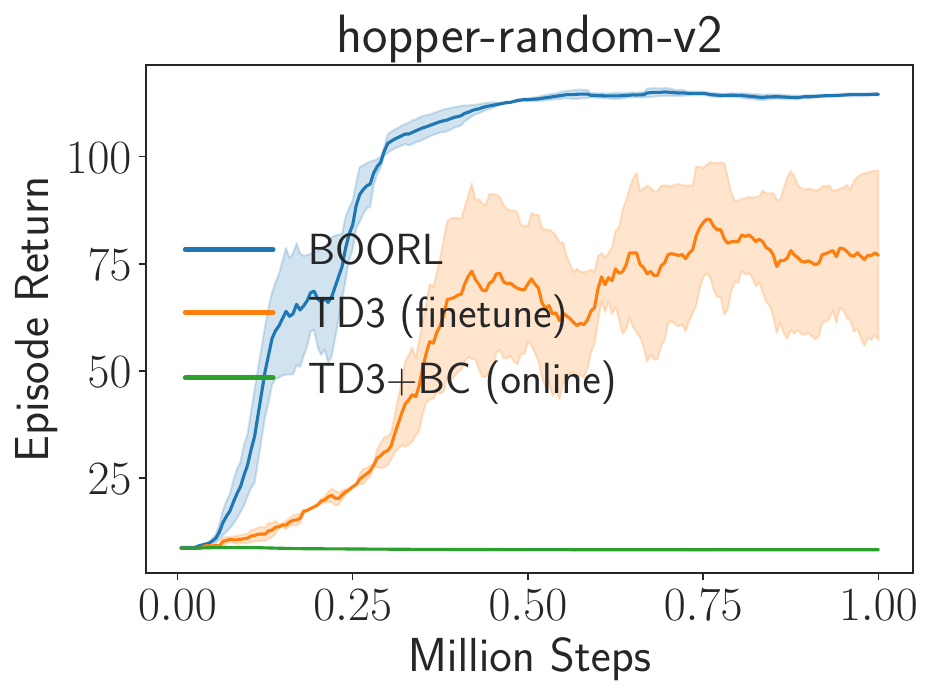}}\subfigure{\includegraphics[scale=0.22]{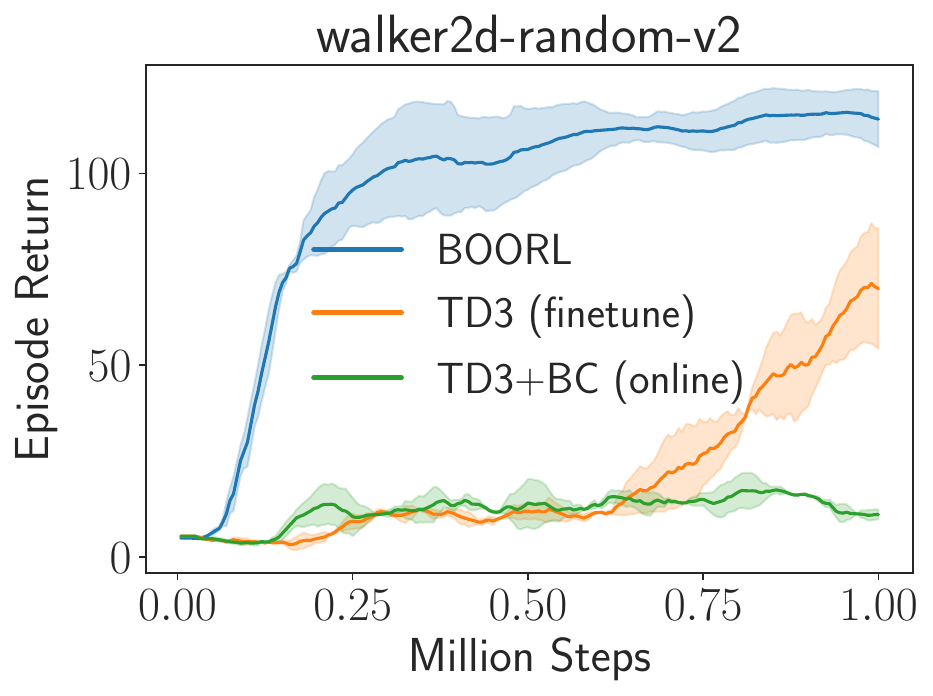}}\subfigure{\includegraphics[scale=0.22]{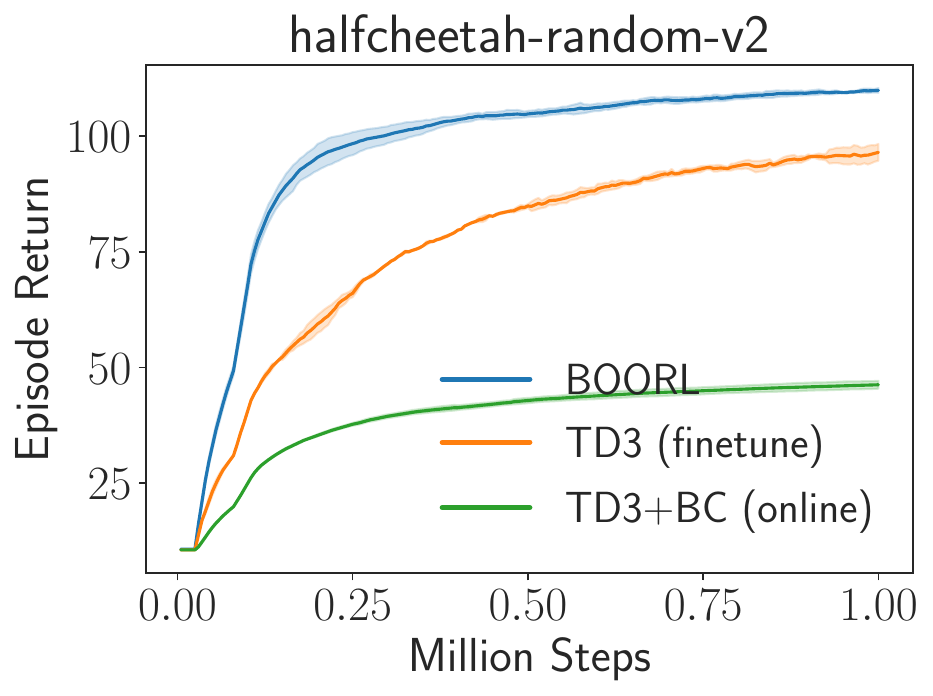}}\subfigure{\includegraphics[scale=0.22]{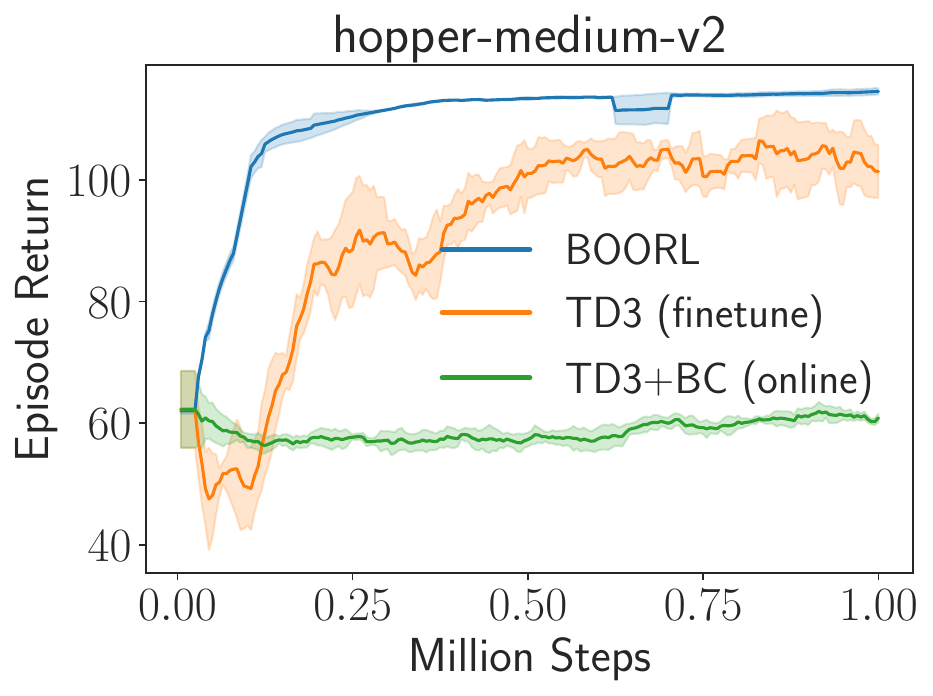}}
     \subfigure{\includegraphics[scale=0.22]{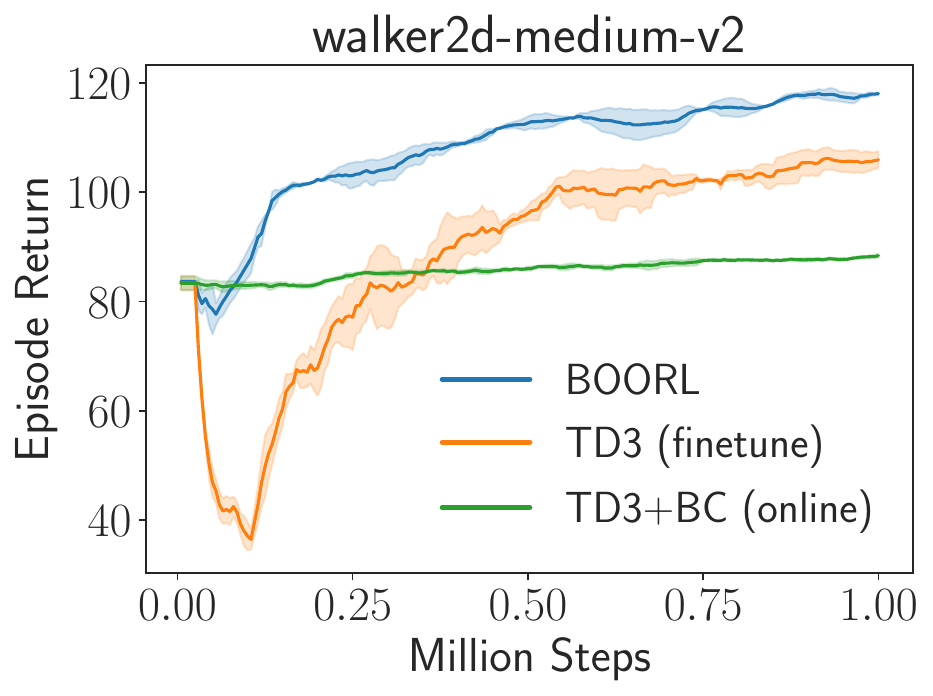}}\subfigure{\includegraphics[scale=0.22]{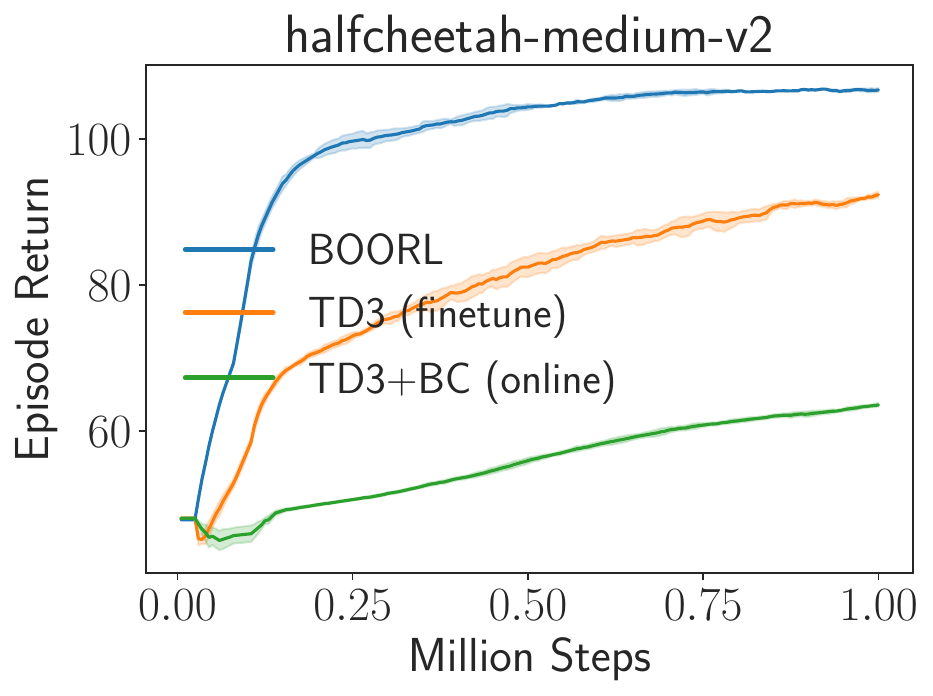}}% \subfigure{\includegraphics[scale=0.22]{}}
     % \subfigure{\includegraphics[scale=0.22]{}}
    \subfigure{\includegraphics[scale=0.22]{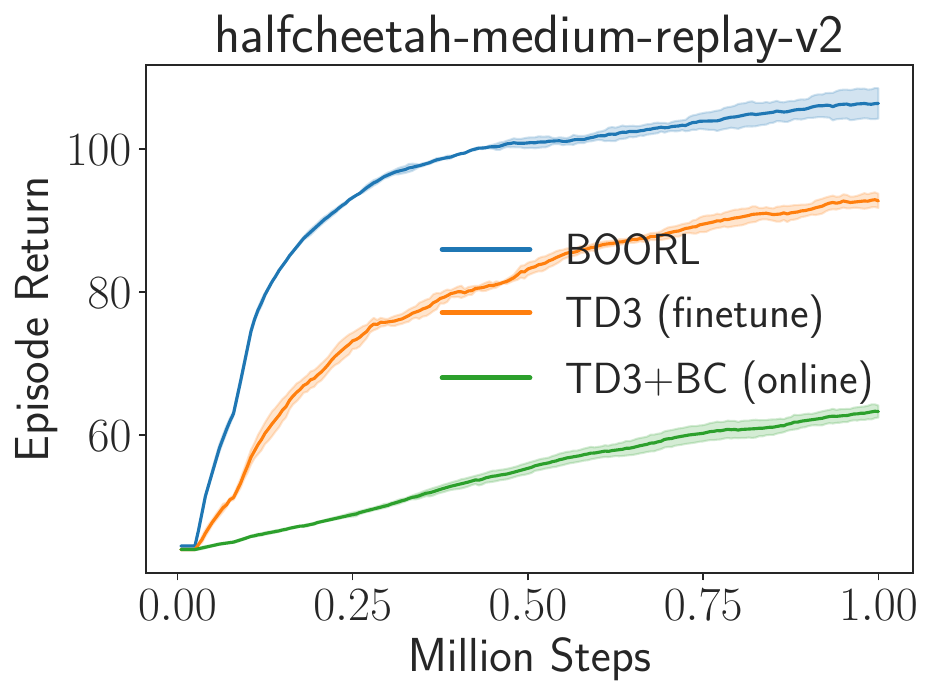}}\subfigure{\includegraphics[scale=0.22]{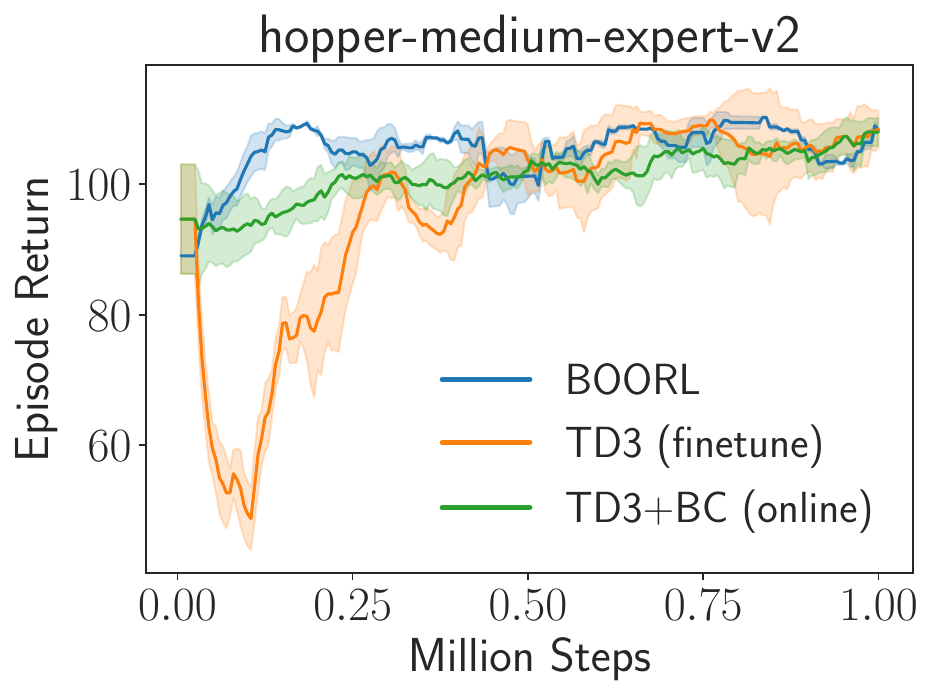}}
    \subfigure{\includegraphics[scale=0.22]{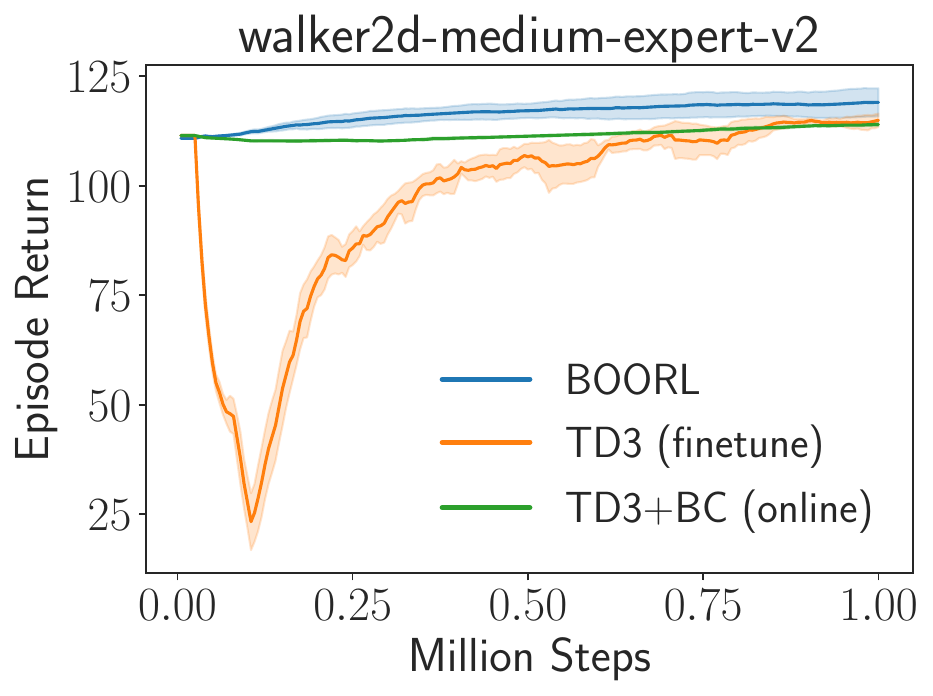}}
    %\subfigure{\includegraphics[scale=0.22]{}}
    \subfigure{\includegraphics[scale=0.22]{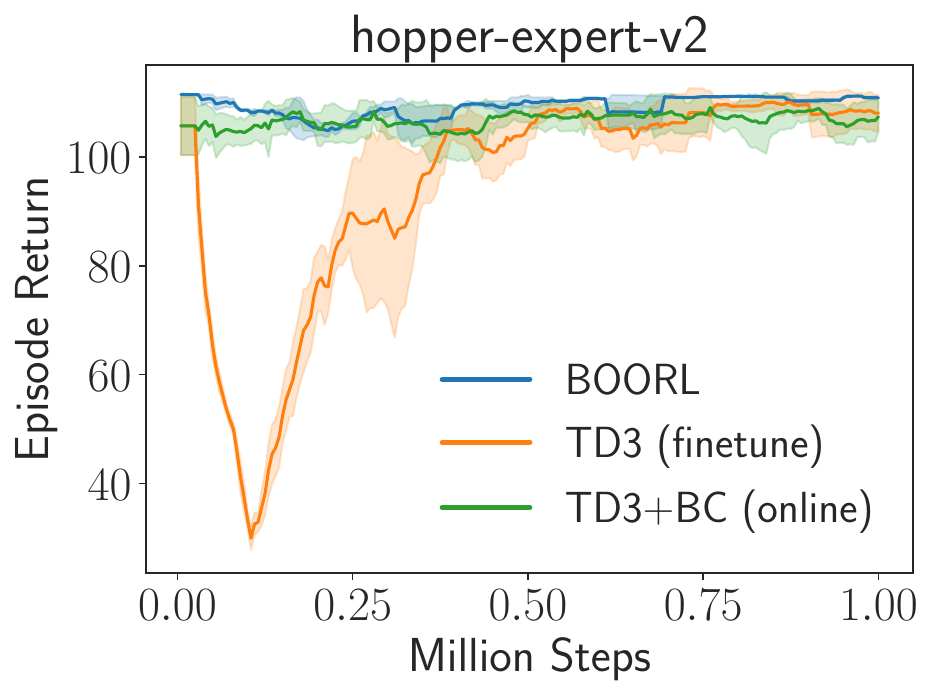}}\subfigure{\includegraphics[scale=0.22]{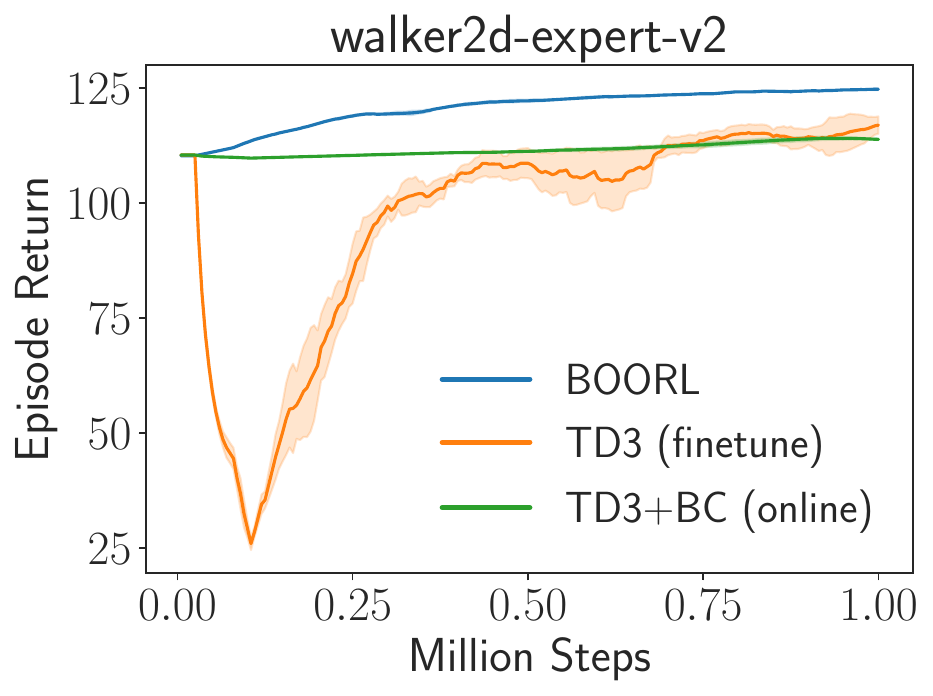}}\subfigure{\includegraphics[scale=0.22]{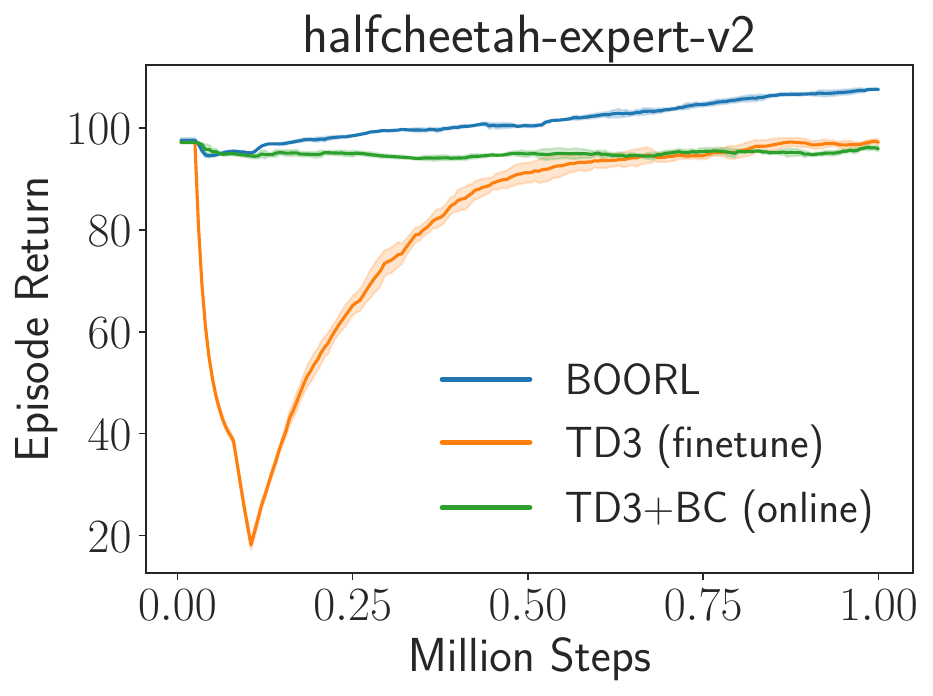}}
    \caption{Comparison between BOORL and baselines in the finetune phase.
    We adopt datasets of various quality for offline training and then load same pre-trained weight for online learning.
    We adopt normalized score metric averaged with five random seeds.    
    }
    \label{appendix fig: TD3 result}
\end{figure}

\clearpage
\section{Comparison with PEX}
\label{appendix: pex}

\begin{figure}[h]
    \centering\subfigure{\includegraphics[scale=0.22]{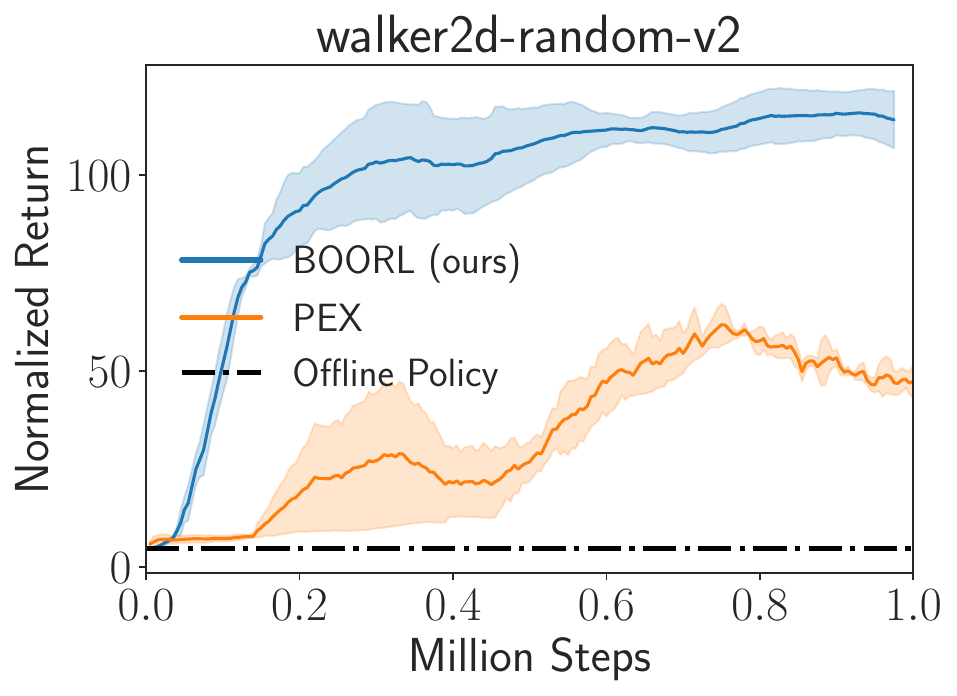}}\subfigure{\includegraphics[scale=0.22]{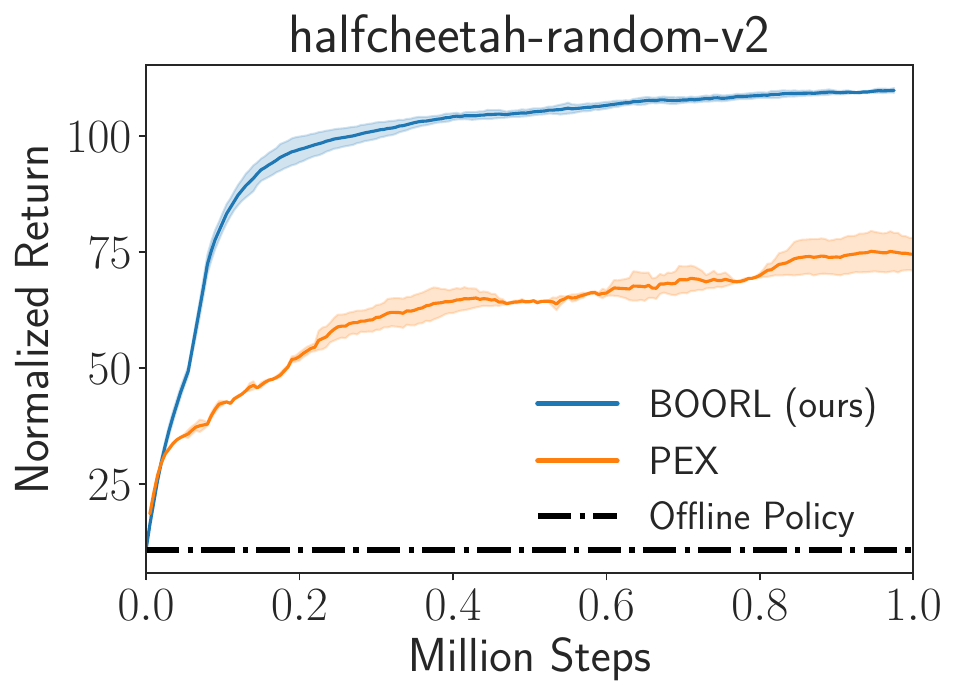}}\subfigure{\includegraphics[scale=0.22]{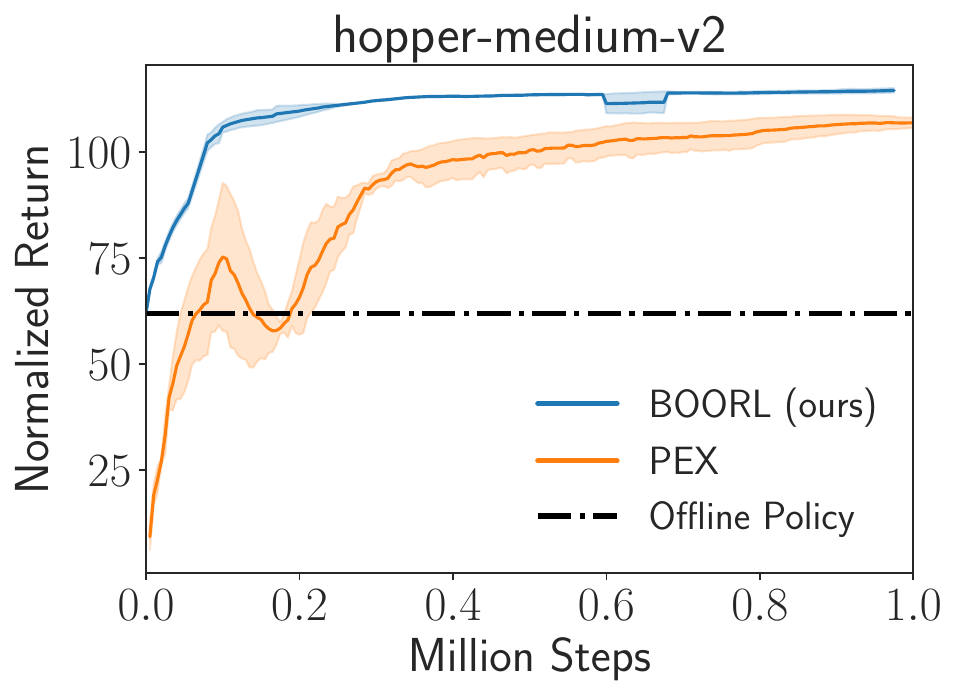}}\subfigure{\includegraphics[scale=0.22]{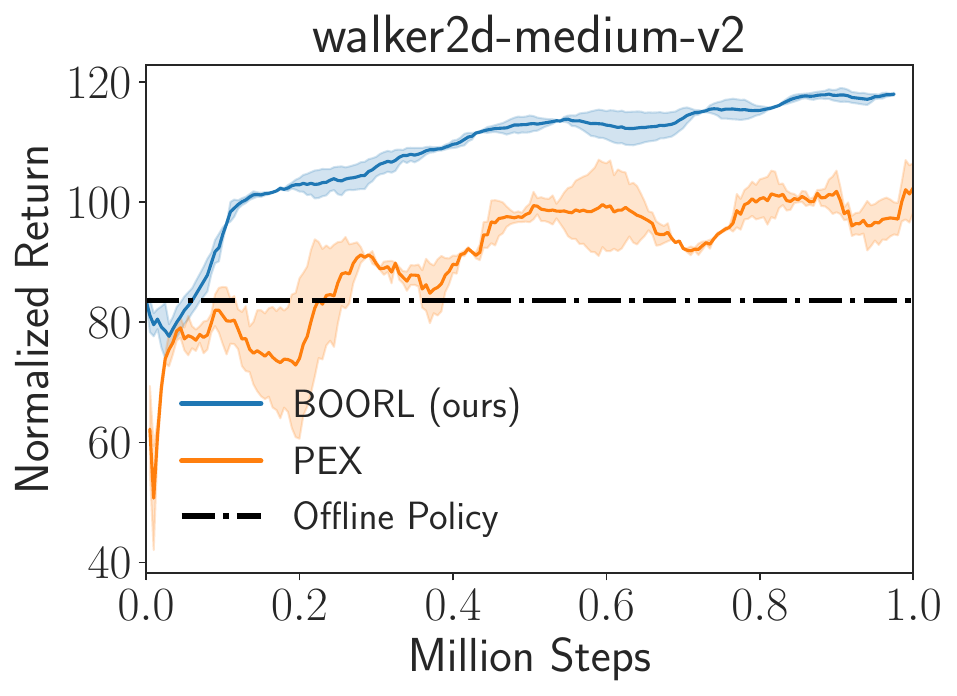}}
    \subfigure{\includegraphics[scale=0.22]{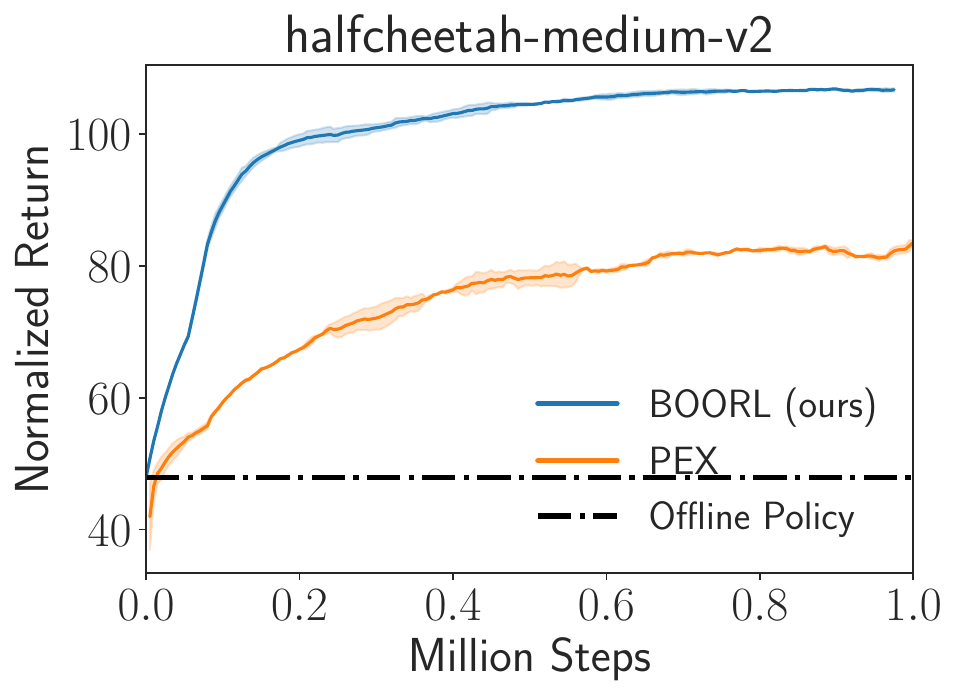}}\subfigure{\includegraphics[scale=0.22]{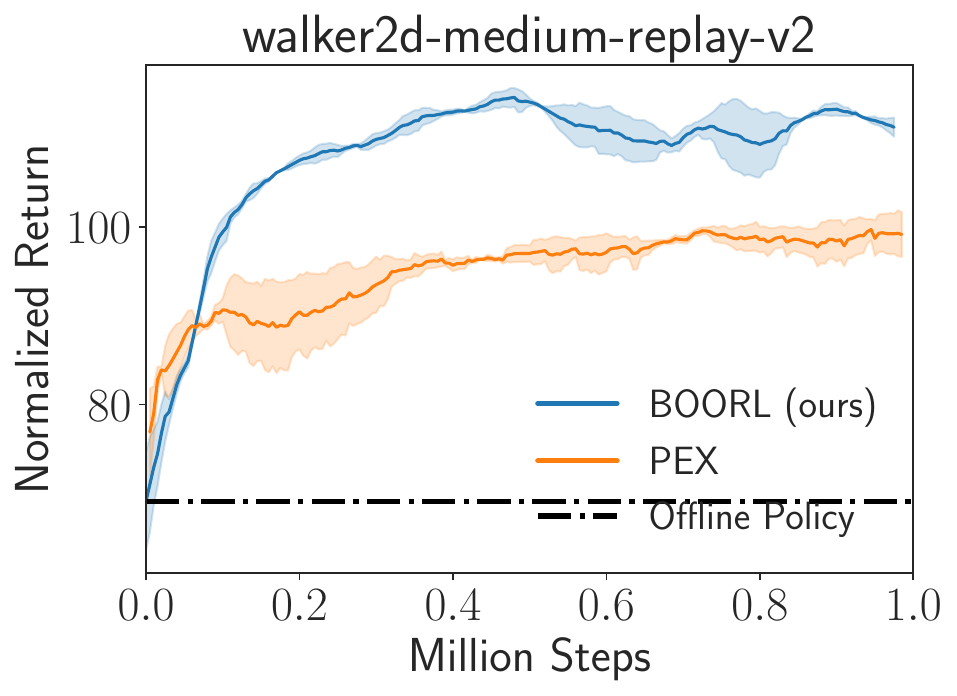}}\subfigure{\includegraphics[scale=0.22]{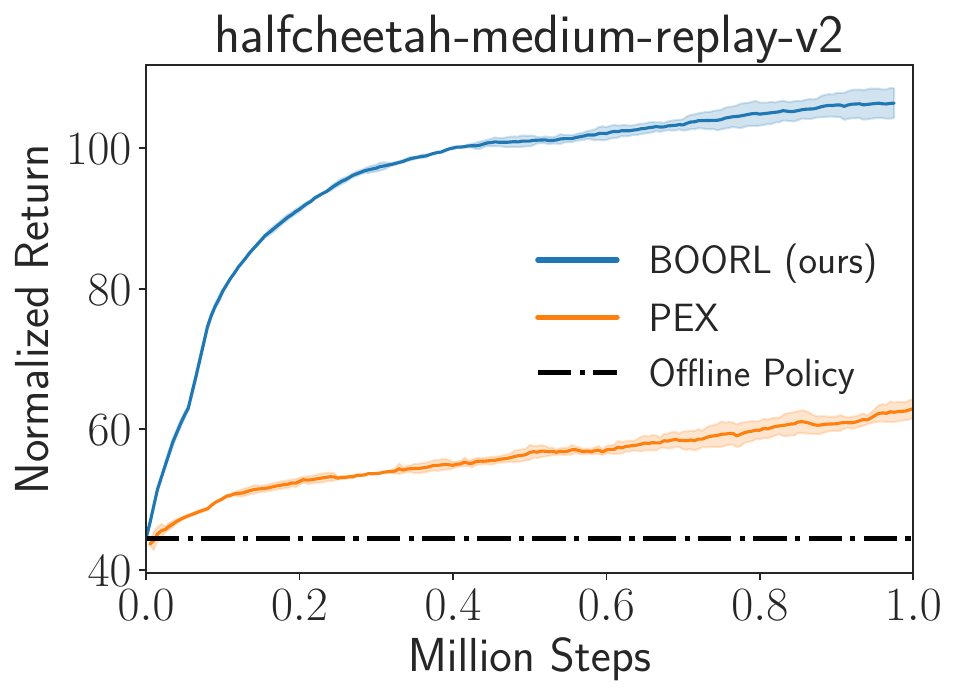}}\subfigure{\includegraphics[scale=0.22]{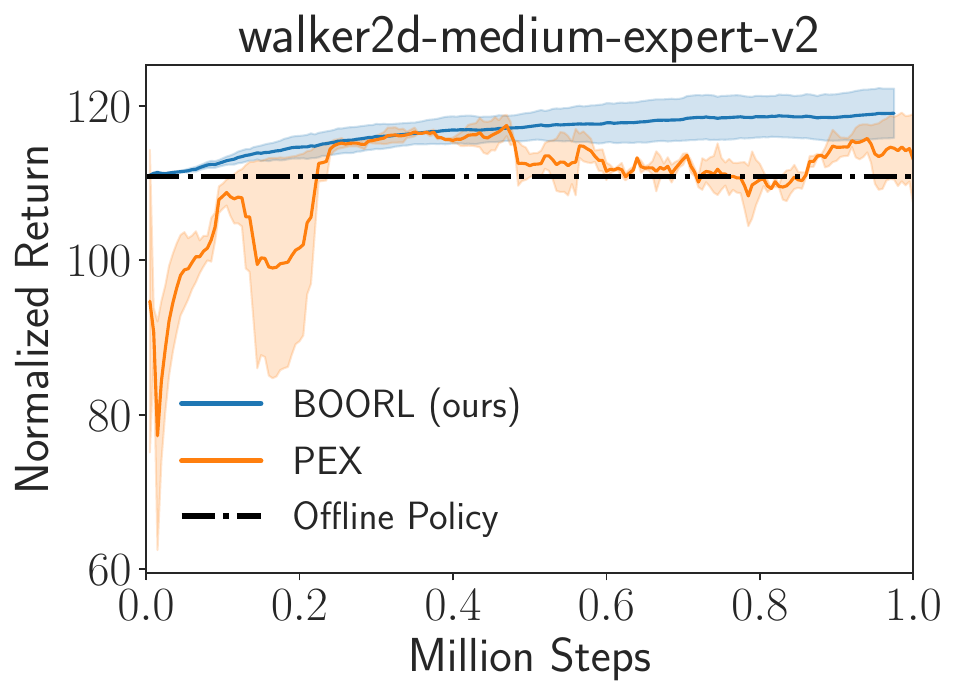}}
    \subfigure{\includegraphics[scale=0.22]{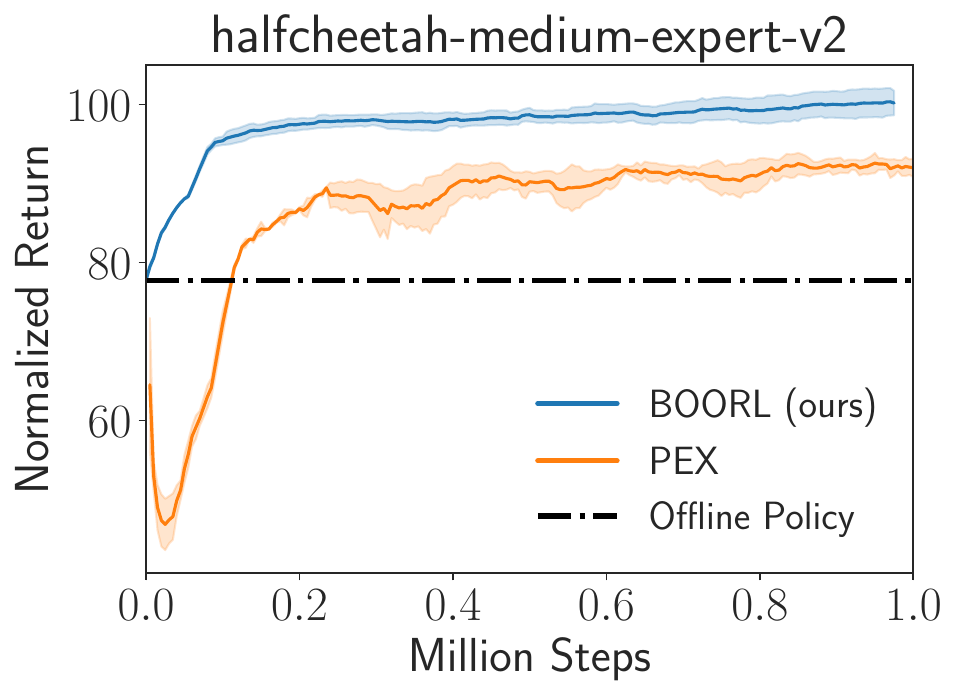}}\subfigure{\includegraphics[scale=0.22]{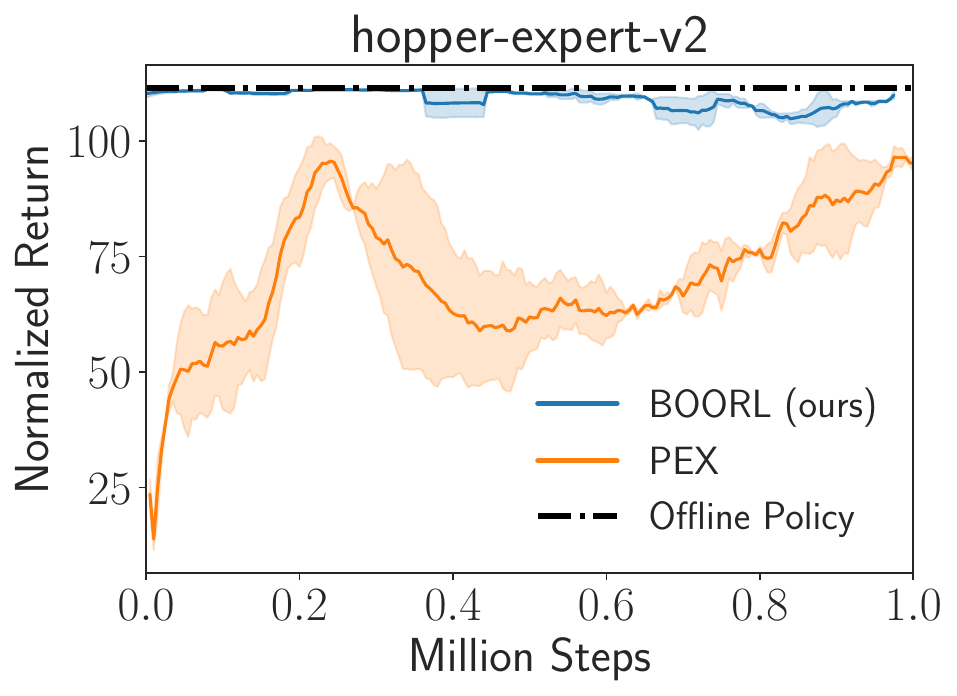}}\subfigure{\includegraphics[scale=0.22]{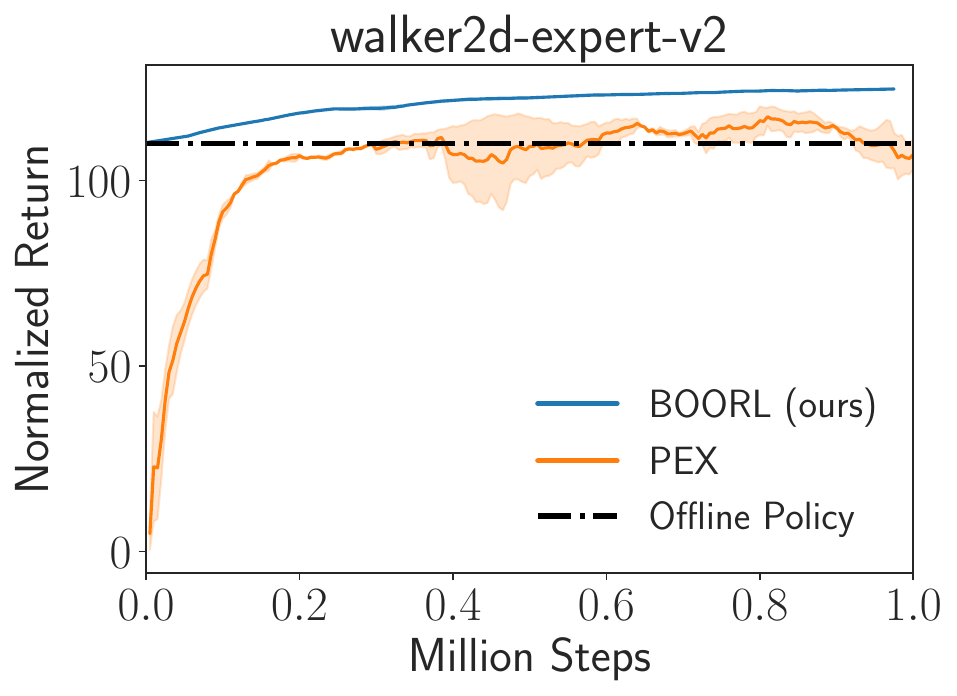}}\subfigure{\includegraphics[scale=0.22]{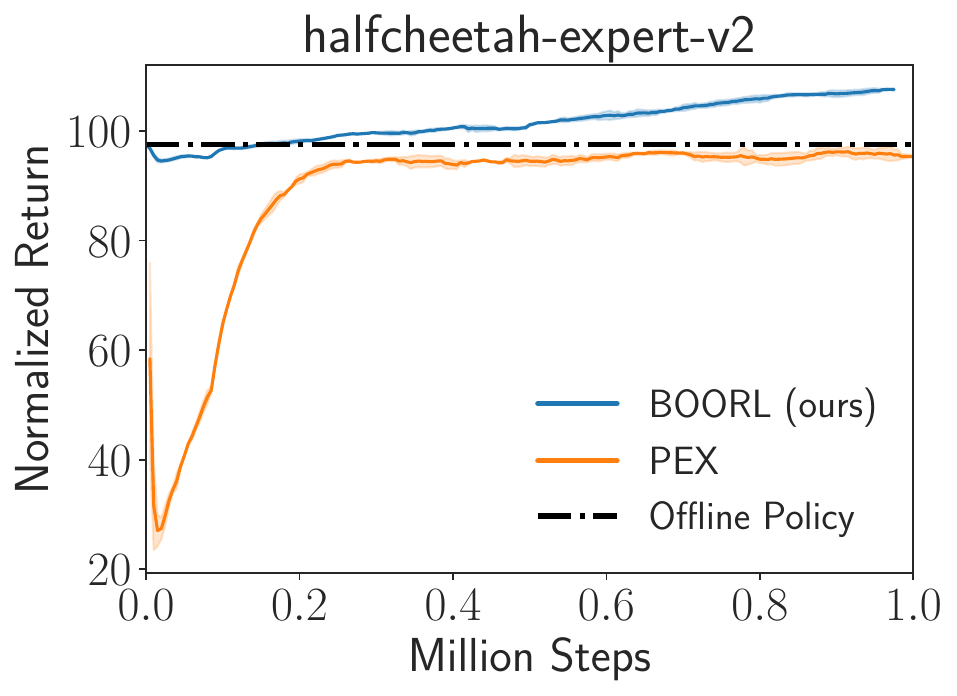}}
    \caption{Comparison between BOORL and PEX in the finetune phase.
    We adopt datasets of various quality for offline training and then load same pre-trained weight for online learning.
    We adopt normalized score metric averaged with five random seeds.    
    }
    \label{appendix fig: pex result}
\end{figure}

\clearpage
\section{Additional Experiments}
\label{appendix: additional exp}

 \subsection{Ablation for BOORL}
 We aim to understand the behavior of BOORL by performing ablation studies. (1) We store the offline and online data into the same buffer for uniform sampling, named BOORL~(Uniform Buffer). (2) We set the Ensemble Number to 1 to investigate the effect of the Thompson Sampling, named BOORL~(Ensemble Num=1).
% (3) Similarly, we set UTD ratio $G$ to 1 to investigate the effect of UTD, named BOORL~(UTD=1).
 The experimental results in Figure~\ref{fig: ablation module}  show that each module is essential to the superior performance of our algorithm. In the ablation studies, we use TD3+BC and TD3 as the backbone of offline and online algorithms.

 \begin{figure}[h]
     \centering\subfigure{\includegraphics[scale=0.22]{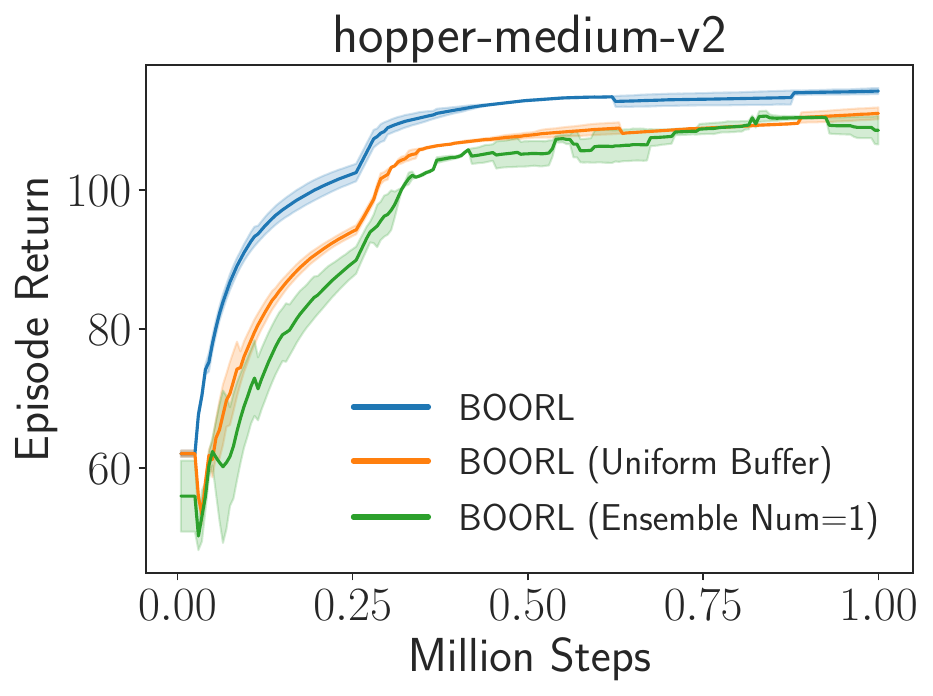}}\subfigure{\includegraphics[scale=0.22]{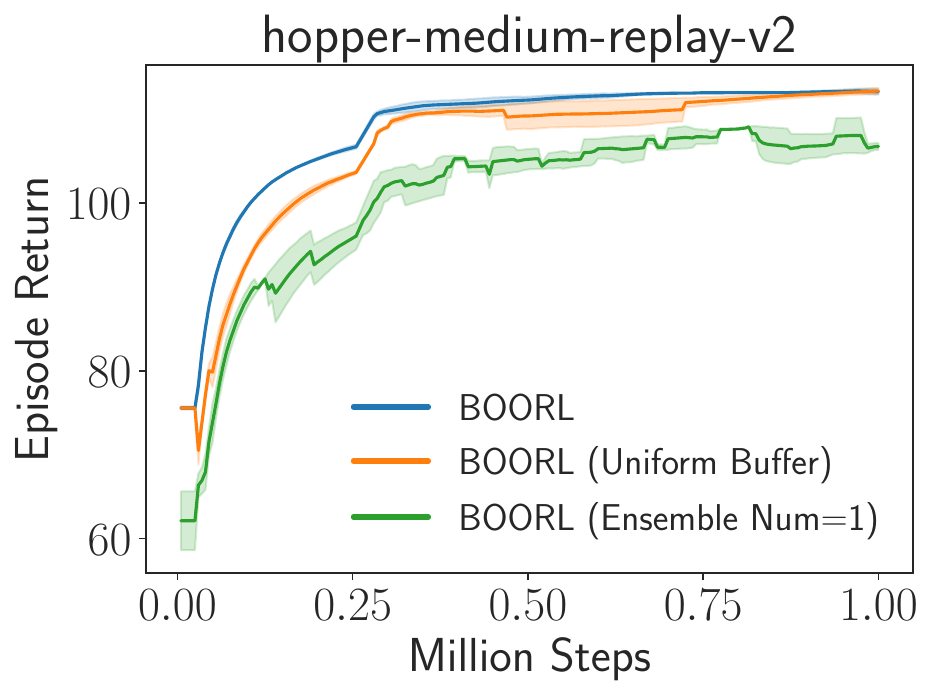}} \subfigure{\includegraphics[scale=0.22]{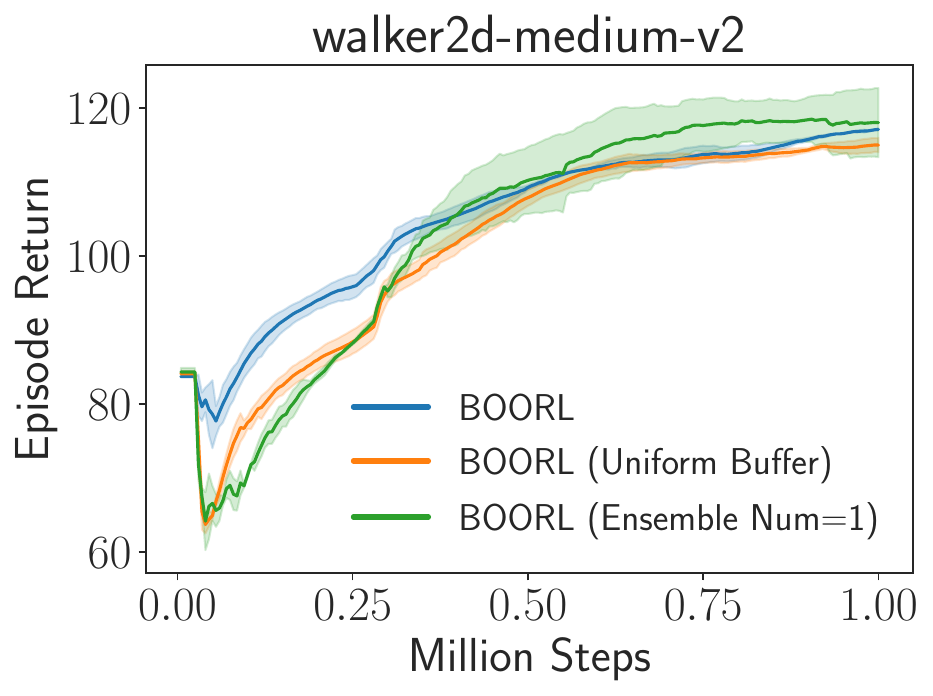}}\subfigure{\includegraphics[scale=0.22]{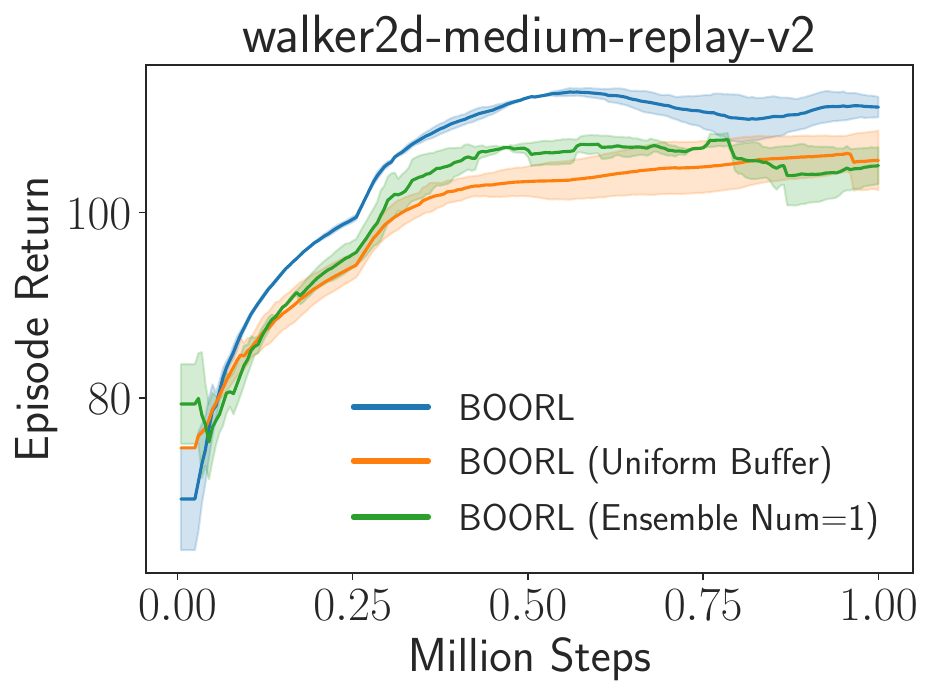}}
    
     \subfigure{\includegraphics[scale=0.21]{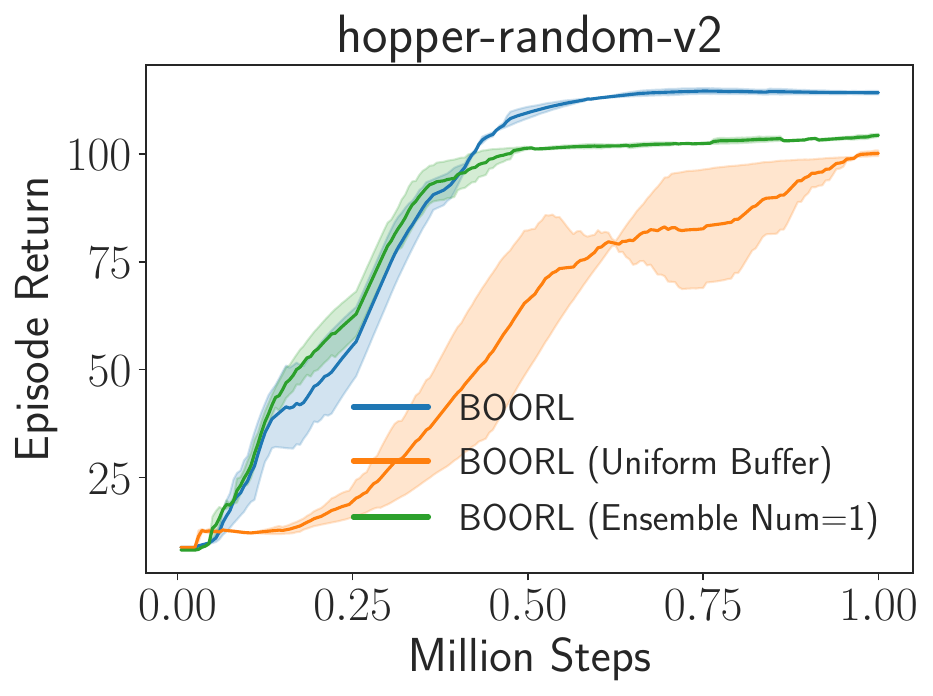}}
     \subfigure{\includegraphics[scale=0.21]{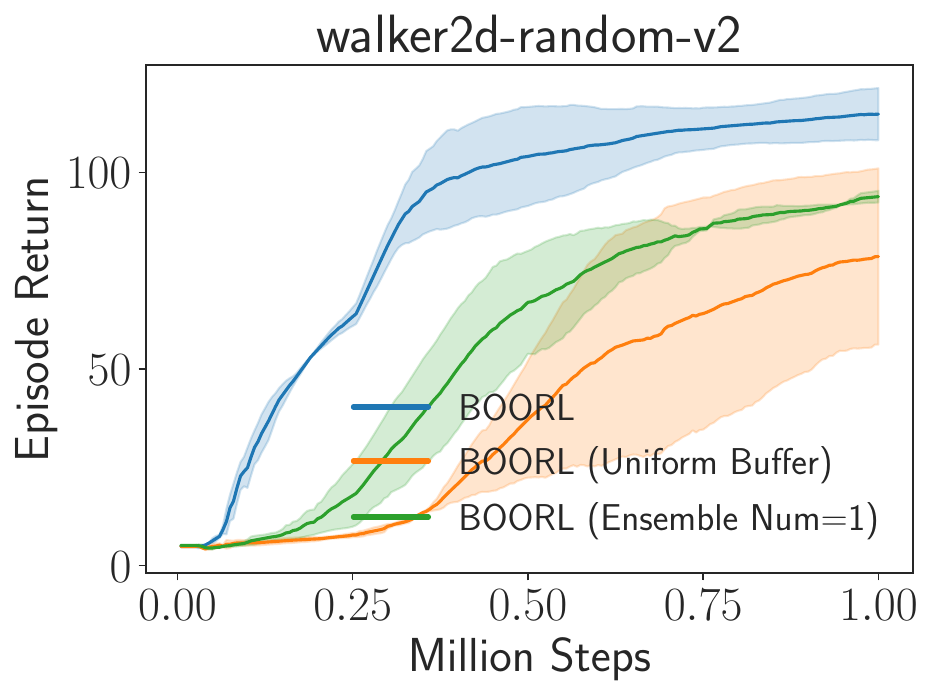}}
     \subfigure{\includegraphics[scale=0.22]{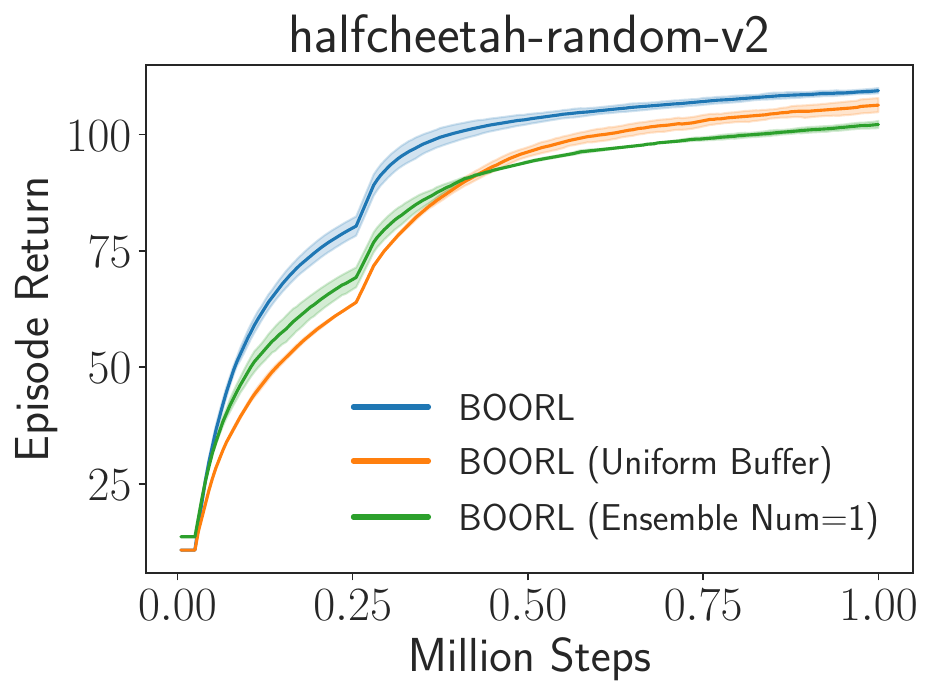}}
     \subfigure{\includegraphics[scale=0.22]{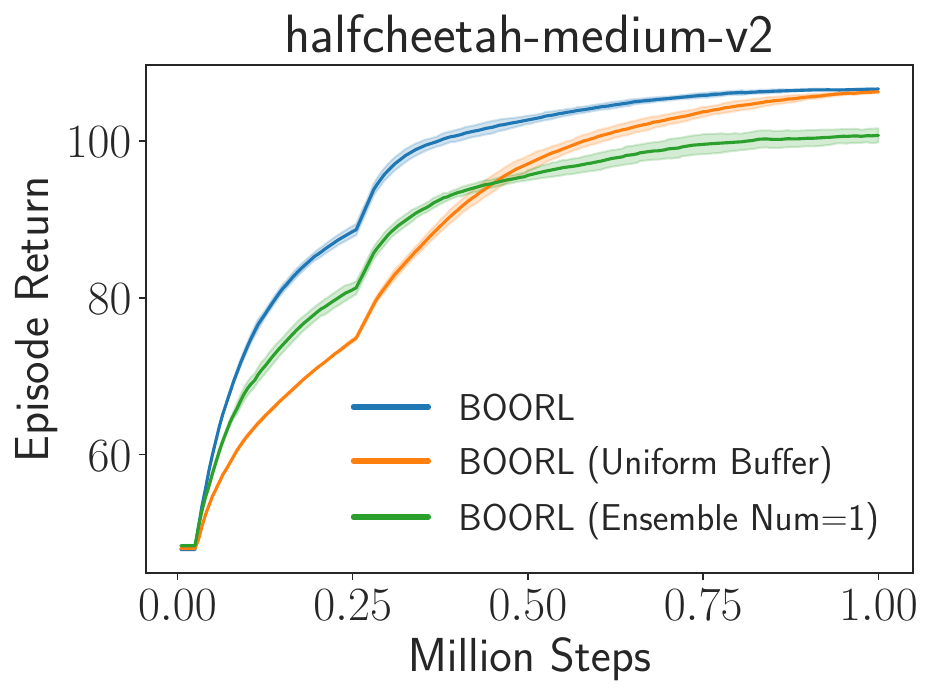}}
     \caption{Module ablation study of BOORL.}
     \label{fig: ablation module}
 \end{figure}

\subsection{Ablation for mask ratio}
In addition, we conduct ablation studies for mask ratio $p$.
The experimental results in Figure~\ref{fig: ablation mask ratio} show that the performance of our algorithm is robust to the changes of $p$.
Similar results are also found in~\cite{osband2016deep}.
Therefore, we select the uniform parameter $p=0.9$ in all experiments.

\begin{figure}[h]
    \centering
    \subfigure{\includegraphics[scale=0.22]{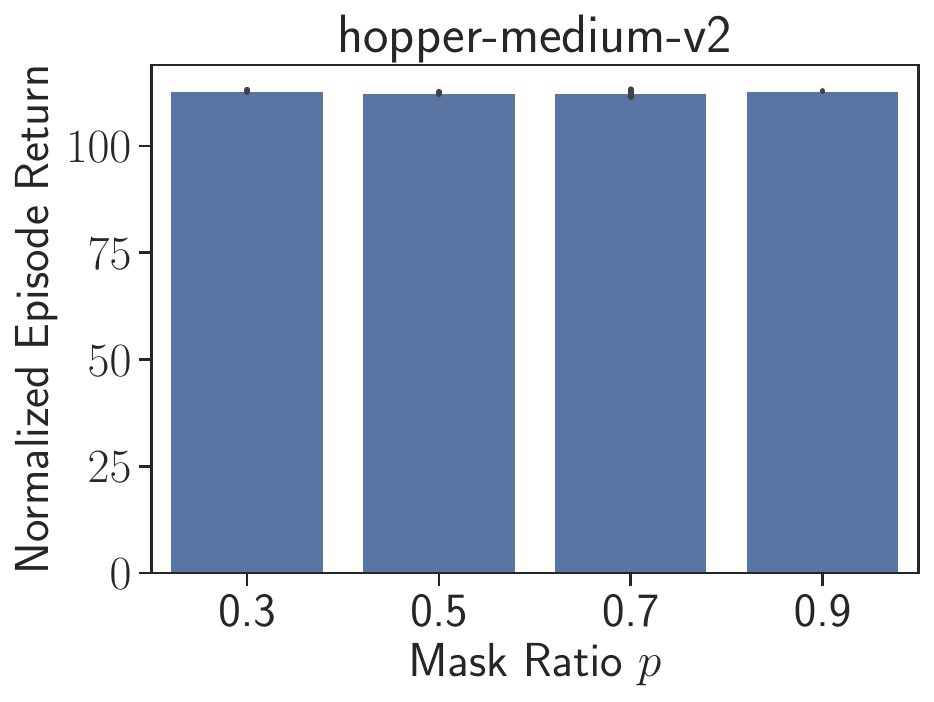}}\subfigure{\includegraphics[scale=0.22]{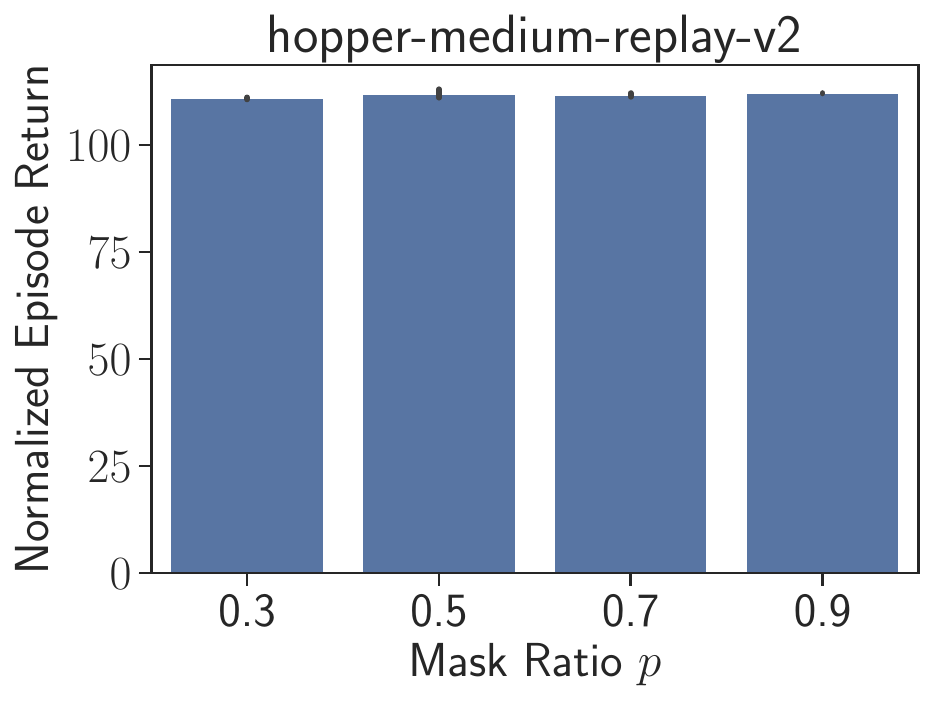}}\subfigure{\includegraphics[scale=0.22]{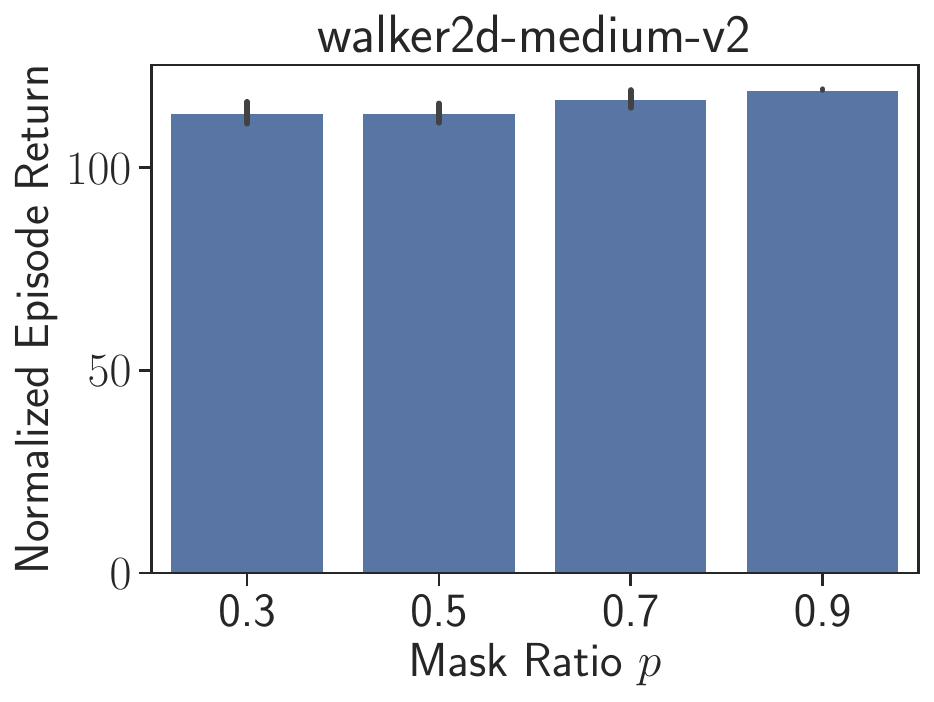}}\subfigure{\includegraphics[scale=0.22]{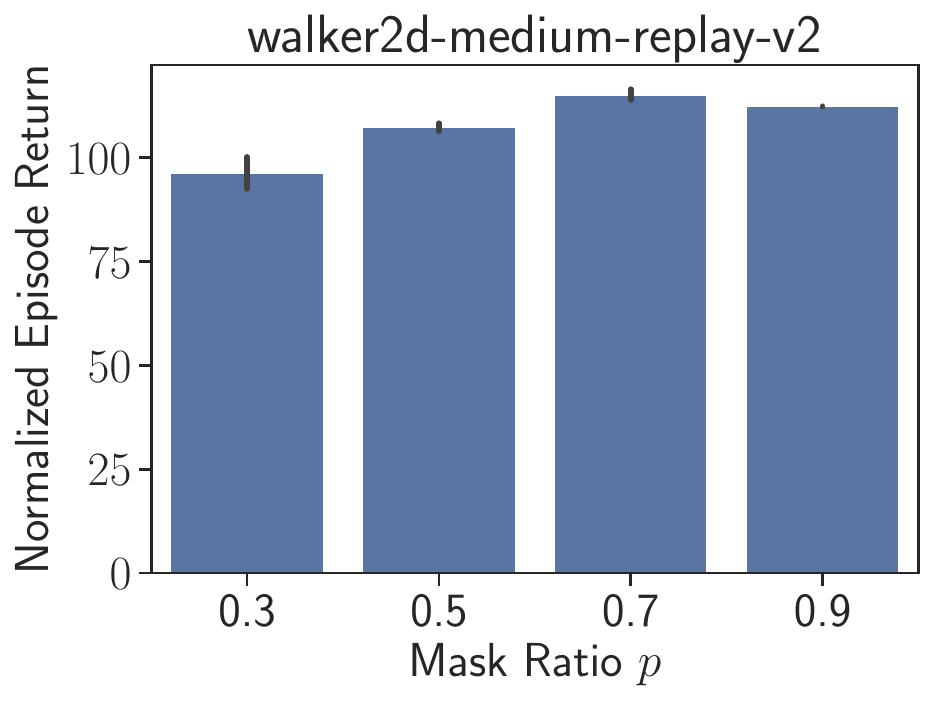}}
    
    \caption{The performance comparison between various mask ratios $p$.}
    \label{fig: ablation mask ratio}
\end{figure}

% \subsection{Ablation for Thompson Sampling}
% We delved deeper into the performance of Thompson Sampling in the following ways:

% \begin{enumerate}
%     \item We enforced a strict offline → online transition. In this experiment, we exclusively loaded the offline-trained policy and Q-network module, omitting the offline data during the online phase. We refer to this setup as "Thompson Sampling."
%     \item Furthermore, we examined the naive offline-to-online (TD3+BC → TD3) with the Hybrid RL framework to examine the effects of integrating offline data, termed "Hybrid RL."
% \end{enumerate}

% We executed these experiments across diverse tasks and evaluated performance at 200k online steps, also contrasting with BOORL's performance difference, denoted as $\delta$. Results in Table~\ref{tab: ablation ts} reveal that Thompson Sampling exhibits a marginal performance difference in the majority of tasks. Furthermore, the cumulative performance difference $\delta_{\rm sum}$ for Thompson Sampling is significantly lower than that of Hybrid RL. This indicates the inherent strength of BOORL lies predominantly in its use of Thompson Sampling. While mixing offline data can be beneficial, merely integrating naive offline-to-online methods with this data may not yield optimal results.

\subsection{Ablation for computational overhead}
We aim to provide a comparison between the computational overhead of using an ensemble versus not using an ensemble in our approach.
Specifically, we train our method with various ensemble sizes N on the same computational device (GeForce RTX 3090 GPU). 
The time required to complete 1M training is shown in Table~\ref{tab: ablation comput}. 
Since we adopt the multi-head structure and share part of the network, the computational overhead does not increase significantly as the number of N increases.

\begin{table}[H]
    \centering
    \begin{tabular}{c|c|c|c|c}
        \toprule
        Ensemble Size &  1 & 5 & 10 & 20 \\
        \midrule
        Computational Overhead & 2.5 h & 2.7 h & 2.9 h & 3.2 h \\
        \bottomrule
    \end{tabular}
    \caption{Ablation results for computational overhead.}
    \label{tab: ablation comput}
\end{table}

\subsection{Additional substantiation to the drop in performance}
Prior knowledge is encoded in the offline learned value function and the learned policy function. Mismanagement of prior knowledge is mainly influenced by two factors: a sudden change in the replay data distribution and a sudden change in the loss function.

we conducted additional experiments about the performance drop. We first gradually increase the proportion of the offline data in the training dataset from 15\% to 50\% to validate the effect of the sudden change in the replay data distribution. A smaller proportion indicates a more sudden change in the replay data distribution. We report the difference between the initial and minimum performance of the performance drop. Larger values represent a more severe performance drop. The experimental results in Table~\ref{tab: replay data} indicate that a smaller proportion of the offline data leads to a more severe performance drop. This suggests that sudden changes in replay data distribution exacerbate performance drop.

In addition, we use various update intervals of the value function network to show the effect of the sudden change of the value function. Specifically, we vary the update interval of the value function network from 1 to 16. A larger update interval indicates a slower change in the value function. The experimental results in Table~\ref{tab: value function} demonstrate that the slow update of the value function alleviates the performance drop. This suggests that sudden changes in the value function also exacerbate the performance drop.

\begin{table}[H]
    \centering
    \begin{tabular}{c|ccccc}
        \toprule
        Proportion & 15\% & 30\% &  35\% & 45\% & 50\% \\
        \midrule
        Hopper-medium & 20.8$\pm$3.3 & 14.1$\pm$2.7 & 13.8$\pm$2.2 & 7.7$\pm$1.6 & 4.9$\pm$1.3\\
        Walker2d-medium & 40.1$\pm$2.9 & 32.7$\pm$3.1 & 28.7$\pm$3.8 & 26.9$\pm$3.5 & 21.2$\pm$3.7\\
        Halfcheetah-medium & 0.0$\pm$0.0 & 0.0$\pm$0.0 & 0.0$\pm$0.0 & 0.0$\pm$0.0 & 0.0$\pm$0.0\\
        Hopper-medium-replay & 23.7$\pm$1.6 & 15.5$\pm$2.2 & 14.8$\pm$2.5 & 13.9$\pm$1.3 & 13.2$\pm$1.1\\
        Walker2d-medium-replay & 25.5$\pm$3.7 & 20.3$\pm$2.3 & 19.0$\pm$3.5 & 18.2$\pm$2.7 & 16.7$\pm$3.6\\
        Halfcheetah-medium-replay & 0.0$\pm$0.0 & 0.0$\pm$0.0 & 0.0$\pm$0.0 & 0.0$\pm$0.0 & 0.0$\pm$0.0\\
        \bottomrule
    \end{tabular}
    \caption{The effect of the replay data distribution on the performance drop. We adopt the normalized score metric, averaging numbers across five seeds.}
    \label{tab: replay data}
\end{table}

\begin{table}[H]
    \centering
    \begin{tabular}{c|ccccc}
        \toprule
        Update Interval & 1 & 2 & 4 & 8 & 16\\
        \midrule
        Hopper-medium & 4.9$\pm$1.3 & 4.5$\pm$1.0 & 3.8$\pm$0.6 & 3.7$\pm$0.7 & 3.5$\pm$0.4\\
        Walker2d-medium & 21.2$\pm$3.7 & 18.2$\pm$2.5 & 17.9$\pm$3.6 & 16.2$\pm$2.7 & 15.4$\pm$3.2\\
        Halfcheetah-medium & 0.0$\pm$0.0 & 0.0$\pm$0.0 & 0.0$\pm$0.0 & 0.0$\pm$0.0 & 0.0$\pm$0.0\\
        Hopper-medium-replay & 13.2$\pm$1.1 & 12.6$\pm$2.4 & 8.7$\pm$1.6 & 0.0$\pm$0.0 & 0.0$\pm$0.0\\
        Walker2d-medium-replay & 16.7$\pm$3.6 & 16.2$\pm$2.5 & 12.8$\pm$3.3 & 10.1$\pm$2.0 & 6.9$\pm$1.6\\
        Halfcheetah-medium-replay & 0.0$\pm$0.0 & 0.0$\pm$0.0 & 0.0$\pm$0.0 & 0.0$\pm$0.0 & 0.0$\pm$0.0\\
        \bottomrule
    \end{tabular}
    \caption{The effect of the value function on the performance drop. We adopt the normalized score metric, averaging numbers across five seeds.}
    \label{tab: value function}
\end{table}

\clearpage
\section{Experimental Details}
\label{appendix: exp details}

\paragraph{Experimental Setting.}
For TD3+BC~(online) and TD3~(finetune), we first load the offline dataset into the online replay buffer and add the online collected data into the buffer.
Then, we uniformly sample data to train from the online buffer.

\paragraph{Hyper-parameters.}
We adopt the TD3+BC and TD3 as the backbone of offline and online algorithms.
Therefore, we build BOORL based on the code of the TD3+BC.
We outline the hyper-parameters used by BOORL in Table~\ref{tab: parameters}.

\begin{table}[h]
    \centering
    \begin{tabular}{ll}
    \toprule
      Hyperparameter   & Value \\
      \midrule
      \hspace{0.3cm} Optimizer & Adam \\
      \hspace{0.3cm} Critic learning rate & 3e-4 \\
      \hspace{0.3cm} Actor learning rate & 3e-4 \\
      \hspace{0.3cm} Mini-batch size & 256 \\
      \hspace{0.3cm} Discount factor & 0.99 \\
      \hspace{0.3cm} Target update rate & 5e-3 \\
      \hspace{0.3cm} Policy noise & 0.2 \\
      \hspace{0.3cm} Policy noise clipping & (-0.5, 0.5) \\
      \hspace{0.3cm} TD3+BC parameter $\alpha$ & 2.5\\
      \hspace{0.3cm} IQL parameter & 0.9\\
      \midrule
      Architecture & Value \\
      \midrule
      \hspace{0.3cm} Critic hidden dim & 256 \\
      \hspace{0.3cm} Critic hidden layers & 2 \\
      \hspace{0.3cm} Critic activation function & ReLU \\
      \hspace{0.3cm} Actor hidden dim & 256 \\
      \hspace{0.3cm} Actor hidden layers & 2 \\
      \hspace{0.3cm} Actor activation function & ReLU \\
      \midrule
      BOORL Parameters & Value \\
      \midrule
      \hspace{0.3cm} Mask ratio $p$ & 0.9 \\
      \hspace{0.3cm} Ensemble Number & 5 \\
      \hspace{0.3cm} UTD ratio $G$ & 5\\
      \bottomrule
    \end{tabular}
    \caption{Hyper-parameters sheet of BOORL.}
    \label{tab: parameters}
\end{table}

\paragraph{Baselines Implementation.}
We adopt the author-provided implementations from GitHub for TD3~\footnote{\url{https://github.com/sfujim/TD3}}, TD3+BC~\footnote{\url{ https://github.com/sfujim/TD3_BC}}, CQL~\footnote{\url{https://github.com/aviralkumar2907/CQL}}, IQL~\footnote{\url{https://github.com/ikostrikov/implicit_q_learning}}, Off2On~\footnote{\url{https://github.com/shlee94/Off2OnRL}}, ODT~\footnote{\url{https://github.com/facebookresearch/online-dt}}, PEX~\footnote{\url{https://github.com/Haichao-Zhang/PEX}} and Cal-QL~\footnote{\url{https://github.com/nakamotoo/Cal-QL}}.
We use the official implementation in the author-provided code for TD3+BC~(online) and IQL~(online).
All experiments are conducted on the same experimental setup, a single GeForce RTX 3090 GPU and an Intel Core i7-6700k CPU at 4.00GHz.
%%%%%%%%%%%%%%%%%%%%%%%%%%%%%%%%%%%%%%%%%%%%%%%%%%%%%%%%%%%%%%%%%%%%%%%%%%%%%%%
%%%%%%%%%%%%%%%%%%%%%%%%%%%%%%%%%%%%%%%%%%%%%%%%%%%%%%%%%%%%%%%%%%%%%%%%%%%%%%%

\end{document}